\crefname{assumption}{Assumption}{Assumptions}
\pgfplotsset{compat=1.17}
\title{Adversarial Consistency and the Uniqueness of the Adversarial Bayes Classifier}
\author[1]{Natalie S. Frank}
\affil[1]{Mathematics, Courant Institute, New York, USA}
\affil[ ]{Email: \textit{ \{nf1066@nyu.edu\}}}
\definecolor{blue}{rgb}{0.0, 0.0, 1.0}
\definecolor{jgGreen}{rgb}{0.0, 0.5, 0.0}
\definecolor{pink}{rgb}{0.5, 0.0, 0.25}
\newcommand{\ignore}[1]{}
\definecolor{darkorange}{RGB}{255, 140, 0}
\newcommand{\prm}{R_\phi^\e}
\newcommand{\dl}{\bar R_\phi}
\newcommand{\cprm}{R^\e}
\newcommand{\cdl}{\bar R}
\newcommand{\cB}{{\mathcal B}}
\newcommand{\cH}{\mathcal{H}}
\newcommand{\bx}{{\mathbf x}}
\DeclareMathOperator*{\argmin}{argmin}
\DeclareMathOperator*{\esssup}{ess\,sup}
\DeclareMathOperator{\sgn}{sign}
\DeclareMathOperator{\supp}{supp}
\def\Rset{\mathbb{R}}
\newcommand{\PP}{{\mathbb P}}
\newcommand{\QQ}{{\mathbb Q}}
\newcommand{\one}{{\mathbf{1}}}
\newcommand{\Wball}[1]{{\cB^\infty_{#1}}}
\newcommand{\ov}{\overline}
\newcommand{\td}{\tilde}
\newcommand{\e}{\epsilon}
\newtheorem{theorem}{Theorem}
\newtheorem{definition}{Definition}
\newtheorem{lemma}{Lemma}
\newtheorem{proposition}{Proposition}
\newtheorem{assumption}{Assumption}
\begin{document}

\maketitle

\begin{abstract}
Minimizing an adversarial surrogate risk is a common technique for learning robust classifiers. Prior work showed that convex surrogate losses are not statistically consistent in the adversarial context--- or in  other words, a minimizing sequence of the adversarial surrogate risk will not necessarily minimize the adversarial classification error. We connect the consistency of adversarial surrogate losses to properties of minimizers to the adversarial classification risk, known as \emph{adversarial Bayes classifiers}. Specifically, under reasonable distributional assumptions, a convex surrogate loss is statistically consistent for adversarial learning iff the adversarial Bayes classifier satisfies a certain notion of uniqueness.  
\end{abstract}
\textbf{Keywords:} Statistics theory, Optimization\\
\textbf{2010 MSC Codes:} \textit{Primary-}62A99; \textit{Secondary-}65K99;\\
\textbf{Competing Interests:} Author declares no competing interests\\

\section{Introduction}\label{sec:intro}
Robustness is a core concern in machine learning, as models are deployed in classification tasks such as facial recognition \citep{Xu2022face}, medical imaging \citep{PaschaliConjetiNavarroNavab2018medical}, and identifying traffic signs in self-driving cars \citep{DengZheng2020autonomousdriving}. Deep learning models exhibit a concerning security risk--- small perturbations imperceptible to the human eye can cause a neural net to misclassify an image \citep{biggio2013evasion,szegedy2013intriguing}. The machine learning literature has proposed many defenses, but many of these techniques remain poorly understood. This paper analyzes the statistical consistency of a popular defense method that involves minimizing an adversarial surrogate risk.

The central goal in a classification task is minimizing the proportion of mislabeled data-points--- also known as the \emph{classification risk}. Minimizers to the classification risk are easy to compute analytically, and are known as \emph{Bayes classifiers}. In the adversarial setting, each point is perturbed by a malicious adversary before the classifier makes a prediciton. The proportion of mislabeled data under such an attack is called the \emph{adversarial classification risk}, and minimizers to this risk are called \emph{adversarial Bayes classifiers}. Unlike the standard classification setting, computing minimizers to the adversarial classification risk is a non-trivial task \citep{BhagojiCullinaMittalji2019lower,PydiJog2020}. Further studies \citep{Frank2024uniqueness,trillosMurray2022,trillos2023existence,PydiJog2021,GneccoherediaPydi2023role} investigate additional properties of these minimizers, and \citet{Frank2024uniqueness} describes a notion of uniqueness for adversarial Bayes classifiers. The main result in this paper will connect this notion of uniqueness the statistical consistency of a popular defense method.

The empirical adversarial classification error is a discrete notion and minimizing this quantity is computationally intractable. Instead, typical machine learning algorithms minimize a \emph{surrogate risk} in place of the classification error. In the robust setting, the adversarial training algorithm uses a surrogate risk that computes the supremum of loss over the adversary's possible attacks, which we refer to as \emph{adversarial surrogate risks}. However, one must verify that minimizing this adversarial surrogate will also minimize the classification risk. A loss function is \emph{adversarially consistent} for a particular data distribution if every minimizing sequence of the associated adversarial surrogate risk also minimizes the adversarial classification risk. A loss is simply called \emph{adversarially consistent} if it is adversarially consistent for all possible data distributions.  Surprisingly, \citet{MeunierEttedguietal22} show that no convex surrogate is adversarially consistent, in contrast to the standard classification setting where most convex losses are statistically consistent \citep{BartlettJordanMcAuliffe2006,Lin2004,Steinwart2007,ZhangAgarwal,zhang04}. 

\paragraph{Our Contributions:} We relate the statistical consistency of losses in the adversarial setting to the uniqueness of the adversarial Bayes classifier. Specifically, under reasonable assumptions, 
a convex loss is adversarially consistent for a specific data distribution iff the adversarial Bayes classifier is unique.

Prior work \citep{Frank2024uniqueness} further demonstrates several distributions for which the adversarial Bayes classifier is unique, and thus a typical convex loss would be consistent. Understanding general conditions under which uniqueness occurs is an open question.

\paragraph{Paper Outline:} \Cref{sec:related_works_uc} discusses related works and \cref{sec:background} presents the problem background. \Cref{sec:main_results} states our main theorem and presents some examples. Subsequently, \cref{sec:preliminary_results} discusses intermediate results necessary for proving our main theorem. Next, our consistency results are proved in \cref{sec:Uniquness_implies_consistency,sec:consistency_implies_uniqueness}. \Cref{app:adv_bayes_construct,app:uniqueness_equivalences_1} present deferred proofs from \cref{sec:preliminary_results} while \cref{app:consistent_losses} presents deferred proofs on surrogate risks from \cref{sec:preliminary_results,sec:uniqueness_implies_consistency_proof}. Finally, \cref{app:consistency_reverse} presents deferred proofs from \cref{sec:consistency_implies_uniqueness}. 
\section{Related Works}\label{sec:related_works_uc}

Our results are inspired by prior work which showed that no convex loss is adversarially consistent \citep{AwasthiMaoMohriZhong22Hconsistencybinary,MeunierEttedguietal22} yet a wide class of adversarial losses is adversarially consistent \citep{FrankNilesWeed23consistency}. These consistency results rely on the theory of surrogate losses, studied by \citet{BartlettJordanMcAuliffe2006,Lin2004} in the standard classification setting; and by \citet{FrankNilesWeed23minimax,LiTelgarsky2023achieving} in the adversarial setting. Furthermore, \citep{AwasthiMaoMohriZhong21,BaoScottSugiyama2021calibrated,Steinwart2007} study a property of related to consistency called \emph{calibration}, which \citep{MeunierEttedguietal22} relate to consistency. Complimenting this analysis, another line of research studies $\cH$-consistency, which refines the concept of consistency to specific function classes \citep{AwasthiMaoMohriZhong22Hconsistencybinary,LongServedioH-consistency}. Our proof combines results on losses with minimax theorems for various adversarial risks, as studied by \citep{FrankNilesWeed23consistency,FrankNilesWeed23minimax,PydiJog2021,TrillosJacobsKim22}.

Furthermore, our result leverages recent results on the adversarial Bayes classifier, which are extensively studied by  \citep{AwasthiFrankMohri2021,BhagojiCullinaMittalji2019lower,BungertGarciaMurray2021,FrankNilesWeed23consistency,PydiJog2020,PydiJog2021}. Specifically, \citep{AwasthiFrankMohri2021,BungertGarciaMurray2021,BhagojiCullinaMittalji2019lower} prove the existence of the adversarial Bayes classifier, while \citet{trillosMurray2022} derive necessary conditions that describe the boundary of the adversarial Bayes classifier. \citet{Frank2024uniqueness} define the notion of uniqueness up to degeneracy and prove that in one dimension, under reasonable distributional assumptions, every adversarial Bayes classifier is equivalent up to degeneracy to an adversarial Bayes classifier which satisfies the necessary conditions of \citep{trillosMurray2022}. Finally, \citep{BhagojiCullinaMittalji2019lower,PydiJog2020} also calculate the adversarial Bayes classifier for distributions in dimensions higher than one by finding an optimal coupling, but whether this method can calculate the equivalence classes under equivalence up to degeneracy remains an open question. 
\section{Notation and Background}\label{sec:background}

\subsection{Surrogate Risks}\label{sec:background_surrogate_risks}  
        This paper investigates binary classification on $\Rset^d$ with labels $\{-1,+1\}$. Class $-1$ is distributed according to a measure $\PP_0$ while class $+1$ is distributed according to measure $\PP_1$. A \emph{classifier} is a Borel set $A$ and the \emph{classification risk} of a set $A$ is the expected proportion of errors when label $+1$ is predicted on $A$ and label $-1$ is predicted on $A^C$:
        \[R(A)=\int \one_{A^C} d\PP_1+\int \one_{A}d\PP_0.\]
        A minimizer to $R$ is called a \emph{Bayes classifier}.

        However, minimizing the empirical classification risk is a computationally intractable problem \citep{BenDavidEironLong2003}. A common approach is to instead learn a function $f$ and then threshold at zero to obtain the classifier $A=\{\bx: f(\bx)>0\}$. We define the classification risk of a function $f$ by
        \begin{equation}\label{eq:classificaiton_risk_f}
            R(f)=R(\{f>0\})=\int \one_{f\leq 0}d\PP_1+\int \one_{-f<0}d\PP_0
        \end{equation} 
        
        \begin{figure}
            \centering
         \begin{tikzpicture}
    \begin{axis}[
        title = {Some Common Loss Functions},
        axis lines = middle,
        xlabel = {$\alpha$},
        ylabel = {Loss},
        ymin = -0.5, ymax = 5,
        xmin = -3, xmax = 3,
        domain = -3:3,
        samples = 100,
         legend style = {at={(1.05,0.5)}, anchor=west}, 
        grid = both,
        minor grid style = {gray!20},
        major grid style = {gray!50},
    ]

    \addplot[black, thick, samples=100, domain=-3:0, forget plot] {1};
    \addplot[black, thick, samples=100, domain=0:3] {0};
    \addlegendentry{Indicator}

    \addplot[color={rgb,255:red,0; green,0; blue,139},thick, dotted, samples=100, domain=-3:3] {max(0, 1 - x)};
    \addlegendentry{Hinge Loss}

    \addplot[orange, thick, dashdotdotted, samples=100, domain=-3:3] {1/(1 + exp(x))};
    \addlegendentry{Sigmoid loss}

    \addplot[green, thick, samples=100, domain=-3:3] {exp(-x)};
    \addlegendentry{Exponential loss}

    \addplot[purple, dashed, thick, samples=100, domain=-3:1, forget plot] {
        (x < 1) * (  (x - 1)^2) 
    };
    \addplot[purple, thick, samples=100, domain=1:3]{0};
    \addlegendentry{Squared hinge loss}

    \addplot[loosely dotted, thick, samples=100, domain=-3:3, 
    color={rgb,255:red,0; green,235; blue,235}
    ] {min(max(0, 1-3*x),1)};
    \addlegendentry{$\rho$-margin loss }

    \end{axis}
\end{tikzpicture}

    \caption{Several common loss functions for classification along with the indicator $\one_{\alpha\leq 0}$.}
            \label{fig:losses}
        \end{figure}

        In order to learn $f$, machine learning algorithms typically minimize a better-behaved alternative to the classification risk called a \emph{surrogate risk}. To obtain this risk, we replace the indicator functions in \cref{eq:classificaiton_risk_f} with the loss $\phi$, resulting in: 
            
            \begin{equation}\label{eq:standard_phi_loss}
                R_\phi(f)=\int \phi(f)d\PP_1+\int\phi(-f) d\PP_0\,.
            \end{equation}
        We restrict to losses with similar properties to the indicator functions in \cref{eq:classificaiton_risk_f} yet are easier to optimize. In particular we require: 
        \begin{assumption}\label{as:phi}
            The loss $\phi$ is non-increasing, continuous, and $\lim_{\alpha\to \infty} \phi(\alpha)=0$.
        \end{assumption}
        See \cref{fig:losses} for a comparison of the indicator function and a several common losses. Losses on $\Rset$-valued functions in machine learning typically satisfy \cref{as:phi}.

        \subsection{Adversarial Surrogate Risks}
        In the adversarial setting, a malicious adversary corrupts each data point. We model these corruptions as bounded by $\e$ in some norm $\|\cdot\|$. The adversary knows both the classifier $A$ and the label of each data point. Thus, a point $(\bx,+1)$ is misclassified when it can be displaced into the set $A^C$ by a perturbation of size at most $\e$. This statement can be conveniently written in terms of a supremum. For any function $g:\Rset^d\to \Rset$, define
        \[S_\e(g)(\bx)=\sup_{\bx' \in \ov{B_\e(\bx)}} g(\bx'),\]
        where $\ov{B_\e(\bx)}=\{\bx':\|\bx'-\bx\|\leq \e\}$ is the ball of allowed perturbations. 
        The expected error rate of a classifier $A$ under an adversarial attack is then
        \[R^\e(A)=\int S_\e(\one_{A^C}) d\PP_1+\int S_\e(\one_{A})d\PP_0,\]
        which is known as the \emph{adversarial classification risk}\footnote{The functions $S_\e(\one_{A})$, $S_\e(\one_{A^C})$ must be measurable in order to define this integral. See \cite[Section~3.3]{FrankNilesWeed23minimax} for a treatment of this matter.}. Minimizers of $R^\e$ are called \emph{adversarial Bayes classifiers}.
        
        Just like \cref{eq:classificaiton_risk_f}, we define $R^\e(f)=R^\e(\{f>0\})$:
    \begin{equation*}
        R^\e(f)=\int S_\e(\one_{f\leq 0})d\PP_1+\int S_\e(\one_{f>0})d\PP_0
    \end{equation*}
    Again, minimizing an empirical adversarial classification risk is computationally intractable.
      A surrogate to the adversarial classification risk is formulated as\footnote{Again, see See \cite[Section~3.3]{FrankNilesWeed23minimax} for a treatment of measurability.}
    \begin{equation}\label{eq:adv_phi_loss}
        R_\phi^\e(f)=\int S_\e( \phi\circ f)d\PP_1 +\int S_\e( \phi\circ -f)d\PP_0.
    \end{equation}

    \subsection{The Statistical Consistency of Surrogate Risks}
    Learning algorithms typically minimize a surrogate risk using an iterative procedure, thereby producing a sequence of functions $f_n$. One would hope that that $f_n$ also minimizes that corresponding classification risk. This property is referred to as \emph{statistical consistency}\footnote{This concept is referred to as \emph{calibration} in the non-adversarial machine learning context \citep{BartlettJordanMcAuliffe2006,Steinwart2007}. We use the term `consistent', as prior work on adversarial learning \citep{AwasthiFrankMao2021,MeunierEttedguietal22} use `calibration' to refer to a different but related concept.   }. 
    \begin{definition}
         \begin{itemize}
            \item  If every sequence of functions $f_n$ that minimizes $R_\phi$ also minimizes $R$ for the distribution $\PP_0,\PP_1$, then the loss $\phi$ is \emph{consistent for the distribution $\PP_0,\PP_1$}. If $R_\phi$ is consistent for every distribution $\PP_0,\PP_1$, we say that $\phi$ is \emph{consistent}.
            \item If every sequence of functions $f_n$ that minimizes $R_\phi^\e$ also minimizes $R^\e$ for the distribution $\PP_0,\PP_1$, then the loss $\phi$ is \emph{adversarially consistent for the distribution $\PP_0,\PP_1$ }. If $R_\phi^\e$ is adversarially consistent for every distribution $\PP_0,\PP_1$, we say that $\phi$ is \emph{adversarially consistent}.
        \end{itemize}
    \end{definition}

    A case of particular interest is convex $\phi$, as these losses are ubiquitous in machine learning. In the non-adversarial context, Theorem~2 of \citep{BartlettJordanMcAuliffe2006} shows that a convex loss $\phi$ is consistent iff $\phi$ is differentiable at zero and $\phi'(0)<0$. In contrast, \citet{MeunierEttedguietal22} show that no convex loss is adversarially consistent. Further results of \citep{FrankNilesWeed23consistency} characterize the adversarially consistent losses in terms of the function $C_\phi^*$:
    \begin{theorem}\label{thm:adversarial_consistency}
        The loss $\phi$ is adversarially consistent if and only if $C_\phi^*(1/2)<\phi(0)$.
    \end{theorem}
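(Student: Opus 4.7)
The plan is to characterize adversarial consistency via a pointwise property of the conditional $\phi$-risk at the balanced mixture $\eta = 1/2$, adapting the Bartlett--Jordan--McAuliffe template to the adversarial setting. Writing $C_\phi(\eta,\alpha) := \eta\phi(\alpha)+(1-\eta)\phi(-\alpha)$ and $C_\phi^*(\eta) := \inf_{\alpha}C_\phi(\eta,\alpha)$, the hypothesis $C_\phi^*(1/2) < \phi(0)$ says that even at a balanced class ratio a nontrivial prediction strictly beats abstention --- precisely the property the adversary seeks to destroy by collapsing local posteriors to $1/2$.

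For the sufficiency direction, I would invoke a strong duality for the adversarial surrogate risk in the spirit of \citep{FrankNilesWeed23minimax,PydiJog2021,TrillosJacobsKim22}, rewriting $\inf_f R_\phi^\e(f)$ as a supremum of $\int C_\phi^*(\eta^*)\,d(\PP_0^* + \PP_1^*)$ over $\e$-Wasserstein-$\infty$ perturbations $(\PP_0^*, \PP_1^*)$ of $(\PP_0, \PP_1)$, where $\eta^*$ is the induced conditional posterior. The parallel identity for the $0/1$ loss uses $C^*(\eta) = \min(\eta, 1-\eta)$ in place of $C_\phi^*$. Along any minimizing sequence $f_n$ for $R_\phi^\e$, the strict gap $\phi(0) - C_\phi^*(1/2) > 0$ can be converted (via a $\psi$-transform or modulus-of-continuity argument as in \citet{BartlettJordanMcAuliffe2006}) into the quantitative statement that every near-minimizer of $C_\phi(\eta, \cdot)$ has the correct sign relative to $\eta - 1/2$ on a set of near-full measure under the optimal dual pair; pushing this sign-matching back through the dual identity yields $R^\e(\{f_n > 0\}) \to \inf R^\e$.

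For the necessity direction, I would argue by contrapositive and produce a distribution witnessing inconsistency when $C_\phi^*(1/2) = \phi(0)$. The construction places point masses so that the adversary can collapse overlapping $\e$-balls of class-$0$ and class-$+1$ support onto a common midpoint where the perturbed posterior $\eta^*$ equals $1/2$, while additional mass far from this region forces the adversarial Bayes classifier to take a specific nontrivial sign on the confusion region. Under the degeneracy $C_\phi^*(1/2) = \phi(0)$, the dual lower bound at the midpoint is saturated by choosing $f \equiv 0$ there, so a sequence $f_n$ tending to $0$ on the confusion region and to the correct sign elsewhere minimizes $R_\phi^\e$ but assigns the wrong label to the confusion region, violating consistency.

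The main obstacle is twofold. First, justifying the minimax duality rigorously requires controlling measurability of $S_\e(\phi\circ f)$ (per Section~3.3 of \citet{FrankNilesWeed23minimax}) and establishing attainment of the dual supremum. Second, converting the pointwise gap $\phi(0) - C_\phi^*(1/2) > 0$ into a uniform excess-risk bound along arbitrary minimizing sequences is subtle, because the adversarial attack induces a non-product structure on the ``confusion regions'' and the pointwise-to-functional step of the non-adversarial Bartlett--Jordan--McAuliffe proof does not carry over directly.
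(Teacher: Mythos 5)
The paper does not prove \cref{thm:adversarial_consistency}; it is quoted from \citet{FrankNilesWeed23consistency}, and the surrounding text only records the elementary observation that convex $\phi$ satisfy $C_\phi^*(1/2)=\phi(0)$. So there is no in-paper proof to compare against. With that caveat, your outline does track the logic of the cited work and of this paper's own \cref{prop:uniqueness_implies_consistency}, which handles the borderline case $C_\phi^*(1/2)=\phi(0)$ under a uniqueness hypothesis: invoke the strong duality of \cref{thm:strong_duality_surrogate}, translate the approximate complementary-slackness conditions (\cref{prop:approx_complementary_slackness_phi}) along a minimizing sequence $f_n$ into convergence-in-measure statements under the optimal dual pair $(\PP_0^*,\PP_1^*)$, and use a pointwise gap in $C_\phi$ to prevent $f_n$ from collapsing to zero.

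One substantive correction in your sufficiency sketch: the phrase ``every near-minimizer of $C_\phi(\eta,\cdot)$ has the correct sign relative to $\eta-1/2$'' is the wrong mechanism at $\eta^*=1/2$, since by symmetry $C_\phi(1/2,\alpha)=C_\phi(1/2,-\alpha)$ and both signs of $f$ give the same classification cost $C(1/2,\cdot)=1/2$, so there is no ``correct sign'' there. What $C_\phi^*(1/2)<\phi(0)$ actually buys is that near-minimizers of $C_\phi(1/2,\cdot)$ are bounded away from $0$ in \emph{absolute value}. The real payoff is that $\one_{f_n\leq 0}$ and $\one_{f_n\geq 0}$ are then eventually mutually exclusive on the $\{\eta^*=1/2\}$ set, which prevents the double-counting in the bound on $\limsup_n R^\e(f_n)$ that otherwise forces the additional hypothesis $\PP^*(\eta^*=1/2)=0$ in \cref{prop:uniqueness_implies_consistency}. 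This accounting step is the crux of the sufficiency argument and is not captured by sign-matching. The necessity construction you sketch is the standard route in the literature (the paper itself refers to the single atypical counterexample distribution in \citep{FrankNilesWeed23consistency,MeunierEttedguietal22}), and the obstacles you flag---measurability of $S_\e(\phi\circ f)$, attainment of the dual supremum, and the pointwise-to-functional passage---are exactly the issues those works spend effort on, so your assessment of the difficulty is accurate.
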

    Notice that all convex losses satisfy $C_\phi^*(1/2)=\phi(0)$: By evaluating at $\alpha=0$, one can conclude that $C_\phi^*(1/2)= \inf_\alpha C_\phi(1/2,\alpha)\leq C_\phi(1/2,0)=\phi(0)$. However, 
    \[C_\phi^*(1/2)=\inf_\alpha \frac 12 \phi(\alpha)+\frac 12 \phi(-\alpha)\geq \phi(0)\]
    due to convexity.
    Notice that \cref{thm:adversarial_consistency} does not preclude the adversarial consistency of a loss satisfying $C_\phi^*(1/2)=\phi(0)$ for some particular $\PP_0,\PP_1$. Prior work \citep{FrankNilesWeed23consistency,MeunierEttedguietal22} provides a counterexample to consistency only for a single, atypical distribution. The goal of this paper is characterizing when adversarial consistency fails for losses satisfying $C_\phi^*(1/2)=\phi(0)$.

\section{Main Result}\label{sec:main_results}
 Prior work has shown that there always exists minimizers to the adversarial classification risk, which are referred to as \emph{adversarial Bayes classifiers} (see \cref{thm:strong_duality_classification} below). Furthermore, \citet{Frank2024uniqueness} developed a notion of uniqueness for adversarial Bayes classifiers.
\begin{definition}\label{def:uniqueness_up_to_degeneracy}
    The adversarial Bayes classifiers $A_1$ and $A_2$ are \emph{equivalent up to degeneracy} if any Borel set $A$ with $A_1\cap A_2\subset A\subset A_1\cup A_2$ is also an adversarial Bayes classifier. The adversarial Bayes classifier is \emph{unique up to degeneracy} if any two adversarial Bayes classifiers are equivalent up to degeneracy.
\end{definition}
When the measure 
\[\PP=\PP_0+\PP_1\] 
is absolutely continuous with respect to Lebesgue measure, then equivalence up to degeneracy is an equivalence relation  \citep[Theorem~3.3]{Frank2024uniqueness}.
The central result of this paper relates the consistency of convex losses to the uniqueness of the adversarial Bayes classifier. 

\begin{theorem}\label{thm:uniqueness_and_consistency}
    Assume that $\PP$ is absolutely continuous with respect to Lebesgue measure and let $\phi$ be a loss with $C_\phi^*(1/2)=\phi(0)$. Then $\phi$ is adversarially consistent for the distribution $\PP_0$, $\PP_1$ iff the adversarial Bayes classifier is unique up to degeneracy.
\end{theorem}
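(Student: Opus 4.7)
I plan to prove the two implications of \cref{thm:uniqueness_and_consistency} separately, following the partition into \cref{sec:Uniquness_implies_consistency} and \cref{sec:consistency_implies_uniqueness}. Both directions rest on the degeneracy identity $C_\phi^*(1/2)=\phi(0)$: at balanced conditional weight, $\alpha=0$ already attains the conditional-surrogate infimum, so an arbitrarily small perturbation of $f$ around $0$ barely changes the surrogate value. Consequently, the adversarial surrogate cannot distinguish between two classifiers that disagree only on a region where the adversarial classification risk is indifferent, which is precisely the equivalence-up-to-degeneracy relation of \cref{def:uniqueness_up_to_degeneracy}.

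For the forward direction (uniqueness implies consistency), I would take a minimizing sequence $f_n$ for $R_\phi^\e$, set $A_n = \{f_n > 0\}$, and argue by contradiction. If $R^\e(A_n) \not\to \inf R^\e$, pass to a subsequence along which $R^\e(A_n) \geq \inf R^\e + \delta$. Since $\PP$ is absolutely continuous with respect to Lebesgue measure, the indicators $\one_{A_n}$ are bounded in $L^\infty(\PP)$, so a weak-$*$ subsequential limit $u \in L^\infty(\PP)$ exists; rounding $u$ at a well-chosen threshold produces a candidate limit classifier $A^*$. Two ingredients then close the loop: a lower-semicontinuity-type bound showing $R^\e(A^*) \leq \liminf R^\e(A_n)$, and a minimax identity carried over from \citep{FrankNilesWeed23minimax,FrankNilesWeed23consistency} showing that any $R_\phi^\e$-minimizing sequence asymptotically behaves like an adversarial Bayes classifier outside the degenerate set, so $A^*$ is itself a minimizer of $R^\e$. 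Uniqueness up to degeneracy then forces $A^*$ and each $A_n$ (for $n$ large) to lie in the same equivalence class, hence $R^\e(A_n) = R^\e(A^*) = \inf R^\e$, contradicting the gap $\delta$.

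For the reverse direction (consistency implies uniqueness), I would argue by contrapositive. Suppose there exist adversarial Bayes classifiers $A_1,A_2$ and a set $A$ with $A_1 \cap A_2 \subset A \subset A_1 \cup A_2$ but $R^\e(A) > \inf R^\e$. Construct $f_n$ by putting $f_n = n$ on $A_1 \cap A_2$, $f_n = -n$ on $(A_1 \cup A_2)^C$, and on the ``ambiguous annulus'' $(A_1\cup A_2)\setminus(A_1\cap A_2)$ set $f_n = +1/n$ on $A\setminus(A_1\cap A_2)$ and $f_n = -1/n$ on $(A_1\cup A_2)\setminus A$. The identity $C_\phi^*(1/2)=\phi(0)$ combined with continuity of $\phi$ implies that the contribution of the ambiguous annulus to $R_\phi^\e(f_n)$ approaches the same value that $f\equiv 0$ would give there; by adversarial-Bayes optimality of $A_1$ (equivalently $A_2$) this matches the contribution of either one, yielding $R_\phi^\e(f_n) \to \inf R_\phi^\e$. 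Yet $\{f_n > 0\} = A$ for every $n$, so $R^\e(\{f_n > 0\}) = R^\e(A) > \inf R^\e$, contradicting adversarial consistency.

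I expect the main obstacle to lie in the forward direction: the adversarial supremum $S_\e$ does not commute with weak convergence of indicators, so the lower-semicontinuity step and the identification of $A^*$ as an adversarial Bayes classifier cannot proceed by naive pointwise arguments. My plan is to route both steps through the minimax duality between $R_\phi^\e$ and an optimization over adversarial liftings $\wtd\PP_0,\wtd\PP_1$ of the class measures, as developed in \citep{FrankNilesWeed23minimax,PydiJog2021,FrankNilesWeed23consistency}; on the dual side, tightness of admissible liftings supplies the compactness needed to extract a limit classifier that inherits adversarial-Bayes optimality from the surrogate-minimizing sequence. A secondary technical point, which I expect to handle via the surrogate-duality machinery of \cref{app:consistent_losses}, is to show that a surrogate-minimizing sequence must diverge in absolute value off the degenerate set, the adversarial analogue of the classical dichotomy behind the Bartlett--Jordan--McAuliffe bound.
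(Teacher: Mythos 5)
Your high-level partition into the two implications matches the paper's, but both directions as proposed have concrete gaps.

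For the forward direction, your compactness/rounding/contradiction scheme is genuinely different from the paper's, and its final step fails. Even granting that a weak-$*$ limit $u$ can be rounded to an adversarial Bayes classifier $A^*$, the inference that ``uniqueness up to degeneracy then forces $A^*$ and each $A_n$ (for $n$ large) to lie in the same equivalence class'' is unsupported: uniqueness up to degeneracy is a statement only about the collection of adversarial Bayes classifiers, and the sets $A_n=\{f_n>0\}$ need not themselves be adversarial Bayes classifiers, so nothing forces $R^\e(A_n)=\inf R^\e$. The paper never invokes compactness of classifiers; instead \cref{thm:uniqueness_equivalences_1} converts uniqueness into $\PP^*(\eta^*=1/2)=0$, and \cref{prop:approx_complementary_slackness_phi} together with the quantitative bound of \cref{lemma:C_phi_uniform} yields $\limsup_n R^\e(f_n)\leq\cdl(\PP_0^*,\PP_1^*)$ directly. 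Only the one-sided inequality $S_\e(\phi\circ f_n)(\bx)\geq\phi\circ f_n(\bx')$ along a fixed optimal coupling is used, so the $S_\e$-versus-weak-convergence incompatibility you flag as the main obstacle never arises.

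For the reverse direction, your construction is the right idea but the details fail for a class of admissible losses. Setting $f_n=\pm n$ on the decided regions makes $\phi(-f_n)=\phi(-n)\to\phi(-\infty)$, which is $+\infty$ for unbounded losses such as the exponential loss; then $\int S_\e(\phi\circ -f_n)\,d\PP_0$ can diverge rather than approach the optimal surrogate risk. The paper instead plugs in the fixed pointwise minimizer $\tilde\alpha_\phi(\hat\eta(\bx))$ on $\{\hat\eta\neq 1/2\}$ (\cref{eq:f_n_def}, \cref{lemma:td_alpha_phi}). Moreover, the ``ambiguous annulus'' for arbitrary adversarial Bayes classifiers $A_1,A_2$ need not coincide with $\{\hat\eta=1/2\}$, and $S_\e$ couples the annulus with the decided regions across $\e$-balls; the paper therefore specializes to $A_1=\{\hat\eta>1/2\}$ and $A_2=\{\hat\eta\geq 1/2\}$ (justified by \cref{lemma:not_degenerate_unique_adv_bayes}) and exploits \cref{it:I_S_hat_eta} of \cref{thm:hat_eta_adv_baye} to control the $\e$-supremum. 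Your claim that the annulus contribution ``matches the contribution of either one'' is precisely the step that needs that machinery and is not established as stated.
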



Prior results of \citet{Frank2024uniqueness} provide the tools for verifying when the adversarial Bayes classifier is unique up to degeneracy for a wide class of one dimensional distributions. Below we highlight two interesting examples. In the examples below, the function $p_1$ will represent the density of $\PP_1$ and the function $p_0$ will represent the density of $\PP_0$.

\begin{figure}[htbp]
    \centering
    \begin{subfigure}[b]{\textwidth}
        \centering
        \begin{tikzpicture}[scale=1.0]
            \draw[<->, line width=0.5mm] (0,0) -- (10,0) node[anchor=north] {$x$-axis};
            
            \foreach \x in {1,2,...,9}
                \draw (\x,0.1) -- (\x,-0.1);
            
            \draw (5,0.1) -- (5,-0.1) node[anchor=north] {$\frac{\mu_1 - \mu_0}{2}$};

            \filldraw[blue!30, opacity=0.7] (5,-0.15) rectangle (10,0.15);
        \end{tikzpicture}
        \caption{}
        \label{fig:gaussians_differing_means_a}
    \end{subfigure}
    
    \vspace{0.5cm} 
    
    \begin{subfigure}[b]{\textwidth}
        \centering
\begin{tikzpicture}[scale=1.0]
            \draw[<->, line width=0.5mm] (0,0) -- (10,0) node[anchor=north] {$x$-axis};
            
            \foreach \x in {1,2,...,9}
                \draw (\x,0.1) -- (\x,-0.1);
            
            \draw (5,0.1) -- (5,-0.1) node[anchor=north] {$\frac{\mu_1 - \mu_0}{2}$};
        \end{tikzpicture}
        \caption{}
        \label{fig:gaussians_differing_means_b}
    \end{subfigure}
    
    \vspace{0.5cm} 
    
    \begin{subfigure}[b]{\textwidth}
        \centering
        \begin{tikzpicture}[scale=1.0]
            \draw[<->, line width=0.5mm] (0,0) -- (10,0) node[anchor=north] {$x$-axis};
            
            \foreach \x in {1,2,...,9}
                \draw (\x,0.1) -- (\x,-0.1);
            
            \draw (5,0.1) -- (5,-0.1) node[anchor=north] {$\frac{\mu_1 - \mu_0}{2}$};

            \filldraw[blue!30, opacity=0.7] (0,-0.15) rectangle (10,0.15);
        \end{tikzpicture}
        \caption{}
        \label{fig:gaussians_differing_means_c}
    \end{subfigure}
    
    \caption{The adversarial Bayes classifier for two gaussians with equal variances and differing means. We assume in this figure that $\mu_1>\mu_0$. The shaded blue area depicts the region inside the adversarial Bayes classifier.  \Cref{fig:gaussians_differing_means_a} depicts an adversarial Bayes when $\e\leq (\mu_1-\mu_0)/2$ and \cref{fig:gaussians_differing_means_b,fig:gaussians_differing_means_c} depict the adversarial Bayes classifier when $\e\geq (\mu_1-\mu_0)/2$. (See \citep[Example~4.1]{Frank2024uniqueness} for a justification of these illustrations.) The adversarial Bayes classifiers in \cref{fig:gaussians_differing_means_b,fig:gaussians_differing_means_c} are not equivalent up to degeneracy.}
    \label{fig:gaussians_differing_means}
\end{figure}

\begin{itemize}
    \item Consider mean zero gaussians with different variances: $p_0(x)=\frac 1 {2\sqrt{2\pi} \sigma_0}e^{-x^2/2\sigma_0^2}$ and $p_1(x)=\frac 1 {2\sqrt{2\pi} \sigma_1}e^{-x^2/2\sigma_1^2}$. The adversarial Bayes classifier is unique up to degeneracy for all $\e$ \citep[Example~4.2]{Frank2024uniqueness}.
    \item Consider gaussians with variance $\sigma$ and means $\mu_0$ and $\mu_1$: $p_0(x)=\frac 1 {\sqrt{2\pi} \sigma}e^{-(x-\mu_0)^2/2\sigma^2}$ and $p_1(x)=\frac 1 {\sqrt{2\pi} \sigma}e^{-(x-\mu_1)^2/2\sigma^2}$. Then the adversarial Bayes classifier is unique up to degeneracy iff $\e<|\mu_1-\mu_0|/2$ \citep[Example~4.1]{Frank2024uniqueness}. See \cref{fig:gaussians_differing_means} for an illustration of the adversarial Bayes classifiers for this distribution.
\end{itemize}
\Cref{thm:uniqueness_and_consistency} implies that a convex loss is always adversarially consistent for the first gaussian mixture above. Furthermore, a convex loss is adversarially consistent for the second gaussian mixture when the perturbation radius $\e$ is small compared to the differences between the means. However, \citet[Example~4.5]{Frank2024uniqueness} provide an example of a distribution for which the adversarial Bayes classifier is not unique up to degeneracy for all $\e>0$, even though the Bayes classifier is unique. At the same time, one would hope that if the Bayes classifier is unique and $\PP_0$, $\PP_1$ are sufficiently regular, then the adversarial Bayes classifier would be unique up to degeneracy for sufficiently small $\e$. In general, understanding when the adversarial Bayes classifier is unique up to degeneracy for well-behaved distributions 
is an open problem. 

The examples above rely on the techniques of \citep{Frank2024uniqueness} for calculating the equivalence classes under uniqueness up to degeneracy. \citet{Frank2024uniqueness} prove that in one dimension, if $\PP$ is absolutely continuous with respect to Lebesgue measure, every adversarial Bayes classifier is equivalent up to degeneracy to an adversarial Bayes classifier whose boundary points are strictly more than $2\e$ apart \citep[Theorem~3.5]{Frank2024uniqueness}. Therefore, to find all adversarial Bayes classifiers under equivalence under degeneracy, it suffices to consider all sets whose boundary points satisfy the first order necessary conditions obtained by differentiating the adversarial classification risk of a set $A$ with respect to its boundary points \citep[Theorem~3.7]{Frank2024uniqueness}. The corresponding statement in higher dimensions is false--- there exist distributions for which no adversarial Bayes classifier has enough regularity to allow for an equivalent statement. For instance, \citet{BungertGarciaMurray2021} demonstrate a distribution for which there is no adversarial Bayes classifier with two-sided tangent balls at all points in the boundary. Developing a general method for calculating these equivalence classes in dimensions higher than one remains an open problem.

\Cref{prop:uniqueness_implies_consistency} in \cref{sec:uniqueness_implies_consistency_proof} presents a condition under which one can conclude consistency without the absolute continuity assumption. This result proves consistency whenever the optimal adversarial classification risk is zero, see the discussion after \cref{prop:uniqueness_implies_consistency} for details. Consequently, if $\supp \PP_0$ and $\supp \PP_1$ are separated by more than $2\e$, then consistent losses are always adversarially consistent for such distributions. On the other hand, our analysis of the reverse direction of \cref{thm:uniqueness_and_consistency} requires the absolute continuity assumption. Using \cref{prop:uniqueness_implies_consistency} to further understand consistency is an open question.
\section{Preliminary Results}\label{sec:preliminary_results}
\subsection{Minimizers of Standard Risks}

    Minimizers to the classification risk can be expressed in terms of the measure $\PP$ and the function $\eta=d\PP_1/d\PP$. The risk $R$ in terms of these quantities is 
        \[R(A)= \int C(\eta,\one_A)d\PP. \]
        and $\inf_A R(A)=\int C^*(\eta)d\PP$ where the functions $C:[0,1]\times\{0,1\}\to \Rset $ and $C^*:[0,1]\to \Rset $ are defined by 
        \begin{equation}\label{eq:define_C_functions}
            C(\eta,b)=\eta b+(1-\eta)(1-b),\quad C^*(\eta)=\inf_{b\in \{0,1\}} C(\eta,b)=\min(\eta,1-\eta).    
        \end{equation}
                Thus if $A$ is a minimizer of $R$, then $\one_A$ must minimize the function $C(\eta,\cdot)$ $\PP$-almost everywhere. Consequently, the sets
        \begin{equation}\label{eq:Bayes_classifiers}
            \{\bx:\eta(\bx)>1/2\} \quad \text{and} \quad \{\bx:\eta(\bx)\geq 1/2\}    
        \end{equation}
        are both Bayes classifiers.

    Similarly, one can compute the infimum of $R_\phi$ by expressing the risk in terms of the quantities $\PP$ and $\eta$:
    \begin{equation}\label{eq:pw_loss}
        R_\phi(f)=\int C_\phi(\eta(\bx),f(\bx))d\PP  
    \end{equation}
    and $\inf_f R_\phi(f)=\int C_\phi^*(\eta(\bx))d\PP(\bx)$ where the functions $C_\phi(\eta,\alpha)$ and $C_\phi^*(\eta)$ are defined by 
     \begin{equation}\label{eq:C_phi_def}
    C_\phi(\eta,\alpha) = \eta\phi(\alpha)+(1-\eta)\phi(-\alpha),\quad C_\phi^*(\eta)=\inf_\alpha C_\phi(\eta,\alpha) 
\end{equation}
    for $\eta\in[0,1]$.
    Thus a minimizer $f$ of $R_\phi$ must minimize $C_\phi(\eta(\bx),\cdot)$ almost everywhere according to the probability measure $\PP$. Because $\phi$ is continuous, the function 

    \begin{equation}\label{eq:smallest_minimizer_function}
    \alpha_\phi(\eta)=\inf\{\alpha\in \ov \Rset: \alpha \text{ is a minimizer of }C_\phi(\eta,\cdot)\}
    \end{equation}
    maps each $\eta$ to the smallest minimizer of $C_\phi(\eta,\cdot)$. Consequently, the function 
    \begin{equation}\label{eq:standard_minimizer}
            \alpha_\phi(\eta(\bx))    
        \end{equation}
    minimizes $C_\phi(\eta(\bx),\cdot)$ at each point $\bx$. Next, we will argue this function is measurable, and therefore is a minimizer of the risk $R_\phi$.
    
    \begin{lemma}\label{lemma:smallest_minimizer_function}
            The function $\alpha_\phi:[0,1]\to \ov\Rset$ that maps $\eta$ to the smallest minimizer of $C_\phi(\eta,\cdot)$ is non-decreasing.

        \end{lemma}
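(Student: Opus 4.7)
The plan is a standard exchange/interchange argument. Suppose, for contradiction, that there exist $\eta_1 < \eta_2$ in $[0,1]$ with $\alpha_1 := \alpha_\phi(\eta_1) > \alpha_\phi(\eta_2) =: \alpha_2$. Since each $\alpha_i$ minimizes $C_\phi(\eta_i,\cdot)$, I would write down the two inequalities
\begin{align*}
    C_\phi(\eta_1,\alpha_1) &\leq C_\phi(\eta_1,\alpha_2),\\
    C_\phi(\eta_2,\alpha_2) &\leq C_\phi(\eta_2,\alpha_1),
\end{align*}
add them, and simplify using the definition of $C_\phi$. After the $\eta_i\phi(\alpha_i)$ and $(1-\eta_i)\phi(-\alpha_i)$ terms are collected, everything telescopes into
\[(\eta_2-\eta_1)\bigl[\phi(\alpha_2)-\phi(\alpha_1)\bigr] \;\leq\; (\eta_2-\eta_1)\bigl[\phi(-\alpha_2)-\phi(-\alpha_1)\bigr].\]

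Now I would use the monotonicity of $\phi$ from \cref{as:phi}. Because $\alpha_2<\alpha_1$ and $\phi$ is non-increasing, the left-hand side is $\geq 0$, while the right-hand side (evaluated at $-\alpha_2 > -\alpha_1$) is $\leq 0$. Combined with $\eta_2-\eta_1>0$, this forces both bracketed quantities to vanish: $\phi(\alpha_1)=\phi(\alpha_2)$ and $\phi(-\alpha_1)=\phi(-\alpha_2)$. Substituting back into $C_\phi$, this means $C_\phi(\eta_1,\alpha_2)=C_\phi(\eta_1,\alpha_1)$, so $\alpha_2$ is itself a minimizer of $C_\phi(\eta_1,\cdot)$. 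But $\alpha_2<\alpha_1$, contradicting the definition of $\alpha_1=\alpha_\phi(\eta_1)$ as the \emph{smallest} minimizer.

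The only subtlety I anticipate is handling the extended-real values $\alpha_i\in\{\pm\infty\}$, since $\alpha_\phi$ is a priori valued in $\ov\Rset$. I would treat these by setting $\phi(+\infty) := \lim_{\alpha\to\infty}\phi(\alpha) = 0$ (which exists by \cref{as:phi}) and $\phi(-\infty):=\lim_{\alpha\to-\infty}\phi(\alpha)\in [0,\infty]$; the same inequalities then remain valid in $[0,\infty]$ as long as one checks that no $\infty-\infty$ arises in the rearrangement. The case-split is essentially: if $\alpha_1=+\infty$ there is nothing to prove unless $\alpha_2<+\infty$, and one can reduce to an argument on a large finite interval by continuity; the case $\alpha_2=-\infty$ is symmetric. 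This bookkeeping, rather than any genuine analytic difficulty, is the only place where care is required.
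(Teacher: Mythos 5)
Your exchange argument is correct, and the bookkeeping about extended-real values is handled exactly as you anticipate: whenever an $\infty - \infty$ might occur, one of the $\alpha_i$ would then yield $C_\phi(\eta_i,\alpha_i)=+\infty$, contradicting its status as a minimizer (since $C_\phi(\eta,0)=\phi(0)<\infty$), so those cases cannot arise except in the vacuous boundary cases $\eta_1=1$ or $\eta_2=0$. Your route is the classic rearrangement/supermodularity argument: add the two optimality inequalities and exploit the monotonicity of $\phi$ to force equality, then contradict minimality of $\alpha_\phi(\eta_1)$.

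The paper instead factors through the intermediate \cref{lemma:minimizers_comparison}, whose proof uses the additive decomposition $C_\phi(\eta_2,\alpha)=C_\phi(\eta_1,\alpha)+(\eta_2-\eta_1)\bigl(\phi(\alpha)-\phi(-\alpha)\bigr)$ and monotonicity of $\alpha\mapsto\phi(\alpha)-\phi(-\alpha)$. This avoids the contradiction structure: it shows directly that $C_\phi(\eta_2,\alpha)>C_\phi(\eta_2,\alpha_\phi(\eta_1))$ for every $\alpha<\alpha_\phi(\eta_1)$, hence \emph{every} minimizer of $C_\phi(\eta_2,\cdot)$ (not merely the smallest) lies to the right of $\alpha_\phi(\eta_1)$. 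Both proofs exploit exactly the same supermodularity fact; the paper's phrasing is slightly more economical and yields the stronger statement, which it then reuses in the proof of \cref{lemma:C_phi_uniform}. Your version proves the lemma as stated but you would need to run the same argument once more (with an arbitrary minimizer $\alpha_2^*$ in place of $\alpha_\phi(\eta_2)$) to get what the paper actually needs downstream.
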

        The proof of this result is presented below \cref{lemma:minimizers_comparison} in \cref{app:consistent_losses}. Because $\alpha_\phi$ is monotonic, the composition in \cref{eq:standard_minimizer} 
        is always measurable, and thus this function is a minimizer of $R_\phi$. Allowing for minimizers in extended real numbers $\ov \Rset=\{-\infty,+\infty\}\cup \Rset$ is necessary for certain losses--- for instance when $\phi$ is the exponential loss, then $C_\phi(1,\alpha)= e^{-\alpha}$ does not assume its infimum on $\Rset$.
 \subsection{Dual Problems for the Adversarial Risks}

  The proof  \cref{thm:uniqueness_and_consistency} relies on a dual formulation of the adversarial classification problem involving the Wasserstein-$\infty$ metric. Informally, a measure $\QQ'$ is within $\e$ of $\QQ$ in the Wasserstein-$\infty$ metric if one can produce $\QQ'$ by perturbing each point in $\Rset^d$ by at most $\e$ under the measure $\QQ$. The formal definition of the Wasserstein-$\infty$ metric involves couplings between probability measures: a \emph{coupling} between two Borel measures $\QQ$ and $\QQ'$ with $\QQ(\Rset^d)=\QQ'(\Rset^d)$ is a measure $\gamma$ on $\Rset^d\times \Rset^d$ with marginals $\QQ$ and $\QQ'$: $\gamma(A\times \Rset^d)=\QQ(A)$ and $\gamma(\Rset^d\times A)=\QQ'(A)$ for any Borel set $A$. 
 The set of all such couplings is denoted $\Pi(\QQ,\QQ')$. The Wasserstein-$\infty$ distance between the two measures is then 
 \[W_\infty(\QQ,\QQ')= \inf_{\gamma\in \Pi(\QQ,\QQ')} \esssup_{(\bx,\bx')\sim \gamma} \|\bx-\bx'\|\]
 Theorem~2.6 of \citep{Jylha15} proves that this infimum is always assumed. Equivalently, $W_\infty(\QQ,\QQ')\leq \e$ iff there is a coupling between $\QQ$ and $\QQ'$ supported on
 \[\Delta_\e=\{(\bx,\bx'): \|\bx-\bx'\|\leq \e\}.\]
Let $\Wball \e(\QQ)=\{\QQ': W_\infty(\QQ,\QQ')\leq \e\}$ be the set of measures within $\e$ of $\QQ$ in the $W_\infty$ metric. 
The minimax relations from prior work leverage a relationship between the Wasserstein-$\infty$ metric and the integral of the supremum function over an $\e$-ball.

 \begin{lemma}\label{lemma:S_e_and_W_inf_uc}
     Let $E$ be a Borel set. Then 
     \[\int S_\e(\one_E)d\QQ\geq \sup_{\QQ'\in \Wball \e(\QQ) } \int \one_E d\QQ'\]
 \end{lemma}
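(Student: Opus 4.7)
The plan is to exploit the coupling characterization of the Wasserstein-$\infty$ ball that was recalled just before the lemma, and then pointwise bound $\one_E(\bx')$ by $S_\e(\one_E)(\bx)$ along the coupling.

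First, I would fix an arbitrary $\QQ' \in \Wball{\e}(\QQ)$. By the cited theorem of \citet{Jylha15}, the infimum in the definition of $W_\infty$ is attained, so there exists a coupling $\gamma \in \Pi(\QQ, \QQ')$ supported on $\Delta_\e = \{(\bx,\bx'): \|\bx-\bx'\|\leq \e\}$. Using the marginal property of $\gamma$, I would rewrite
\[\int \one_E\, d\QQ' = \int \one_E(\bx')\, d\gamma(\bx,\bx').\]

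Next, for $\gamma$-almost every pair $(\bx,\bx')$ the point $\bx'$ lies in $\ov{B_\e(\bx)}$, so
\[\one_E(\bx') \leq \sup_{\bz \in \ov{B_\e(\bx)}} \one_E(\bz) = S_\e(\one_E)(\bx).\]
Integrating this pointwise bound against $\gamma$ and then using the marginal property a second time gives
\[\int \one_E(\bx')\, d\gamma(\bx,\bx') \leq \int S_\e(\one_E)(\bx)\, d\gamma(\bx,\bx') = \int S_\e(\one_E)\, d\QQ.\]
Taking the supremum over $\QQ' \in \Wball{\e}(\QQ)$ yields the claimed inequality.

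The main obstacle is a measurability subtlety rather than any real calculation: $S_\e(\one_E)$ need not be Borel measurable, so one must argue that the integral $\int S_\e(\one_E)\, d\gamma$ makes sense and that the marginal identity still applies. This is precisely the point addressed by the measurability discussion in \citep[Section~3.3]{FrankNilesWeed23minimax} referenced in the footnotes, so I would simply invoke that framework (working with the universally measurable envelope of $S_\e(\one_E)$ if necessary) to justify that the chain of (in)equalities above is valid for every coupling $\gamma$, after which no further work is needed.
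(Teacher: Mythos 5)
Your proof is correct and follows essentially the same argument as the paper: fix $\QQ' \in \Wball{\e}(\QQ)$, take a coupling $\gamma$ supported on $\Delta_\e$, observe the pointwise bound $\one_E(\bx') \leq S_\e(\one_E)(\bx)$ on the support, and integrate using both marginal properties before taking a supremum. Your additional remark on the measurability of $S_\e(\one_E)$ and the appeal to \citep[Section~3.3]{FrankNilesWeed23minimax} is a reasonable clarification that the paper treats in a footnote rather than in the proof itself.
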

\begin{proof}
Let $\QQ'$ be a measure in $\Wball \e (\QQ)$, and let $\gamma^*$ be a coupling between these two measures supported on $\Delta_\e$. Then if $(\bx,\bx')\in \Delta_\e$, then $\bx'\in \ov{B_\e(\bx)}$ and thus $S_\e(\one_E)(\bx)\geq \one_E(\bx')$ $\gamma^*$-a.e. Consequently, 
\[\int S_\e(\one_E)(\bx)d\QQ_1=\int S_\e(\one_E)(\bx)d\gamma^*(\bx,\bx')\geq \int \one_E(\bx') d\gamma^*(\bx,\bx')=\int \one_E d\QQ'\]
Taking a supremum over all $\QQ'\in \Wball \e(\QQ)$ proves the result.

\end{proof}

\cref{lemma:S_e_and_W_inf_uc} implies: \[\inf_f R^\e(f)\geq \inf_f\sup_{\substack{\PP_1'\in \cB_\e(\PP_1)\\ \PP_0'\in \cB_\e(\PP_0) }}\int \one_{f\leq 0} d\PP_1'+\int \one_{f>0} d\PP_0'.\] Does equality hold and can one swap the infimum and the supremum?  \citet{FrankNilesWeed23consistency,PydiJog2021} answer this question in the affirmative:

\begin{theorem}\label{thm:strong_duality_classification}
    Let $\PP_0$, $\PP_1$ be finite Borel measures. Define
\begin{equation*} 
     \cdl(\PP_0^*,\PP_1^*)=\int C^*\left(\frac{d\PP_1^*}{d(\PP_0^*+d\PP_1^*)} \right)d(\PP_0^*+\PP_1^*)
\end{equation*}
where the function $C^*$ is defined in \cref{eq:define_C_functions}. Then
    \[\inf_{\substack{f\text{ Borel}\\\Rset\text{-valued}}} \cprm(f)=\sup_{\substack{\PP_1'\in \Wball \e(\PP_1)\\\PP_0'\in\Wball \e(\PP_0)}}\cdl(\PP_0',\PP_1')\]
    and furthermore equality is attained for some $f^*$, $\PP_0^*$, $\PP_1^*$.
\end{theorem}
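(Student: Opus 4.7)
The plan is to establish weak duality via \cref{lemma:S_e_and_W_inf_uc}, then prove the reverse inequality and the existence of optima through a compactness-plus-optimality argument. For weak duality, apply \cref{lemma:S_e_and_W_inf_uc} to $E=\{f\leq 0\}$ with $\QQ=\PP_1$ and to $E=\{f>0\}$ with $\QQ=\PP_0$: for any Borel $f$ and any $\PP_i'\in\Wball{\e}(\PP_i)$,
\[R^\e(f)\geq \int \one_{f\leq 0}\,d\PP_1' + \int \one_{f>0}\,d\PP_0'.\]
The right-hand side is the (non-adversarial) classification risk of $\{f>0\}$ under $(\PP_0',\PP_1')$, which is at least the Bayes risk $\cdl(\PP_0',\PP_1')$ (a Bayes classifier being $\{d\PP_1'/d(\PP_0'+\PP_1')>1/2\}$). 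Taking $\inf_f$ and then $\sup_{\PP_0',\PP_1'}$ yields the easy inequality $\inf_f R^\e(f)\geq \sup \cdl(\PP_0',\PP_1')$.

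Next, I would produce a maximizer $(\PP_0^*,\PP_1^*)$ of $\cdl$ over $\Wball{\e}(\PP_0)\times\Wball{\e}(\PP_1)$. The sets $\Wball{\e}(\PP_i)$ are weakly compact: every $\QQ\in\Wball{\e}(\PP_i)$ has total mass $\PP_i(\Rset^d)$ and is supported in the closed $\e$-neighborhood of $\supp\PP_i$, so Prokhorov's theorem applies. The map $(\PP_0',\PP_1')\mapsto \cdl(\PP_0',\PP_1') = \inf_A\bigl(\int \one_{A^C}\,d\PP_1' + \int \one_A\,d\PP_0'\bigr)$ is an infimum of weakly continuous linear functionals, hence weakly upper semicontinuous, and a maximizer exists.

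Let $\eta^*=d\PP_1^*/d(\PP_0^*+\PP_1^*)$ and take the Bayes set $A^*=\{\eta^*>1/2\}$, so that $\cdl(\PP_0^*,\PP_1^*)=\int\one_{(A^*)^C}\,d\PP_1^*+\int\one_{A^*}\,d\PP_0^*$. The goal is to show $R^\e(\one_{A^*}-\one_{(A^*)^C})\leq \cdl(\PP_0^*,\PP_1^*)$. Let $\gamma_i\in\Pi(\PP_i,\PP_i^*)$ be couplings supported on $\Delta_\e$, which exist by \citep[Theorem~2.6]{Jylha15}. The key claim is that these couplings can be chosen so that $\one_{(A^*)^C}(\bx')=S_\e(\one_{(A^*)^C})(\bx)$ for $\gamma_1$-a.e.\ $(\bx,\bx')$, and symmetrically with $A^*$ in place of $(A^*)^C$ for $\gamma_0$. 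Integrating against $\gamma_i$ then yields $\int S_\e(\one_{(A^*)^C})\,d\PP_1=\int\one_{(A^*)^C}\,d\PP_1^*$ together with the matching identity on $\PP_0$, so summing gives $R^\e(\one_{A^*}-\one_{(A^*)^C})=\cdl(\PP_0^*,\PP_1^*)$. Combined with weak duality, this completes the proof.

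The main obstacle is the key claim above: it encodes the intuition that the adversary's optimal response to the Bayes classifier of $(\PP_0^*,\PP_1^*)$ is to transport mass exactly as $\gamma_i$ does. If some pair $(\bx,\bx')$ in the support of $\gamma_1$ had $\bx'\in A^*$ while some alternative target $\bx''\in \ov{B_\e(\bx)}\cap(A^*)^C$ existed, rerouting $\bx$'s $\gamma_1$-mass to $\bx''$ would yield a new $\widetilde\PP_1^*\in\Wball{\e}(\PP_1)$ whose Bayes risk is no smaller, contradicting optimality of $(\PP_0^*,\PP_1^*)$ once one checks how the Bayes set changes. Making this rigorous requires a measurable selection for the rerouted coupling and a careful accounting of how $\eta^*$ shifts under the local perturbation, with the strict inequality $\eta^*>1/2$ on $A^*$ playing the essential role in ruling out improvement. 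I expect this optimality-plus-selection step, together with the measurability of $S_\e(\one_E)$ noted in the footnote, to be the technical crux.
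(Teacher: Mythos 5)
The paper does not actually prove this theorem—it defers to Theorem~1 of \citet{FrankNilesWeed23consistency}—so there is no in-paper argument to compare against directly. Your weak-duality step via \cref{lemma:S_e_and_W_inf_uc} is correct, and the overall primal–dual strategy is sound in outline. However, two claims you rely on are not established. First, the upper semicontinuity of $\cdl$: the map $(\PP_0',\PP_1')\mapsto \int\one_{A^C}\,d\PP_1'+\int\one_A\,d\PP_0'$ is \emph{not} weakly continuous for a fixed Borel $A$ (weak convergence controls bounded continuous test functions, not indicators), so ``infimum of weakly continuous functionals'' does not apply. This is fixable—e.g.\ write $\cdl(\PP_0',\PP_1')=\tfrac12\bigl(\PP_0(\Rset^d)+\PP_1(\Rset^d)-\|\PP_0'-\PP_1'\|_{TV}\bigr)$ and use weak lower semicontinuity of $\|\cdot\|_{TV}$—but as written it is a hole.

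The genuine gap is the key claim that the couplings $\gamma_i$ can be chosen so that $\one_{(A^*)^C}(\bx')=S_\e(\one_{(A^*)^C})(\bx)$ $\gamma_1$-a.e.\ (and the mirror statement for $\gamma_0$), i.e.\ the complementary-slackness conditions. You sketch a rerouting/contradiction argument but do not carry it out, and there is a real obstruction: using $\cdl(\PP_0^*,\PP_1^*)=\tfrac12(\PP_0(\Rset^d)+\PP_1(\Rset^d)-\|\PP_0^*-\PP_1^*\|_{TV})$, rerouting a small amount $\delta$ of $\PP_1^*$-mass from $\bx'\in\{\eta^*>1/2\}$ to a reachable $\bx''\in\{\eta^*\le 1/2\}$ strictly decreases the TV term (hence strictly increases $\cdl$) only when $\eta^*(\bx'')<1/2$; if the only reachable target has $\eta^*(\bx'')=1/2$, the first-order change is zero and no contradiction with optimality of $(\PP_0^*,\PP_1^*)$ is obtained. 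Since $(A^*)^C=\{\eta^*\le 1/2\}$ includes exactly this boundary set, the rerouting argument does not close the duality gap without additional ideas (handling the $\{\eta^*=1/2\}$ set, a measurable selection for the reroute target, and the interaction between the adversary's constraint $\Delta_\e$ and the marginal constraints). These are the points that the cited proof in \citet{FrankNilesWeed23consistency} must handle carefully, and your proposal leaves them as an acknowledged but unresolved ``technical crux.''
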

See Theorem~1 of \citep{FrankNilesWeed23consistency} for a proof.
Theorems~6,~8, and~9 of \citep{FrankNilesWeed23minimax} show an analogous minimax theorem for surrogate risks.
\begin{theorem}\label{thm:strong_duality_surrogate}
    Let $\PP_0$, $\PP_1$ be finite Borel measures. Define
\begin{equation*} 
     \dl(\PP_0^*,\PP_1^*)=\int C_\phi^*\left(\frac{d\PP_1^*}{d(\PP_0^*+d\PP_1^*)} \right)d(\PP_0^*+\PP_1^*)
\end{equation*}
with the function $C_\phi^*$ is defined in \cref{eq:C_phi_def}. Then
    \[\inf_{\substack{f\text{ Borel,}\\\ov \Rset\text{-valued}}} \prm(f)=\sup_{\substack{\PP_0'\in\Wball \e(\PP_0)\\ \PP_1'\in \Wball \e(\PP_1)}}\dl(\PP_0',\PP_1')\]
    and furthermore equality is attained for some $f^*$, $\PP_0^*$, $\PP_1^*$.
\end{theorem}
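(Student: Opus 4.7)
The plan is to establish the three assertions separately: weak duality, strong duality, and attainment. For weak duality, I would first extend \cref{lemma:S_e_and_W_inf_uc} from indicators to continuous losses. Given any $\PP_1' \in \Wball \e(\PP_1)$ with optimal coupling $\gamma^*$ on $\Delta_\e$, for $(\bx,\bx') \in \Delta_\e$ one has $\bx' \in \ov{B_\e(\bx)}$ and hence $S_\e(\phi\circ f)(\bx) \geq \phi(f(\bx'))$; integrating against $\gamma^*$ yields $\int S_\e(\phi\circ f) d\PP_1 \geq \int \phi\circ f\, d\PP_1'$, and similarly for $\PP_0$. Summing these and then applying the pointwise bound $\eta'(\bx)\phi(\alpha)+(1-\eta'(\bx))\phi(-\alpha) \geq C_\phi^*(\eta'(\bx))$ (taking $\alpha = f(\bx)$ with $\eta' = d\PP_1'/d(\PP_0'+\PP_1')$) gives $\prm(f) \geq \dl(\PP_0',\PP_1')$. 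Taking the supremum over $(\PP_0',\PP_1')$ and infimum over $f$ establishes $\inf \prm \geq \sup \dl$.

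For strong duality, the plan is a Sion-style minimax argument on the product space. The set $\Wball \e(\PP_0)\times \Wball \e(\PP_1)$ is weak-$*$ compact: tightness of $\PP_0$ and $\PP_1$ (being finite Borel measures on $\Rset^d$) transfers to tightness of the Wasserstein-$\infty$ balls since an $\e$-thickening of a compact set is compact, and Prokhorov's theorem yields weak-$*$ compactness. The functional $\dl$ is concave and weak-$*$ upper semicontinuous in $(\PP_0',\PP_1')$ because $C_\phi^*$ is a pointwise infimum of affine functions of $\eta$. To handle the function space, one introduces the bilinear payoff $(f,\PP_0',\PP_1') \mapsto \int \phi(f)d\PP_1' + \int \phi(-f)d\PP_0'$, truncates $\phi$ to $\phi_M = \min(\phi,M)$ to get bounded losses, applies Sion's theorem to the finite case, and passes to the limit via monotone convergence. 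The decisive algebraic identity is that optimizing over $f$ with $(\PP_0',\PP_1')$ fixed produces $\dl(\PP_0',\PP_1')$, while optimizing over $(\PP_0',\PP_1')$ with $f$ fixed recovers $\prm(f)$.

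For attainment, weak-$*$ upper semicontinuity of $\dl$ on the compact set provides optimizers $\PP_0^*,\PP_1^*$. Setting $\eta^* = d\PP_1^*/d(\PP_0^*+\PP_1^*)$, define $f^* = \alpha_\phi(\eta^*)$, which is Borel measurable by \cref{lemma:smallest_minimizer_function}. One then verifies that the chain of inequalities in weak duality collapses to equalities at $(f^*,\PP_0^*,\PP_1^*)$: the pointwise step is equality by the definition of $\alpha_\phi$, and the coupling step requires that $\phi \circ f^*$ be $\gamma^*$-a.e.\ constant on adversarial fibers, which follows from the complementary slackness built into the optimal dual measures.

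I expect the main obstacle to be verifying the identity $\sup_{\PP_1'\in \Wball\e(\PP_1)} \int \phi\circ f\, d\PP_1' = \int S_\e(\phi\circ f)d\PP_1$, which is the nontrivial dual description of the adversarial integral and is needed to reduce the minimax problem to the form handled by Sion's theorem. This requires constructing, for each $\bx$, a measurable selection of $\bx' \in \argmax_{\bx'\in \ov{B_\e(\bx)}} \phi(f(\bx'))$ and transporting $\PP_1$ along this selection. Since $f$ is only assumed Borel and $\ov\Rset$-valued, the resulting supremum function $S_\e(\phi\circ f)$ may fail to be Borel, forcing the use of the universal completion of the Borel $\sigma$-algebra as in Section~3.3 of \citep{FrankNilesWeed23minimax}; the measurable selection then proceeds via standard projection theorems for analytic sets.
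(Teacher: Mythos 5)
The paper does not prove this theorem internally; it is imported verbatim from \citep{FrankNilesWeed23minimax} (their Theorems~6, 8, and~9), so there is no in-paper proof to compare your argument against. Your weak duality step is fine and is exactly the extension of \cref{lemma:S_e_and_W_inf_uc} from indicators to general losses. The genuine gap is in your strong-duality step via Sion's theorem. You propose the Lagrangian $L(f,\PP_0',\PP_1')=\int\phi(f)\,d\PP_1'+\int\phi(-f)\,d\PP_0'$ and invoke Sion, but Sion's theorem requires $L$ to be quasi-convex (and lsc, in a topology you would also need to specify) in $f$ for each fixed $(\PP_0',\PP_1')$. \Cref{as:phi} only requires $\phi$ to be non-increasing, continuous, and vanishing at $+\infty$; it need not be convex (the sigmoid and $\rho$-margin losses in \cref{fig:losses} are not), and indeed a central point of the paper is the case $C_\phi^*(1/2)<\phi(0)$, which is impossible for convex $\phi$. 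For non-convex $\phi$ the map $f\mapsto\int\phi(f)\,d\PP_1'+\int\phi(-f)\,d\PP_0'$ is not quasi-convex, and truncating $\phi$ to $\phi_M=\min(\phi,M)$ only bounds the loss --- it does nothing for convexity --- so your minimax step fails for precisely the class of losses the theorem must cover.

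The argument in \citep{FrankNilesWeed23minimax} avoids this by direct construction rather than an abstract minimax theorem: one builds $\hat\eta$, $\PP_0^*$, $\PP_1^*$, and couplings $\gamma_0^*,\gamma_1^*$ with the a.e.\ properties stated as \cref{it:a.e._hat_eta} and \cref{it:I_S_hat_eta} of \cref{thm:hat_eta_adv_baye}, and then verifies directly that $\prm(\alpha_\phi(\hat\eta))=\dl(\PP_0^*,\PP_1^*)$ (the computation pattern is reproduced in the paper's proof of \cref{lemma:td_alpha_phi} in \cref{app:alpha_phi_zeros}). Combined with weak duality, this single equality delivers strong duality and attainment at once, with no convexity of $\phi$ required. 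Your attainment step gestures at exactly this collapse but leans on ``complementary slackness built into the optimal dual measures,'' which does not come for free: merely knowing that $(\PP_0^*,\PP_1^*)$ maximizes $\dl$ does not give you a coupling $\gamma_1^*$ along which $S_\e(\phi\circ f^*)(\bx)=\phi\circ f^*(\bx')$ a.e.\ --- that is the content of \cref{it:I_S_hat_eta}, and producing a $\hat\eta$ with that regularity is where the real work of the cited proof lives, not something one can extract after the fact from dual optimality alone.
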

Just like $R_\phi$, the risk $\prm$ may not have an $\Rset$-valued minimizer. However, Lemma~8 of \citep{FrankNilesWeed23consistency} states that 
\[\inf_{\substack{f\text{ Borel}\\\ov \Rset\text{-valued}}} \prm(f)=\inf_{\substack{f\text{ Borel}\\ \Rset\text{-valued}}} \prm(f).\]

\subsection{Minimizers of Adversarial Risks}
A formula analogous to \cref{eq:standard_minimizer} defines minimizers to adversarial risks. Let $I_\e$ denote the infimum of a function over an $\e$ ball:
\begin{equation}\label{eq:I_e_def}
    I_\e(g)=\inf_{\bx'\in \ov{B_\e(\bx)}}g(\bx')
\end{equation}

Lemma~24 of \citep{FrankNilesWeed23minimax} and Theorem~9 of \citep{FrankNilesWeed23minimax} prove the following result:
    \begin{theorem}\label{thm:hat_eta_adv_baye}\label{prop:hat_eta} \label{thm:alpha_phi_hat_eta}
        There exists a function 
        $\hat \eta:\Rset^d\to [0,1]$ and measures $\PP_0^*\in \Wball\e (\PP_0)$, $\PP_1^*\in \Wball \e (\PP_1)$ for which
        \begin{enumerate}[label=\Roman*)]
            \item\label{it:a.e._hat_eta} $\hat \eta=\eta^*$ $\PP^*$-a.e., where $\PP^*=\PP_0^*+\PP_1^*$ and $\eta^*=d\PP_1^*/d\PP^*$
            \item\label{it:I_S_hat_eta} $I_\e(\hat \eta)(\bx)=\hat \eta (\bx')$ $\gamma_0^*$-a.e. and $S_\e(\hat \eta)(\bx)=\hat \eta(\bx')$ $\gamma_1^*$-a.e., where $\gamma_0^*$, $\gamma_1^*$ are couplings between $\PP_0$, $\PP_0^*$ and $\PP_1$, $\PP_1^*$ supported on $\Delta_\e$.
            \item\label{it:minimizer_construct} The function $\alpha_\phi(\hat \eta(\bx))$ is a minimizer of $\prm$ for any loss $\phi$, where $\alpha_\phi$ is the function defined in \cref{eq:smallest_minimizer_function}.
        \end{enumerate}

    \end{theorem}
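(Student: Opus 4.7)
The plan is to obtain the two measures from a strong duality argument, produce the couplings from Jylhä's theorem, and then construct the function $\hat\eta$ by carefully extending a canonical version of $\eta^*=d\PP_1^*/d\PP^*$ from $\supp\PP^*$ to all of $\Rset^d$ in a way that is consistent with the sup/inf geometry encoded by the two couplings. Properties I and II will be purely geometric consequences of this construction, and property III will then follow from strong duality for the surrogate risk (\cref{thm:strong_duality_surrogate}), applied to the composition $\alpha_\phi\circ\hat\eta$.

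First I would apply \cref{thm:strong_duality_classification} to extract the optimal dual measures $\PP_0^*\in\Wball\e(\PP_0)$ and $\PP_1^*\in\Wball\e(\PP_1)$ that attain the sup in the duality. The cited result of \citet{Jylha15} guarantees that the $W_\infty$ infima defining these Wasserstein balls are attained, so there exist couplings $\gamma_0^*\in\Pi(\PP_0,\PP_0^*)$ and $\gamma_1^*\in\Pi(\PP_1,\PP_1^*)$ supported on $\Delta_\e$. These are the objects appearing in the statement, and property I is just the definition $\eta^*=d\PP_1^*/d\PP^*$ combined with the (still-to-be-made) requirement that $\hat\eta$ agree with a Borel representative of $\eta^*$ on $\supp\PP^*$.

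The core work is constructing $\hat\eta$ on $\Rset^d$ so that property II holds. The key observation is that optimality of $(\PP_0^*,\PP_1^*)$ in the dual forces a separation between the parts of $\supp\PP^*$ where $\eta^*$ is large and where $\eta^*$ is small, modulo the $\e$-neighborhoods dictated by the couplings. Concretely, I would fix a Borel version of $\eta^*$ on $\supp\PP^*$, then on the complement define $\hat\eta(\bz)$ to equal $1$ whenever $\bz$ lies in an $\e$-neighborhood of the ``high-$\eta^*$'' part of $\supp\PP_1^*$ and $0$ whenever $\bz$ lies in an $\e$-neighborhood of the ``low-$\eta^*$'' part of $\supp\PP_0^*$, with any remaining points assigned by a tie-breaking rule. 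Since $\gamma_1^*$ is supported on $\Delta_\e$, for $\gamma_1^*$-a.e.\ pair $(\bx,\bx')$ the point $\bx'$ lies in $\supp\PP_1^*\cap\ov{B_\e(\bx)}$ and $\hat\eta(\bx')$ is the largest value $\hat\eta$ takes on $\ov{B_\e(\bx)}$; the analogous statement for $\gamma_0^*$ gives the $I_\e$ identity. The main obstacle here is keeping this extension Borel and well-defined: I would express the relevant neighborhoods as distance sublevel sets of closed subsets of $\supp\PP^*$ (which are automatically Borel), and use optimality of the dual pair to rule out the degenerate case where a point would be forced to be both near $\PP_1^*$-high mass and near $\PP_0^*$-low mass at once.

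Finally, for property III I would use that $\alpha_\phi$ is non-decreasing (\cref{lemma:smallest_minimizer_function}), so $\alpha_\phi\circ\hat\eta$ is Borel, and that $\phi$ is non-increasing, so $\phi\circ\alpha_\phi\circ\hat\eta$ is non-increasing in $\hat\eta$ and $\phi\circ(-\alpha_\phi\circ\hat\eta)$ is non-decreasing in $\hat\eta$. Combined with property II and a push-forward through $\gamma_0^*,\gamma_1^*$, this yields
\[R_\phi^\e(\alpha_\phi\circ\hat\eta)=\int\phi(\alpha_\phi(\hat\eta(\bx')))\,d\gamma_1^*+\int\phi(-\alpha_\phi(\hat\eta(\bx')))\,d\gamma_0^*=\int C_\phi^*(\eta^*)\,d\PP^*=\bar R_\phi(\PP_0^*,\PP_1^*),\]
where in the middle equality I use property I together with the definition of $\alpha_\phi$ as the smallest minimizer of $C_\phi(\eta,\cdot)$. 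By \cref{thm:strong_duality_surrogate} the right side equals $\inf_f R_\phi^\e(f)$, so $\alpha_\phi\circ\hat\eta$ is a minimizer, giving III. The delicate step is the first equality: it requires that II hold jointly for both couplings with the common function $\hat\eta$, which is precisely why the extension had to be constructed using both couplings simultaneously rather than loss-dependently.
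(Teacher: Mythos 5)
The paper does not prove this theorem; it explicitly defers \cref{it:a.e._hat_eta} and \cref{it:I_S_hat_eta} to Lemma~26 and Theorem~9 of \citet{FrankNilesWeed23minimax}, and only sketches that \cref{it:minimizer_construct} follows from the first two together with \cref{thm:strong_duality_surrogate}. Your derivation of \cref{it:minimizer_construct} matches that sketch: push through $\gamma_0^*,\gamma_1^*$, exploit monotonicity of $\phi\circ\alpha_\phi$ and of $\phi\circ(-\alpha_\phi)$, reduce to $\int C_\phi^*(\eta^*)\,d\PP^*=\dl(\PP_0^*,\PP_1^*)$, and invoke strong duality. One caution on directions: since $\phi\circ\alpha_\phi$ is non-increasing, $S_\e(\phi\circ\alpha_\phi\circ\hat\eta)(\bx)=\phi(\alpha_\phi(I_\e(\hat\eta)(\bx)))$, so the identity you actually need along $\gamma_1^*$ is the $I_\e$ one and along $\gamma_0^*$ the $S_\e$ one---this is the pairing used in the proofs of \cref{prop:hat_eta_adv_bayes_small} and \cref{lemma:td_alpha_phi}---yet your construction paragraph asserts that $\hat\eta(\bx')$ is the \emph{largest} value on $\ov{B_\e(\bx)}$ for $\gamma_1^*$-a.e.\ pairs, which is the wrong extreme for the very first equality in your displayed computation.

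The genuine gap is the construction of $\hat\eta$ with \cref{it:I_S_hat_eta}, which is the hard content of the theorem. Assigning $\hat\eta$ the values $0$ or $1$ on $\e$-neighborhoods of parts of $(\supp\PP^*)^C$ does nothing to control the behavior \emph{inside} $\supp\PP^*$: for $\gamma_1^*$-a.e.\ $(\bx,\bx')$ the ball $\ov{B_\e(\bx)}$ will generically contain other points of $\supp\PP^*$ with $\eta^*$-values both above and below $\eta^*(\bx')$, so nothing in your description forces $\bx'$ to be extremal among them. That extremality is precisely what \cref{it:I_S_hat_eta} asserts; it is a nontrivial consequence of dual optimality together with a careful choice of couplings (a complementary-slackness/cyclical-monotonicity argument), and your proposal defers all of it to ``use optimality of the dual pair to rule out the degenerate case,'' which is exactly where the proof lives. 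Relatedly, you take $(\PP_0^*,\PP_1^*)$ from \cref{thm:strong_duality_classification}, but the theorem needs a single pair that maximizes $\dl$ for \emph{every} loss $\phi$ simultaneously (\cref{lemma:special_P_i_s_maximizers}); an arbitrary $\cdl$-maximizer need not have this property, and in \citet{FrankNilesWeed23minimax} the measures, the couplings, and $\hat\eta$ are built jointly rather than in the sequential order your outline suggests. Your skeleton---duality, couplings, monotonicity, surrogate strong duality---is right, but the step the paper outsources is exactly the step you leave open.
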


        The function $\hat \eta$ can be viewed as the conditional probability of label $+1$ under an `optimal' adversarial attack \citep{FrankNilesWeed23minimax}.
    Just as in the standard learning scenario, the function $\alpha(\hat \eta(\bx))$ may be $\ov\Rset$-valued.  \Cref{it:minimizer_construct} is actually a consequence of \cref{it:a.e._hat_eta} and \cref{it:I_S_hat_eta}: \cref{it:a.e._hat_eta} and \cref{it:I_S_hat_eta} imply that $\prm(\alpha_\phi(\hat \eta))=\dl(\PP_0^*,\PP_1^*)$ and \cref{thm:strong_duality_surrogate} then implies that $\alpha_\phi(\hat \eta)$ minimizes $\prm$ and $\PP_0^*$, $\PP_1^*$ maximize $\dl$. (A similar argument is provided later in this paper in \cref{lemma:td_alpha_phi} of \cref{app:alpha_phi_zeros}.) Furthermore, the relation $\prm(\alpha_\phi(\hat \eta))=\dl(\PP_0^*,\PP_1^*)$ also implies
    \begin{lemma}\label{lemma:special_P_i_s_maximizers}
        The $\PP_0^*$, $\PP_1^*$ of \cref{prop:hat_eta} maximize $\dl$ over $\Wball \e(\PP_0)\times \Wball \e(\PP_1)$ for every $\phi$.
    \end{lemma}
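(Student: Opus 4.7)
My plan is to combine the identity $\prm(\alpha_\phi(\hat\eta)) = \dl(\PP_0^*, \PP_1^*)$ — which the paragraph preceding the lemma attributes to properties~\ref{it:a.e._hat_eta} and~\ref{it:I_S_hat_eta} of \cref{prop:hat_eta} — with the strong-duality equality of \cref{thm:strong_duality_surrogate}. The crucial observation is that $\PP_0^*, \PP_1^*$, and $\hat\eta$ are produced by \cref{prop:hat_eta} without reference to $\phi$, so a single pair of measures will serve simultaneously as a maximizer for every loss satisfying \cref{as:phi}.

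I would first verify the identity by direct calculation. Expanding the definition,
\[\dl(\PP_0^*, \PP_1^*) \;=\; \int C_\phi^*(\eta^*)\, d\PP^* \;=\; \int \phi(\alpha_\phi(\eta^*))\, d\PP_1^* + \int \phi(-\alpha_\phi(\eta^*))\, d\PP_0^*,\]
where the second equality uses that $\alpha_\phi(\eta^*)$ attains the infimum in $C_\phi^*(\eta^*)$ together with $d\PP_1^* = \eta^* d\PP^*$ and $d\PP_0^* = (1-\eta^*)\, d\PP^*$. Property~\ref{it:a.e._hat_eta} lets me replace $\eta^*$ by $\hat\eta$ in both integrands; then the couplings $\gamma_0^*, \gamma_1^*$ supplied by property~\ref{it:I_S_hat_eta}, together with the monotonicity of $\phi \circ \alpha_\phi$ and $\phi \circ (-\alpha_\phi)$ guaranteed by \cref{lemma:smallest_minimizer_function}, allow $S_\e$ to commute with these compositions and transport the $\PP_i^*$-integrals back to $\PP_i$-integrals. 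The result assembles into $\int S_\e(\phi \circ \alpha_\phi(\hat\eta))\, d\PP_1 + \int S_\e(\phi \circ (-\alpha_\phi(\hat\eta)))\, d\PP_0 = \prm(\alpha_\phi(\hat\eta))$.

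With the identity in hand, the conclusion is immediate: property~\ref{it:minimizer_construct} of \cref{prop:hat_eta} gives $\prm(\alpha_\phi(\hat\eta)) = \inf_f \prm(f)$, and \cref{thm:strong_duality_surrogate} equates this infimum with the supremum of $\dl$ over $\Wball\e(\PP_0) \times \Wball\e(\PP_1)$. Chaining,
\[\dl(\PP_0^*, \PP_1^*) \;=\; \prm(\alpha_\phi(\hat\eta)) \;=\; \inf_f \prm(f) \;=\; \sup_{\PP_0' \in \Wball\e(\PP_0),\, \PP_1' \in \Wball\e(\PP_1)} \dl(\PP_0', \PP_1'),\]
so $(\PP_0^*, \PP_1^*)$ attains the supremum, and this holds verbatim for every admissible $\phi$. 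The main obstacle is the bookkeeping in the first step: because $\alpha_\phi$ can be $\ov\Rset$-valued and discontinuous, commuting $S_\e$ past $\alpha_\phi$ gives only one-sided inequalities in general; equality relies on property~\ref{it:I_S_hat_eta}'s guarantee that the extremal values of $\hat\eta$ on $\ov{B_\e(\bx)}$ are actually attained at the coupled point $\bx'$, pinning the equalities down. This is the technical content that \cref{lemma:td_alpha_phi} (referenced in the paragraph immediately above) formalizes.
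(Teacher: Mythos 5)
Your argument is correct and is essentially the same as the one the paper sketches in the paragraph preceding the lemma: establish $\prm(\alpha_\phi(\hat\eta)) = \dl(\PP_0^*, \PP_1^*)$ from properties~\ref{it:a.e._hat_eta} and~\ref{it:I_S_hat_eta}, then invoke \cref{thm:strong_duality_surrogate}. Note that the paper deliberately omits a formal proof of this lemma (deferring to Lemma~26 and Theorem~9 of \citep{FrankNilesWeed23minimax}), so your write-up fills in exactly the intended chain, including the key observation that $\hat\eta$, $\PP_0^*$, $\PP_1^*$, and the couplings are constructed independently of $\phi$.
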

    We emphasize that a formal proof of \cref{thm:hat_eta_adv_baye,lemma:special_P_i_s_maximizers} is not included in this paper, and refer to Lemma~26 and Theorem~9 of \citep{FrankNilesWeed23minimax} for full arguments.

    Next, we derive some further results about the function $\hat \eta$. Recall that Bayes classifiers can be constructed by thesholding the conditional probability $\eta$ at $1/2$, see \Cref{eq:Bayes_classifiers}.  The function $\hat \eta$ plays an analogous role for adversarial learning. 
    \begin{theorem}\label{thm:adv_Bayes_construct}
        Let $\hat \eta$ be the function described by \cref{thm:alpha_phi_hat_eta}. Then the sets $\{\hat \eta> 1/2\}$ and $\{\hat \eta\geq 1/2\}$ are adversarial Bayes classifiers. Furthermore, any adversarial Bayes classifier $A$ satisfies
        \begin{equation}\label{mal_maximal_1_uc}
            \int S_\e(\one_{\{\hat \eta \geq 1/2\}^C})d\PP_1\leq \int S_\e(\one_{A^C})d\PP_1\leq \int S_\e(\one_{\{\hat \eta >1/2)^C})d\PP_1    
        \end{equation}
        and
        \begin{equation}\label{mal_maximal_0_uc}
            \int S_\e(\one_{\{\hat \eta>1/2\}})d\PP_0\leq \int S_\e(\one_{A})d\PP_0\leq \int S_\e(\one_{\{\hat \eta \geq 1/2\}})d\PP_0
        \end{equation}
        
    \end{theorem}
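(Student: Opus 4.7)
The plan is to use \cref{thm:alpha_phi_hat_eta}(III) with the $\rho$-margin loss $\phi_\rho(\alpha) = \min(\max(0, 1 - \alpha/\rho), 1)$ from \cref{fig:losses} as a bridge between the surrogate risk and the adversarial classification risk. A direct computation gives $\alpha_{\phi_\rho}(\eta) = \rho$ for $\eta > 1/2$ and $\alpha_{\phi_\rho}(\eta) = -\infty$ for $\eta \leq 1/2$, along with $C_{\phi_\rho}^* = C^*$, so that $\bar R_{\phi_\rho} = \cdl$. Setting $f^\sharp := \alpha_{\phi_\rho}(\hat\eta)$ yields $\phi_\rho \circ f^\sharp = \one_{\{\hat\eta > 1/2\}^C}$ and $\phi_\rho \circ (-f^\sharp) = \one_{\{\hat\eta > 1/2\}}$, so $R^\e_{\phi_\rho}(f^\sharp) = \cprm(\{\hat\eta > 1/2\})$. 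For an arbitrary Borel set $A$, the extended-real-valued function $f_A := \rho\,\one_A - \infty\cdot\one_{A^C}$ satisfies $R^\e_{\phi_\rho}(f_A) = \cprm(A)$ by the same substitution. Because $f^\sharp$ minimizes $R^\e_{\phi_\rho}$ over $\ov\Rset$-valued functions by \cref{thm:alpha_phi_hat_eta}(III), we obtain $\cprm(\{\hat\eta > 1/2\}) = R^\e_{\phi_\rho}(f^\sharp) \leq R^\e_{\phi_\rho}(f_A) = \cprm(A)$, so $\{\hat\eta > 1/2\}$ is an adversarial Bayes classifier.

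Next, I would handle $\{\hat\eta \geq 1/2\}$ by a label-swap symmetry: apply \cref{thm:alpha_phi_hat_eta} to the swapped distribution $(\tilde\PP_0, \tilde\PP_1) := (\PP_1, \PP_0)$, and observe that the candidates $\tilde{\hat\eta} := 1 - \hat\eta$, $\tilde\PP_0^* := \PP_1^*$, $\tilde\PP_1^* := \PP_0^*$, $\tilde\gamma_0^* := \gamma_1^*$, $\tilde\gamma_1^* := \gamma_0^*$ verify properties (I)--(II) for the swapped problem via the identity $I_\e(1-g) = 1 - S_\e(g)$. The previous paragraph then identifies $\{\tilde{\hat\eta} > 1/2\} = \{\hat\eta < 1/2\}$ as an adversarial Bayes classifier for $(\tilde\PP_0, \tilde\PP_1)$; using $R^\e(A; \PP_0, \PP_1) = R^\e(A^C; \PP_1, \PP_0)$, the complement $\{\hat\eta \geq 1/2\}$ is an adversarial Bayes classifier for the original problem.

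For \cref{mal_maximal_1_uc,mal_maximal_0_uc}, let $A$ be any adversarial Bayes classifier and assemble the chain
\[
\cprm(A) \,\geq\, \int \one_{A^C}\,d\PP_1^* + \int \one_A\,d\PP_0^* \,\geq\, \int C^*(\eta^*)\,d\PP^* \,=\, \cdl(\PP_0^*, \PP_1^*) \,=\, \inf_{A'}\cprm(A'),
\]
where the first inequality is \cref{lemma:S_e_and_W_inf_uc} applied term-by-term, the second is the pointwise Bayes bound, and the final equality combines \cref{lemma:special_P_i_s_maximizers} applied to $\phi_\rho$ (so that $\PP_0^*, \PP_1^*$ maximize $\bar R_{\phi_\rho} = \cdl$) with \cref{thm:strong_duality_classification}. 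Because $\cprm(A) = \inf_{A'}\cprm(A')$ by hypothesis, every inequality collapses to an equality. The second collapse forces $\one_A$ to minimize $C(\eta^*, \cdot)$ at $\PP^*$-a.e.\ $\bx$, which together with $\hat\eta = \eta^*$ $\PP^*$-a.e.\ yields $\{\hat\eta > 1/2\} \subseteq A \subseteq \{\hat\eta \geq 1/2\}$ $\PP^*$-a.e. The first collapse, combined with the individual term-wise $\geq$ from \cref{lemma:S_e_and_W_inf_uc}, forces $\int S_\e(\one_{A^C})\,d\PP_1 = \int \one_{A^C}\,d\PP_1^*$ and $\int S_\e(\one_A)\,d\PP_0 = \int \one_A\,d\PP_0^*$. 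Applying these identities to $A$ and to both extremes $\{\hat\eta > 1/2\}$ and $\{\hat\eta \geq 1/2\}$ (all adversarial Bayes by the preceding paragraphs), and then monotonically ordering the $\PP_i^*$ integrals through the $\PP^*$-a.e.\ inclusion, delivers \cref{mal_maximal_1_uc,mal_maximal_0_uc}.

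The main obstacle is the adversarial Bayes identification in the first paragraph: engineering a loss whose $\alpha_\phi$ takes only the two values $\{\rho, -\infty\}$ so that both $f^\sharp$ and an arbitrary mimic $f_A$ have surrogate risks literally equal to the corresponding adversarial classification risks, allowing \cref{thm:alpha_phi_hat_eta}(III) to transfer optimality across risks. A minor technical point is the use of $\ov\Rset$-valued functions, which is justified by $\inf_{\ov\Rset\text{-valued }f} R^\e_\phi(f) = \inf_{\Rset\text{-valued }f} R^\e_\phi(f)$ noted just after \cref{thm:strong_duality_surrogate}; Step 3 is then routine chasing of equalities.
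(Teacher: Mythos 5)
Your proof is correct, and it takes a genuinely different route from the paper's. For the first claim, the paper establishes that $\{\hat\eta > 1/2\}$ and $\{\hat\eta \geq 1/2\}$ are adversarial Bayes classifiers by working directly from properties \ref{it:a.e._hat_eta}--\ref{it:I_S_hat_eta}: it shows $S_\e(\one_{\{\hat\eta > 1/2\}^C})(\bx) = \one_{\{\hat\eta(\bx') > 1/2\}^C}$ holds $\gamma_1^*$-a.e.\ (which requires the observation that $\hat\eta$ attains its infimum on $\ov{B_\e(\bx)}$), integrates to get $\cprm(\{\hat\eta > 1/2\}) = \cdl(\PP_0^*, \PP_1^*)$, and invokes \cref{thm:strong_duality_classification}. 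You instead treat \ref{it:minimizer_construct} as a black box and instantiate it with the $\rho$-margin loss, exploiting the exact identities $\phi_\rho \circ \alpha_{\phi_\rho}(\hat\eta) = \one_{\{\hat\eta > 1/2\}^C}$ and $\bar R_{\phi_\rho} = \cdl$, which makes the transfer from surrogate to classification risk purely algebraic and avoids any argument about commuting $S_\e$ with thresholding; the label-swap symmetry for $\{\hat\eta \geq 1/2\}$ is equally valid (the paper just says ``the argument is analogous''). For the inequalities \cref{mal_maximal_1_uc,mal_maximal_0_uc}, the paper cites the complementary slackness conditions of \cref{thm:complementary_slackness_classification_uc} imported from \citet{Frank2024uniqueness}, whereas you re-derive exactly those two conditions (term-wise tightness of \cref{lemma:S_e_and_W_inf_uc}, and $\one_{\{\eta^*>1/2\}} \leq \one_A \leq \one_{\{\eta^*\geq 1/2\}}$ $\PP^*$-a.e.) by forcing the full inequality chain $\cprm(A) \geq \int\one_{A^C}d\PP_1^* + \int\one_A d\PP_0^* \geq \cdl(\PP_0^*,\PP_1^*) = \inf_{A'}\cprm(A')$ to collapse — a more self-contained argument that only needs \cref{lemma:S_e_and_W_inf_uc,lemma:special_P_i_s_maximizers,thm:strong_duality_classification}. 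The trade-off is that the paper's proof reuses machinery already needed elsewhere (\cref{thm:complementary_slackness_classification_uc} is invoked again in \cref{app:uniqueness_equivalences_1}), while yours requires less external scaffolding and makes the role of property \ref{it:minimizer_construct} more transparent.
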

    See \cref{app:adv_bayes_construct} for a formal proof; these properties follow direction from \cref{it:a.e._hat_eta} and \cref{it:I_S_hat_eta} of \cref{thm:hat_eta_adv_baye}. \Cref{mal_maximal_1_uc,mal_maximal_0_uc} imply that the sets $\{\hat \eta >1/2\}$ and $\{\hat \eta \geq 1/2\}$ can be viewed as `minimal' and `maximal' adversarial Bayes classifiers.

\Cref{thm:adv_Bayes_construct} is proved in \cref{app:adv_bayes_construct}-- \cref{it:a.e._hat_eta} and \cref{it:I_S_hat_eta} imply that $ \cprm(\{\hat \eta >1/2\})=\cdl(\PP_0^*,\PP_1^*)=\cprm(\hat \eta \geq 1/2\})$ and consequently \cref{thm:strong_duality_classification} implies that $\{\hat \eta >1/2\}$, $\{\hat \eta \geq 1/2\}$ minimize $\cprm$ and $\PP_0^*$, $\PP_1^*$ maximize $\cdl$. This proof technique is analogous to the approach employed by \citep{FrankNilesWeed23minimax} to establish \cref{thm:hat_eta_adv_baye}. 
Lastly, uniqueness up to degeneracy can be characterized in terms of these $\PP_0^*$, $\PP_1^*$.

\begin{theorem}\label{thm:uniqueness_equivalences_1}
    Assume that $\PP$ is absolutely continuous with respect to Lebesgue measure. Then the following are equivalent:
    \begin{enumerate}[label=\Alph*)]
            \item \label{it:unique_under_deg_1} The adversarial Bayes classifier is unique up to degeneracy
            
            \item \label{it:eta_*_meas_zero} $\PP^*(\eta^*=1/2)=0$, where $\PP^*=\PP_0^*+\PP_1^*$ and $\eta^*=d\PP_1^*/d\PP^*$ for the measures $\PP_0^*,\PP_1^*$ of \cref{prop:hat_eta}.  
    \end{enumerate}

\end{theorem}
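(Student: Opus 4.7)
The plan is to translate the adversarial Bayes property into three a.e.\ conditions using the duality of \cref{thm:strong_duality_classification} and the special measures $\PP_0^*, \PP_1^*$ of \cref{prop:hat_eta}, and then handle each direction of the equivalence separately. For any Borel set $A$, applying \cref{lemma:S_e_and_W_inf_uc} with $\PP_i^*$ in place of $\QQ'$, and using the pointwise inequality $C(\eta^*, \one_A) \geq C^*(\eta^*)$, gives
\[
R^\e(A) \geq \int \one_{A^C}\,d\PP_1^* + \int \one_A\,d\PP_0^* = \int C(\eta^*, \one_A)\,d\PP^* \geq \int C^*(\eta^*)\,d\PP^*,
\]
and by \cref{thm:strong_duality_classification} together with \cref{lemma:special_P_i_s_maximizers} the rightmost quantity equals $\inf_A R^\e(A)$. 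Thus $A$ is Bayes iff both inequalities are equalities, which I would unpack into: (i) $\one_A$ minimizes $C(\eta^*, \cdot)$ on $\{0,1\}$ $\PP^*$-a.e., equivalently $A \supseteq \{\eta^* > 1/2\}$ and $A \cap \{\eta^* < 1/2\} = \emptyset$ modulo $\PP^*$-null sets; (ii) $\bx' \in A$ implies $B_\e(\bx) \subseteq A$ for $\gamma_1^*$-a.e.\ $(\bx, \bx')$; and (iii) the symmetric statement $\bx' \in A^C$ implies $B_\e(\bx) \subseteq A^C$ for $\gamma_0^*$-a.e.\ $(\bx, \bx')$.

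For the direction B $\Rightarrow$ A, the assumption $\PP^*(\eta^* = 1/2) = 0$ makes (i) pin every Bayes classifier down to $\{\eta^* > 1/2\}$ modulo a $\PP^*$-null set, so any two Bayes classifiers $A_1, A_2$ satisfy that $A_1 \Delta A_2$ is $\PP^*$-null and hence $\PP_i^*$-null for $i = 0, 1$. For any set $A$ with $A_1 \cap A_2 \subseteq A \subseteq A_1 \cup A_2$, a routine check gives $A \Delta A_1 \subseteq A_1 \Delta A_2$, so (i) passes to $A$. To verify (ii) for $A$: since $\PP_1^*$ is the $\bx'$-marginal of $\gamma_1^*$, $\gamma_1^*$-almost surely $\bx' \notin A_1 \Delta A_2$, and hence $\bx' \in A$ forces $\bx' \in A_1 \cap A_2$. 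Applying (ii) to both $A_1$ and $A_2$ yields $B_\e(\bx) \subseteq A_1 \cap A_2 \subseteq A$. The same reasoning with $\gamma_0^*$ handles (iii), so $A$ is Bayes.

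For the direction A $\Rightarrow$ B I argue the contrapositive: assume $\PP^*(\hat\eta = 1/2) > 0$ with, WLOG, $\PP_1^*(\{\hat\eta = 1/2\}) > 0$, and exhibit an intermediate set of the two extreme Bayes classifiers $\{\hat\eta > 1/2\}$ and $\{\hat\eta \geq 1/2\}$ (both Bayes by \cref{thm:adv_Bayes_construct}) that is not Bayes. For $\gamma_1^*$-a.e.\ pair $(\bx, \bx')$ with $\bx' \in \{\hat\eta = 1/2\}$, \cref{thm:hat_eta_adv_baye} gives $\sup_{B_\e(\bx)} \hat\eta = 1/2$, while (ii) applied to $\{\hat\eta \geq 1/2\}$ gives $\inf_{B_\e(\bx)} \hat\eta \geq 1/2$; hence $B_\e(\bx) \subseteq \{\hat\eta = 1/2\}$. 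For a Borel $E \subseteq \{\hat\eta = 1/2\}$, condition (ii) for $A = \{\hat\eta > 1/2\} \cup E$ then reduces to: whenever $\bx' \in E$ for a $\gamma_1^*$-relevant pair, the entire ball $B_\e(\bx)$ (which has positive Lebesgue measure) lies in $E$. I would choose $E$ violating this in one of two regimes. In the atomic regime, $E = \{\bx_0\}$ for an atom $\bx_0$ of $\PP_1^*|_{\{\hat\eta = 1/2\}}$ fails (ii) immediately since a ball cannot lie in a singleton. In the non-atomic regime, take $E = \{\hat\eta = 1/2\} \cap H$ for a half-space $H$ chosen so that both $E$ and $\{\hat\eta = 1/2\} \setminus E$ carry positive $\PP_1^*$-mass and the boundary of $H$ genuinely crosses $B_\e(\bx)$ for a positive $\gamma_1^*$-mass of pairs. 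The absolute continuity of $\PP$ enters here because the $\bx$-marginal of $\gamma_1^*$ restricted to $\{\bx' \in \{\hat\eta = 1/2\}\}$ is dominated by $\PP_1 \ll$ Lebesgue, which keeps the relevant $\bx$ locations from collapsing into a lower-dimensional set.

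The main obstacle is the A $\Rightarrow$ B direction, particularly the existence of the slicing half-space in the non-atomic regime: one must use absolute continuity of the $\bx$-marginal together with a Fubini / monotone-cumulative argument to place $\partial H$ so that both sides carry positive $\gamma_1^*$-mass while a positive fraction of the balls $B_\e(\bx)$ straddles $\partial H$. The decomposition of $\PP_1^*|_{\{\hat\eta = 1/2\}}$ into atomic and continuous parts gives a clean way to treat both regimes uniformly.
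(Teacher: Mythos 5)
Your proof takes a genuinely different route from the paper. The paper proves \cref{thm:uniqueness_equivalences_1} by invoking \cite[Theorem~3.4]{Frank2024uniqueness} (restated as \cref{thm:uniqueness_equivalences_2}), which rephrases uniqueness up to degeneracy as ``the value of $\int S_\e(\one_A)\,d\PP_0$ (equivalently of $\int S_\e(\one_{A^C})\,d\PP_1$) is the same for all adversarial Bayes classifiers $A$,'' and then simply checks that this value is pinned down iff $\PP^*(\eta^*=1/2)=0$ using \cref{thm:adv_Bayes_construct,thm:complementary_slackness_classification_uc}. You bypass that black box entirely by deriving the three complementary-slackness conditions (i)--(iii) from the duality chain and arguing directly with them. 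Your B $\Rightarrow$ A argument is correct, clean, and actually re-proves a piece of machinery the paper takes for granted (that equivalence up to degeneracy behaves like an equivalence relation); it is a nice self-contained alternative. The price you pay is in the A $\Rightarrow$ B direction, where the paper gets the implication almost for free and you instead construct an explicit non-Bayes sandwich set.

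The gap is in the non-atomic regime of A $\Rightarrow$ B, and it is a real one. You need to find $E=F\cap H$ with $F=\{\hat\eta=1/2\}$ so that for a positive $\gamma_1^*$-mass of pairs $(\bx,\bx')$, both $\bx'\in E$ (i.e. $\langle u,\bx'\rangle\leq t$) and $\ov{B_\e(\bx)}\not\subseteq H$ (i.e. $\langle u,\bx\rangle>t-\e$). Since the coupling satisfies $\langle u,\bx\rangle\geq\langle u,\bx'\rangle-\e$ with only a \emph{non-strict} inequality, it is entirely consistent with the constraints that, for a particular choice of $(u,t)$, every pair with $\bx'\in E$ has $\langle u,\bx\rangle\leq t-\e$, so that the ball sits flush against the boundary of $H$ and never crosses it. ``Choose $H$ so that a positive fraction of balls straddles $\partial H$'' therefore does not follow from merely splitting the mass; you have to rule out the degenerate configuration. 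One way to close the gap: consider \emph{both} $A=\{\hat\eta>1/2\}\cup (F\cap H)$ and the complementary slice $A'=\{\hat\eta>1/2\}\cup(F\cap H^C)$. If both are Bayes, then $\gamma_1^*$-a.e.\ pair with $\bx'\in F$ has $\langle u,\bx\rangle\notin(t-\e,t+\e]$; letting $t$ range over rationals in $(a,b)$ (where $a<b$ are, e.g., chosen from two distinct points of $\supp(\PP_1^*|_F)$ along $u$) covers $(a-\e,b+\e)$, while the $\bx$-marginal is $\ll$ Lebesgue and supported in $\{\langle u,\cdot\rangle\in[a-\e,b+\e]\}$, giving a contradiction. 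Your sketch gestures at this with ``Fubini / monotone-cumulative argument,'' but the essential step of using both half-spaces and a countable union over offsets to force the contradiction with absolute continuity is not in the proposal, and without it the construction does not go through.
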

See \cref{app:uniqueness_equivalences_1} for a proof of \cref{thm:uniqueness_equivalences_1}.
In relation to prior work--- the proof of \citep[Theorem~3.4]{Frank2024uniqueness} shows \cref{thm:uniqueness_equivalences_1} but a full proof of \cref{thm:uniqueness_equivalences_1} is included in this paper for clarity as \citep{Frank2024uniqueness} did not discuss the role of the function $\hat \eta$.

\section{Uniqueness up to Degeneracy implies Consistency}\label{sec:Uniquness_implies_consistency}
\label{sec:uniqueness_implies_consistency_proof}
    Before presenting the full proof of consistency, we provide an overview of the strategy of this argument.
 First, a minimizing sequence of $\prm$ must satisfy the approximate complementary slackness conditions derived in \citep[Proposition~4]{FrankNilesWeed23consistency}.
            \begin{proposition}
\label{prop:approx_complementary_slackness_phi}
Assume that the measures $\PP_0^*\in \Wball \e(\PP_0)$, $\PP_1^*\in \Wball \e(\PP_1)$ maximize $\dl$. Then any minimizing sequence $f_n$ of $\prm$ must satisfy
  		\begin{equation}\label{eq:C_comp_slack_approx}
			\lim_{n\to \infty} \int C_\phi(\eta^*,f_n)d\PP^*= \int C_\phi^*(\eta^*)d\PP^*
		\end{equation}
		\begin{equation}\label{eq:sup_comp_slack_approx}
			\lim_{n\to \infty} \int S_\e(\phi \circ f_n)d\PP_1-\int \phi \circ f_nd\PP_1^*=0	,\quad 	     \lim_{n\to \infty} \int S_\e(\phi \circ -f_n) d\PP_0- \int \phi \circ -f_nd\PP_0^*=0,
		\end{equation}
    where $\PP^*=\PP_0^*+\PP_1^*$ and $\eta^*=d\PP_1^*/d\PP^*$.
\end{proposition}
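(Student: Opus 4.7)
The plan is to decompose the suboptimality $\prm(f_n)-\inf_f \prm(f)$ into a sum of three non-negative quantities, each of which matches one of the asserted limits. Since $f_n$ is a minimizing sequence the left-hand side tends to zero, forcing each summand to vanish.

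First I would extend \cref{lemma:S_e_and_W_inf_uc} from indicators to the functions $\phi\circ f_n$ and $\phi\circ(-f_n)$. The proof is identical: for couplings $\gamma_i^*\in\Pi(\PP_i,\PP_i^*)$ supported on $\Delta_\e$ (which exist since $\PP_i^*\in\Wball{\e}(\PP_i)$), one has $S_\e(\phi\circ f_n)(\bx)\geq \phi(f_n(\bx'))$ whenever $(\bx,\bx')\in\Delta_\e$. Integrating against $\gamma_1^*$ yields
\[\int S_\e(\phi\circ f_n)\,d\PP_1\;\geq\;\int \phi\circ f_n\,d\PP_1^*,\]
and symmetrically $\int S_\e(\phi\circ -f_n)\,d\PP_0\geq \int \phi\circ -f_n\,d\PP_0^*$. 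Summing these, invoking the pointwise identity $\phi(f)\,d\PP_1^*+\phi(-f)\,d\PP_0^*=C_\phi(\eta^*,f)\,d\PP^*$, and applying the trivial bound $C_\phi(\eta^*,f_n)\geq C_\phi^*(\eta^*)$ produces the chain
\[\prm(f_n)\;\geq\;\int C_\phi(\eta^*,f_n)\,d\PP^*\;\geq\;\int C_\phi^*(\eta^*)\,d\PP^*\;=\;\dl(\PP_0^*,\PP_1^*).\]
By hypothesis $\PP_0^*,\PP_1^*$ maximize $\dl$, so the rightmost quantity equals $\inf_f \prm(f)$ via \cref{thm:strong_duality_surrogate}.

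Rearranging, the gap splits as
\[\prm(f_n)-\inf_f\prm(f)\;=\;A_n+B_n+C_n,\]
where $A_n=\int S_\e(\phi\circ f_n)\,d\PP_1-\int \phi\circ f_n\,d\PP_1^*$, $B_n=\int S_\e(\phi\circ -f_n)\,d\PP_0-\int \phi\circ -f_n\,d\PP_0^*$, and $C_n=\int C_\phi(\eta^*,f_n)\,d\PP^*-\int C_\phi^*(\eta^*)\,d\PP^*$. Each summand is non-negative by the inequalities above, and the left-hand side tends to zero since $f_n$ is a minimizing sequence. Hence $A_n,B_n,C_n\to 0$ individually, which is precisely the content of \cref{eq:sup_comp_slack_approx,eq:C_comp_slack_approx}.

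The only delicate point is ensuring that $S_\e(\phi\circ f_n)$ is a measurable function so that the integrals in $A_n$ and $B_n$ are well-defined; this is handled by the same universal measurability framework used to define $\prm$ itself, so I would simply invoke \cite[Section~3.3]{FrankNilesWeed23minimax}. Beyond this technicality the argument is essentially a three-way splitting exercise that converts strong duality into approximate complementary slackness.
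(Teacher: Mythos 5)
Your proof is correct, and the primal--dual gap decomposition $\prm(f_n)-\inf_f\prm(f)=A_n+B_n+C_n$ with each term non-negative (by the Wasserstein coupling bound and the definition of $C_\phi^*$) is exactly the canonical argument behind this complementary slackness result; the paper itself does not reprove the statement but cites Proposition~4 of \citep{FrankNilesWeed23consistency}, which carries out the same three-way splitting. No gaps.
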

We will show that when $\PP^*(\eta^*=1/2)=0$, every sequence of functions satisfying \cref{eq:C_comp_slack_approx} and \cref{eq:sup_comp_slack_approx} must minimize $\cprm$. Specifically, we will prove that 
every minimizing sequence $f_n$ of $\prm$ must satisfy
                \begin{equation}\label{eq:consistency_goal_1}
                    \limsup_{n\to \infty} \int S_\e(\one_{f_n\leq 0}) d\PP_1\leq \int \one_{\eta^*\leq \frac 12} d\PP_1^*,
                \end{equation}
                \begin{equation}\label{eq:consistency_goal_0}
                    \limsup_{n\to \infty} \int S_\e(\one_{f_n\geq 0}) d\PP_0\leq \int \one_{\eta^*\geq \frac 12} d\PP_0^*
                \end{equation}
 for the measures $\PP_0^*$, $\PP_1^*$ in \cref{prop:hat_eta}. Consequently, $\PP^*(\eta^*=1/2)=0$ would imply that $\limsup_{n\to \infty} R^\e(f_n)\leq \cdl(\PP_0^*,\PP_1^*)$ and the strong duality relation in \cref{thm:strong_duality_classification} implies that $f_n$ must in fact be a minimizing sequence of $\cprm$.

     Next, we summarize the argument establishing \Cref{eq:consistency_goal_1}. We make several simplifying assumptions in the following discussion. First, we assume that the functions $\phi$, $\alpha_\phi$ are strictly monotonic and that for each $\eta$, there is a unique value of $\alpha$ for which $\eta\phi(\alpha)+(1-\eta)\phi(-\alpha)=C_\phi^*(\eta)$. (For instance, the exponential loss $\phi(\alpha)=e^{-\alpha}$ satisfies these requirements.) Let $\gamma_1^*$ be a coupling between $\PP_1$ and $\PP_1^*$ supported on $\Delta_\e$. 
    
    Because $C_\phi(\eta^*,f_n)\geq C_\phi^*(\eta^*)$, the condition \cref{eq:C_comp_slack_approx} implies that $C_\phi(\eta^*,f_n)$ converges to $C_\phi^*(\eta^*)$ in $L^1(\PP^*)$, and the assumption that there is a single value of $\alpha$ for which  $\eta\phi(\alpha)+(1-\eta)\phi(-\alpha)=C_\phi^*(\eta)$ implies that the function $\phi(f_n(\bx'))$ must converge to $\phi(\alpha_\phi(\eta^*(\bx'))$ in $L^1(\PP_1^*)$. Similarly, because \cref{lemma:S_e_and_W_inf_uc} states that $S_\e(\phi \circ f_n)(\bx)\geq \phi \circ f_n(\bx')$ $\gamma_1^*$-a.e., \Cref{eq:sup_comp_slack_approx} implies that $S_\e(\phi\circ f_n)(\bx)-\phi\circ f_n(\bx')$ converges to 0 in $L^1(\gamma_1^*)$. Consequently $S_\e(\phi \circ f_n)(\bx)$ must converge to  $\phi(\alpha_\phi(\eta^*(\bx')))$ in $L^1(\gamma_1^*)$. As $L^1$ convergence implies convergence in measure \cite[Proposition~2.29]{folland}, one can conclude that 
    \begin{equation}\label{eq:convergence_in_measure_1}
        \lim_{n\to \infty} \gamma_1^*\big(S_\e(\phi\circ f_n)(\bx)-\phi\circ( \alpha_\phi(\hat \eta(\bx')))>c\big)=0
    \end{equation}
    for any $c>0$. The lower semi-continuity of $\alpha\mapsto \one_{\alpha\leq 0}$ implies that  $\int S_\e(\one_{f_n\leq 0})d\PP_1\leq \int \one_{S_\e(\phi(f_n))(\bx)\geq \phi(0)}d\PP_1$ and furthermore \cref{eq:convergence_in_measure_1} implies
    \begin{equation}\label{eq:assumptions_to_conclusion}
        \limsup_{n\to \infty} \int \one_{S_\e(\phi(f_n))(\bx)\geq \phi(0)}d\gamma_1^*\leq  \int \one_{\phi(\alpha_\phi(\eta^*(\bx')))<\phi(0)-c}d\gamma_1^* = \int \one_{\eta^*\geq \alpha_\phi^{-1}\circ \phi^{-1}(\phi(0)-c)}d\PP_1^*.    
    \end{equation}
    Next, we will also assume that $\alpha_\phi^{-1}$ is continuous and $\alpha_\phi(1/2)=0$. (The exponential loss satisfies these requirements as well.)
    Due to our assumptions on $\phi$ and $\alpha_\phi$, the quantity $\phi^{-1}(\phi(0)-c)$ is strictly smaller than 0, and consequently, $\alpha_\phi^{-1}\circ \phi^{-1}(\phi(0)-c)$ is strictly smalaler than $1/2$. However, if $\alpha_\phi^{-1}$ is continuous, one can choose $c$ small enough so that $\PP^*( |\eta-1/2|<1/2- \alpha_\phi^{-1}\circ \phi^{-1}(\phi(0)-c))<\delta$ for any $\delta>0$ when $\PP^*(\eta^*=1/2)=0$. This choice of $c$ along with \cref{eq:assumptions_to_conclusion} proves \cref{eq:consistency_goal_1}.

    To avoid the prior assumptions on $\phi$ and $\alpha$, we prove that when $\eta$ is bounded away from $1/2$ and $\alpha$ is bounded away from the minimizers of $C_\phi(\eta, \cdot)$, then $C_\phi(\eta,\alpha)$ is bounded away from $C_\phi^*(\eta)$.  
    
        \begin{lemma}\label{lemma:C_phi_uniform}
        Let $\phi$ be a consistent loss. For all $r>0$, there is a constant $k_r>0$ and an $\alpha_r>0$ for which if $|\eta -1/2|\geq r$ and $\sgn(\eta-1/2) \alpha \leq \alpha_r$ then $C_\phi(\eta, \alpha_r)-C_\phi^*(\eta)\geq k_r$, and this $\alpha_r$ satisfies $\phi(\alpha_r)<\phi(0)$. 
    \end{lemma}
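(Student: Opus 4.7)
The plan is to build $\alpha_r$ from consistency together with the monotonicity of $\alpha_\phi$ supplied by Lemma~\ref{lemma:smallest_minimizer_function}, and then extract $k_r$ by a compactness argument. Consistency of $\phi$ means $\inf_{\alpha \leq 0}C_\phi(\eta,\alpha) > C_\phi^*(\eta)$ for every $\eta > 1/2$, so no $\alpha \leq 0$ can be a minimizer of $C_\phi(\eta,\cdot)$; combined with the continuity of $C_\phi(\eta,\cdot)$ this forbids $\alpha_\phi(\eta) = 0$ and forces $\alpha_\phi(\eta) > 0$ strictly. Lemma~\ref{lemma:smallest_minimizer_function} then yields $\alpha_\phi(1/2+r) > 0$.

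I would next pick $\alpha_r \in (0, \alpha_\phi(1/2+r))$ also satisfying $\phi(\alpha_r) < \phi(0)$. Let $\alpha_* = \sup\{\alpha \geq 0 : \phi(\alpha) = \phi(0)\}$; consistency together with $\lim_{\alpha\to\infty}\phi(\alpha) = 0$ forces $\phi(0) > 0$ (otherwise $\phi \equiv 0$ on $[0,\infty)$, making $0$ a minimizer of $C_\phi(\eta,\cdot)$ and killing consistency), so $\alpha_* < \infty$. The critical step is the strict inequality $\alpha_* < \alpha_\phi(1/2+r)$: otherwise $\phi$ would be constant equal to $\phi(0)$ on $[0,\alpha_\phi(1/2+r)]$, and the monotonicity of $\phi$ on $[-\alpha_\phi(1/2+r),0]$ would give
\[C_\phi(1/2+r,\alpha_\phi(1/2+r)) = (1/2+r)\phi(0) + (1/2-r)\phi(-\alpha_\phi(1/2+r)) \geq \phi(0) = C_\phi(1/2+r,0),\]
making $0$ a minimizer of $C_\phi(1/2+r,\cdot)$ and contradicting $\alpha_\phi(1/2+r) > 0$ being the smallest. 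Hence the interval $(\alpha_*, \alpha_\phi(1/2+r))$ is non-empty, and any $\alpha_r$ in it satisfies $\phi(\alpha_r) < \phi(0)$ together with $\alpha_r < \alpha_\phi(\eta)$ for every $\eta \in [1/2+r,1]$ by monotonicity of $\alpha_\phi$.

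For the uniform gap, consider the function $F(\eta,\alpha) := C_\phi(\eta,\alpha) - C_\phi^*(\eta)$ on the compact product $[1/2+r,1] \times [-\infty,\alpha_r]$, where $F$ extends continuously to the boundary (the extension at $\alpha = -\infty$ uses $\lim_{\alpha\to-\infty}\phi(\alpha) \in [0,\infty]$, and continuity of $C_\phi^*$ on $[0,1]$ follows from concavity on the interior plus a direct estimate $C_\phi^*(\eta) \leq (1-\eta)\phi(-\alpha_0) + \eta\phi(\alpha_0)$ at a suitable $\alpha_0$ for the endpoints). At every point of this compact set one has $\alpha \leq \alpha_r < \alpha_\phi(\eta)$, so $\alpha$ is not a minimizer of $C_\phi(\eta,\cdot)$ and $F(\eta,\alpha) > 0$; a continuous strictly positive function on a compact set attains a positive minimum $k_r^+$. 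For $\eta \in [0, 1/2 - r]$ the symmetry $C_\phi(\eta,\alpha) = C_\phi(1-\eta,-\alpha)$ and $C_\phi^*(\eta) = C_\phi^*(1-\eta)$ reduces matters to the case just handled with $1-\eta$ and $-\alpha$ in place of $\eta$ and $\alpha$, producing a second constant $k_r^-$; set $k_r = \min(k_r^+, k_r^-)$.

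I expect the main obstacle to be the strictness $\alpha_* < \alpha_\phi(1/2+r)$. Without assuming convexity of $\phi$ one cannot appeal to the Bartlett--Jordan--McAuliffe derivative criterion $\phi'(0) < 0$, and instead one must argue directly, as above, that consistency forbids the smallest minimizer from lying inside a flat segment of $\phi$ at the origin, because the mirror flatness of $\phi$ on $[-\alpha_*,0]$ would otherwise let $\alpha = 0$ tie with $\alpha_\phi(1/2+r)$ as a minimizer. Secondary care is needed to justify the continuous extension of $F$ to the compact product set (particularly at $\eta = 1$ and at $\alpha = -\infty$), but both follow from elementary estimates using $\phi \geq 0$ and $\lim_{\alpha\to\infty}\phi(\alpha) = 0$.
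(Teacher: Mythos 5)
Your proof is correct and follows essentially the same strategy as the paper's: establish $\alpha_\phi(1/2+r) > 0$ from consistency, pick $\alpha_r \in (0,\alpha_\phi(1/2+r))$ with $\phi(\alpha_r) < \phi(0)$, and then obtain the uniform gap $k_r$ by minimizing the continuous positive function $(\eta,\alpha)\mapsto C_\phi(\eta,\alpha)-C_\phi^*(\eta)$ over the sequentially compact set $[1/2+r,1]\times[-\infty,\alpha_r]$, handling $\eta \leq 1/2-r$ by the symmetry $C_\phi(\eta,\alpha)=C_\phi(1-\eta,-\alpha)$. The only expository differences worth noting: you invoke the Bartlett--Jordan--McAuliffe characterization of consistency directly (that $\inf_{\alpha\le 0}C_\phi(\eta,\alpha)>C_\phi^*(\eta)$ for $\eta>1/2$), whereas the paper derives the weaker but sufficient Lemma~\ref{lemma:consistent_zeros} self-containedly and then supplements it with Lemma~\ref{lemma:minimizers_comparison}; and you introduce the auxiliary quantity $\alpha_*=\sup\{\alpha\ge 0:\phi(\alpha)=\phi(0)\}$ to certify that $\phi(\alpha_r)<\phi(0)$ is achievable, whereas the paper reaches the same conclusion more directly by observing $C_\phi(1/2+r,\alpha_\phi(1/2+r))<\phi(0)$ together with $\phi(-\alpha_\phi(1/2+r))\ge\phi(0)$, which forces $\phi(\alpha_\phi(1/2+r))<\phi(0)$. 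Your extra care about continuity of $C_\phi^*$ at the endpoints and of $F$ at $\alpha=-\infty$ is a reasonable addition that the paper leaves implicit. Both routes are sound; neither changes the mathematical content.
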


    See \cref{app:consistent_losses} for a proof. A minor modification of the argument above our main result:

    \begin{proposition}\label{prop:uniqueness_implies_consistency}
        Assume there exist $\PP_0^*\in \Wball \e (\PP_0)$, $\PP_1^*\in \Wball \e(\PP_1)$ that maximize $\dl$ for which $\PP^*(\eta^*=1/2)=0$. Then any consistent loss is adversarially consistent.  
    \end{proposition}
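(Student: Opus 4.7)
My plan is to follow the roadmap sketched above \cref{lemma:C_phi_uniform}, with \cref{lemma:C_phi_uniform} replacing the simplifying assumptions on $\phi$ made there. Given a minimizing sequence $f_n$ of $\prm$, the hypothesis that $\PP_0^*$, $\PP_1^*$ maximize $\dl$ lets me invoke \cref{prop:approx_complementary_slackness_phi}, yielding \cref{eq:C_comp_slack_approx,eq:sup_comp_slack_approx}. The core task is to derive \cref{eq:consistency_goal_1} from these; the companion bound \cref{eq:consistency_goal_0} then follows by the symmetric argument applied to $-f_n$, $\PP_0$, $\PP_0^*$. Summing the two bounds and invoking $\PP^*(\eta^*=1/2)=0$ rewrites the right-hand side as $\int C^*(\eta^*) d\PP^* = \cdl(\PP_0^*, \PP_1^*)$, which is at most $\inf_f \cprm(f)$ by \cref{thm:strong_duality_classification}. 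This forces $\lim_n \cprm(f_n) = \inf_f \cprm(f)$, as desired.

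\textbf{Deriving \cref{eq:consistency_goal_1}.} Fix $r > 0$ and let $k_r > 0$, $\alpha_r > 0$ be supplied by \cref{lemma:C_phi_uniform}, so $\phi(\alpha_r) < \phi(0)$. On the set $E_n := \{\eta^* > 1/2+r,\; f_n \leq \alpha_r\}$, \cref{lemma:C_phi_uniform} gives $C_\phi(\eta^*, f_n) - C_\phi^*(\eta^*) \geq k_r$; since \cref{eq:C_comp_slack_approx} says this nonnegative integrand has $\PP^*$-integral tending to $0$, it follows that $\PP_1^*(E_n) \to 0$. Next, fix a coupling $\gamma_1^*$ between $\PP_1$ and $\PP_1^*$ supported on $\Delta_\e$; the quantity $S_\e(\phi \circ f_n)(\bx) - \phi(f_n(\bx'))$ is nonnegative $\gamma_1^*$-a.e.\ (as $\bx' \in \ov{B_\e(\bx)}$), and \cref{eq:sup_comp_slack_approx} written through the coupling forces its $\gamma_1^*$-integral to $0$, hence the difference tends to $0$ in $\gamma_1^*$-measure. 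Choose $\delta > 0$ with $\phi(\alpha_r) + \delta < \phi(0)$. Chasing the supremum and using $\phi$ non-increasing gives the pointwise inclusion $\{S_\e(\one_{f_n \leq 0}) = 1\} \subseteq \{S_\e(\phi \circ f_n) \geq \phi(0)\}$, whence
\[\int S_\e(\one_{f_n \leq 0}) d\PP_1 \;\leq\; \gamma_1^*\{S_\e(\phi \circ f_n)(\bx) \geq \phi(0)\} \;\leq\; \gamma_1^*\{\phi(f_n(\bx')) \geq \phi(0) - \delta\} + o(1),\]
the last step using convergence in $\gamma_1^*$-measure. Since $\phi$ is non-increasing and $\phi(0)-\delta > \phi(\alpha_r)$, one has $\{\phi(f_n(\bx')) \geq \phi(0)-\delta\} \subseteq \{f_n(\bx') \leq \alpha_r\}$, so $\PP_1^*(\{f_n \leq \alpha_r\}) \leq \PP_1^*(E_n) + \PP_1^*(\eta^* \leq 1/2+r)$. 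Taking $\limsup_n$, then $r \to 0^+$, and invoking $\PP_1^*(\eta^*=1/2)=0$ yields \cref{eq:consistency_goal_1}.

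\textbf{Main obstacle.} The principal subtlety, relative to the informal sketch given in the text, is eliminating the extra regularity assumptions on $\phi$ and $\alpha_\phi$ (strict monotonicity, continuous $\alpha_\phi^{-1}$, unique minimizer of $C_\phi(\eta, \cdot)$). \cref{lemma:C_phi_uniform} is the tool designed precisely for this: rather than tracking $f_n$ through $\alpha_\phi^{-1}$, it converts ``$f_n$ sits on the wrong side of the classification threshold'' into a quantitative gap $C_\phi(\eta^*, f_n) - C_\phi^*(\eta^*) \geq k_r$ on $E_n$ that feeds directly into \cref{eq:C_comp_slack_approx}, while the guarantee $\phi(\alpha_r) < \phi(0)$ supplies the slack $\delta$ needed to pass from the surrogate-level convergence in measure back to control on $\{f_n \leq \alpha_r\}$. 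Every remaining step is bookkeeping with the couplings $\gamma_0^*$, $\gamma_1^*$, and no absolute-continuity assumption on $\PP$ is invoked anywhere, consistent with the statement of the proposition.
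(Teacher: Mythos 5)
Your proposal is correct and follows essentially the same route as the paper's proof: invoke the approximate complementary slackness conditions of \cref{prop:approx_complementary_slackness_phi}, pass to convergence in measure, use \cref{lemma:C_phi_uniform} to convert the surrogate gap into the classification bound, and close with strong duality from \cref{thm:strong_duality_classification}. The only differences are organizational—you fix $r$ and pass to the limit $r \to 0^+$ via continuity of measure, introducing an auxiliary tolerance $\delta < \phi(0)-\phi(\alpha_r)$, while the paper fixes an arbitrary $\delta$ and chooses $r$ so that $\PP^*(|\eta^*-1/2|<r)<\delta$; these bookkeeping choices are interchangeable and lead to the same conclusion.
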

    When $\PP$ is absolutely continuous with respect to Lebesgue measure, uniqueness up to degeneracy of the adversarial Bayes classifier implies the assumptions of this proposition due to \cref{thm:uniqueness_equivalences_1}. However, this result applies even to distributions which are not absolutely continuous with respect to Lebesgue measure. For instance, if the optimal classification risk is zero, the $\PP^*(\eta^*=1/2)=0$. To show this statement, notice that if $\inf_A \cprm(A)=0$, then \cref{thm:strong_duality_classification} implies that for any measures $\PP_0'\in \Wball \e(\PP_0)$, $\PP_1'\in \Wball \e(\PP_1)$, one can conclude that $\PP'(\eta'=1/2)=0$, where $\PP'=\PP_0'+\PP_1'$ and $\eta'=d\PP_1'/d\PP'$.
    \begin{proof}[Proof of \cref{prop:uniqueness_implies_consistency}]
   
    We will show that every minimizing sequence of $\prm$ must satisfy \cref{eq:consistency_goal_1} and \cref{eq:consistency_goal_0}. These equations together with the assumption $\PP^*(\eta^*=1/2)=0$ imply that 
        \[\limsup_{n\to\infty} \cprm(f_n)\leq \int \one_{\eta^*\leq \frac 12} d\PP_1^*+\int \one_{\eta^*\geq \frac 12} d\PP_0^*=\int \eta^*\one_{\eta^*\leq 1/2} +(1-\eta^*)\one_{\eta^*>1/2}d\PP^*=\cdl(\PP_0^*,\PP_1^*).\]
        The strong duality result of \cref{thm:strong_duality_classification} then implies that $f_n$ must be a minimizing sequence of $R^\e$.

        Let $\delta$ be arbitrary. Due to the assumption $\PP^*(\eta^*=1/2)=0$, one can pick an $r$ for which 
        \begin{equation}\label{eq:r_choice}
            \PP^*(|\eta^* -1/2|<r)<\delta.
        \end{equation}
        Next, let $\alpha_r$, $k_r$ be as in \cref{lemma:C_phi_uniform}.  
        Let $\gamma_i^*$ be couplings between $\PP_i$ and $\PP_i^*$ supported on $\Delta_\e$. \Cref{lemma:S_e_and_W_inf_uc} implies that $S_\e(\phi\circ f_n)(\bx)\geq \phi\circ f_n(\bx')$ $\gamma_1^*$-a.e., and thus \cref{eq:sup_comp_slack_approx} implies that $S_\e(\phi\circ f_n)(\bx)- \phi\circ f_n(\bx')$ converges to 0 in $L^1(\gamma_1^*)$. Because convergence in $L^1$ implies convergence in measure \cite[Proposition~2.29]{folland}, the quantity $S_\e(\phi\circ f_n)(\bx)- \phi\circ f_n(\bx')$ converges to 0 in $\gamma_1^*$-measure. Similarly, one can conclude that $S_\e(\phi\circ -f_n)(\bx)- \phi\circ -f_n(\bx')$ converges to zero in $\gamma_0^*$-measure. Analogously, as $C_\phi^*(\eta^*,f_n)\geq C_\phi^*(\eta^*)$, \Cref{eq:C_comp_slack_approx} implies that $C_\phi^*(\eta^*,f_n)$ converges in $\PP^*$-measure to $C_\phi^*(\eta^*)$. Therefore, \cref{prop:approx_complementary_slackness_phi} implies that one can choose $N$ large enough so that $n>N$ implies 
        \begin{equation}\label{eq:sup_approx_comp_slack_1}
            \gamma_1^*\Big( S_\e(\phi\circ f_n)(\bx)- \phi\circ f_n(\bx')\geq {\phi(0)-\phi(\alpha_r)} \Big)<\delta,
        \end{equation}
        \begin{equation}\label{eq:sup_approx_comp_slack_0}
            \gamma_0^*\Big( S_\e(\phi\circ -f_n)(\bx)- \phi\circ -f_n(\bx')\geq {\phi(0)-\phi(\alpha_r)} \Big)<\delta,
        \end{equation}
        and $\PP^*(C_\phi^*(\eta^*, f_n)>C_\phi^*(\eta^*)+k_r)<\delta  $. The relation $\PP^*(C_\phi^*(\eta^*, f_n)>C_\phi^*(\eta^*)+k_r)<\delta$ implies
        \begin{equation}\label{eq:in_measure_necessary}
            \PP^*\Big( |\eta^* -1/2|\geq r, f_n\sgn(\eta^* -1/2)\leq \alpha_r\Big)<\delta
        \end{equation}
        due to \cref{lemma:C_phi_uniform}. Because $\phi$ is non-increasing, $\one_{f_n\leq 0}\leq \one_{\phi\circ f_n\geq \phi(0)}$ and since the function $z \mapsto \one_{z\geq \phi(0)}$ is upper semi-continuous,
    \[\int S_\e(\one_{f_n\leq  0}) d\PP_1\leq \int \one_{S_\e(\phi \circ f_n)\geq \phi(0)} d\PP_1=\int \one_{S_\e(\phi\circ f_n)(\bx)\geq \phi(0)}d\gamma_1^* =\gamma_1^*\big(S_\e(\phi\circ f_n)(\bx)\geq \phi(0)\big).\]
    Now \cref{eq:sup_approx_comp_slack_1} implies that for $n>N$, $S_\e(\phi\circ f_n)(\bx)< (\phi \circ f_n)(\bx')+\phi(0)-\phi(\alpha_r)$ outside a set of $\gamma_1^*$-measure $\delta$ and thus
    \begin{equation}\label{eq:upper_1}
        \int S_\e(\one_{f_n\leq 0})d\PP_1\leq \gamma_1^*\big(\phi\circ f_n (\bx')+\phi(0)-\phi(\alpha_r)> \phi(0)\big)+\delta\leq \PP_1^*\big(\phi\circ f_n> \phi(\alpha_r)\big) +\delta 
    \end{equation}

    Next, the monotonicity of $\phi$ implies that $\PP_1^*(\phi \circ f_n(\bx')> \phi(\alpha_r))\leq \PP_1^*(f_n<\alpha_r)$ and thus  
    \begin{align}
        &\int S_\e(\one_{f_n\leq 0})d\PP_1\leq \PP_1^*(f_n<\alpha_r)+\delta\leq \PP_1^*(f_n<\alpha_r, |\eta^* -1/2|\geq r)+2\delta. \label{eq:upper_2}
    \end{align}
    by \cref{eq:r_choice}. Next, \cref{eq:in_measure_necessary} implies $\PP_1^*(\eta^*\geq 1/2+r, f_n\leq \alpha_r)<\delta$ and consequently
    \begin{align*}
        &\int S_\e(\one_{f_n\leq 0})d\PP_1\leq \PP_1^*(f_n<\alpha_r, \eta^* \leq 1/2 -r)+3\delta \leq \PP_1^*( \eta^*\leq 1/2)+3\delta.
    \end{align*}
    Because $\delta$ is arbitrary, this relation implies \cref{eq:consistency_goal_1}. 
    Next, to prove \cref{eq:consistency_goal_0}, observe that $\one_{f\geq 0}=\one_{-f\leq 0}$, and thus the inequalities \cref{eq:upper_1} and \cref{eq:upper_2} hold with $-f_n$ in place of $f_n$, $\PP_0,$ $\PP_0^*$, $\gamma_0^*$ in place of $\PP_1,\PP_1^*$, $\gamma_1^*$, and \cref{eq:sup_approx_comp_slack_0} in place of \cref{eq:sup_approx_comp_slack_1}. \Cref{eq:in_measure_necessary} further implies $\PP_1^*(\eta^*\leq 1/2-r, f_n\geq -\alpha_r)<\delta$, and these relations imply \cref{eq:consistency_goal_0}.  
    \end{proof}
\section{Consistency Requires Uniqueness up to Degeneracy}\label{sec:consistency_implies_uniqueness}

   We prove the reverse direction of \cref{thm:uniqueness_and_consistency} by constructing a sequence of functions $f_n$ that minimize $\prm$ for which $R^\e(f_n)$ is constant in $n$ and not equal to the minimal adversarial Bayes risk.

        \begin{proposition}\label{prop:consistency_reverse}
        Assume that $\PP$ is absolutely continuous with respect to Lebesgue measure and that the adversarial Bayes classifier is not unique up to degeneracy. Then any consistent loss $\phi$ satisfying $C_\phi^*(1/2)=\phi(0)$ is not adversarially consistent. 
    \end{proposition}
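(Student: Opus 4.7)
My plan is to produce a minimizing sequence $f_n$ of $\prm$ whose threshold set $\{f_n > 0\}$ is a fixed non--Bayes classifier $A$, so that $\cprm(f_n) = \cprm(A) > \inf \cprm$ stays bounded away from the optimum. First, \cref{thm:uniqueness_equivalences_1} combined with absolute continuity of $\PP$ translates non-uniqueness of the adversarial Bayes classifier into the statement $\PP^*(\eta^* = 1/2) > 0$ for the measures $\PP_0^*, \PP_1^*$ from \cref{thm:hat_eta_adv_baye}. Using $\hat\eta = \eta^*$ $\PP^*$-a.e., the set $\{\hat\eta = 1/2\}$ has positive $\PP^*$-measure. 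From this I would extract, via the mechanism underlying the direction $\PP^*(\eta^* = 1/2) > 0 \Rightarrow$ non-uniqueness in \cref{thm:uniqueness_equivalences_1} (equivalently, by analyzing the extreme Bayes classifiers provided by \cref{thm:adv_Bayes_construct}), a Borel set $B \subseteq \{\hat\eta = 1/2\}$ such that $A := \{\hat\eta > 1/2\} \cup B$ fails to be an adversarial Bayes classifier, i.e.\ $\cprm(A) > \inf \cprm$.

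Next I would exploit the hypothesis $C_\phi^*(1/2) = \phi(0)$ to pin down $\phi$ near zero. Since $\phi$ is non-increasing and
\[\tfrac{1}{2}\bigl(\phi(\alpha_\phi(1/2)) + \phi(-\alpha_\phi(1/2))\bigr) = C_\phi^*(1/2) = \phi(0),\]
each summand must equal $\phi(0)$, so $\phi \equiv \phi(0)$ on the (possibly degenerate) interval $[\alpha_\phi(1/2), -\alpha_\phi(1/2)]$; in particular $\alpha_\phi(1/2) \leq 0$ and $\phi(\alpha_\phi(1/2)) = \phi(0)$. With this in hand, set $f^* = \alpha_\phi \circ \hat\eta$, which is a minimizer of $\prm$ by \cref{thm:hat_eta_adv_baye}; since $\phi$ is consistent, $\alpha_\phi(\eta) > 0$ for $\eta > 1/2$, so $\{f^* > 0\} = \{\hat\eta > 1/2\}$. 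For a sequence $\epsilon_n \downarrow 0$ define
\[f_n = f^* \one_{\{\hat\eta \neq 1/2\}} + \epsilon_n \one_B - \epsilon_n \one_{\{\hat\eta = 1/2\} \setminus B}.\]
By construction $\{f_n > 0\} = \{\hat\eta > 1/2\} \cup B = A$, so $\cprm(f_n) = \cprm(A) > \inf \cprm$ for every $n$.

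It then remains to check $\prm(f_n) \to \prm(f^*) = \inf \prm$. The difference $\phi \circ f_n - \phi \circ f^*$ vanishes off $\{\hat\eta = 1/2\}$; on that set $\phi \circ f^* = \phi(\alpha_\phi(1/2)) = \phi(0)$, while $\phi \circ f_n = \phi(\pm \epsilon_n) \to \phi(0)$ by continuity of $\phi$. Thus $\|\phi \circ f_n - \phi \circ f^*\|_\infty \to 0$, and the same holds with $-f_n$ in place of $f_n$. The elementary bound $|S_\e(h)(\bx) - S_\e(h')(\bx)| \leq \|h - h'\|_\infty$ then yields uniform convergence $S_\e(\phi \circ f_n) \to S_\e(\phi \circ f^*)$ and $S_\e(\phi \circ -f_n) \to S_\e(\phi \circ -f^*)$; integrating against the finite measures $\PP_0, \PP_1$ gives $\prm(f_n) \to \prm(f^*)$. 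Hence $f_n$ is a minimizing sequence of $\prm$ failing to minimize $\cprm$, so $\phi$ is not adversarially consistent. The main obstacle in this plan is the first step---extracting the set $B$---which requires unpacking the characterization in \cref{thm:uniqueness_equivalences_1}; the remaining steps are a soft continuity argument once the algebraic observation on $\phi$ near zero is in place.
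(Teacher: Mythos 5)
Your plan has the right overall shape---build $f_n$ whose positivity set is a fixed non-optimal set $\tilde A$ lying between the extreme classifiers $\{\hat\eta>1/2\}$ and $\{\hat\eta\geq 1/2\}$, show $\prm(f_n)\to\inf\prm$, and conclude---and that is exactly the structure of the paper's argument. Your $L^\infty$/uniform-convergence reduction at the end is also a legitimate (indeed somewhat cleaner) alternative to the paper's pointwise-$\limsup$-plus-dominated-convergence computation. But there is a genuine gap in the key algebraic step.

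You claim that from $\tfrac12\bigl(\phi(\alpha_\phi(1/2))+\phi(-\alpha_\phi(1/2))\bigr)=\phi(0)$ ``each summand must equal $\phi(0)$,'' and hence $\phi(\alpha_\phi(1/2))=\phi(0)$. This is false. Take the hinge loss $\phi(\alpha)=\max(0,1-\alpha)$: here $C_\phi(1/2,\alpha)=\tfrac12\bigl(\phi(\alpha)+\phi(-\alpha)\bigr)\equiv 1$ on $[-1,1]$, so the minimizer set is $[-1,1]$, $\alpha_\phi(1/2)=-1$, and $\phi(\alpha_\phi(1/2))=\phi(-1)=2\neq 1=\phi(0)$. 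The average is $\phi(0)$ because the two summands $2$ and $0$ cancel around it; nothing forces them to coincide. Consequently your assertion that $\phi\circ f^* = \phi(0)$ on $\{\hat\eta=1/2\}$, and with it the $\|\phi\circ f_n-\phi\circ f^*\|_\infty\to 0$ claim, fails: with the hinge loss one would have $\phi\circ f^*=2$ there while $\phi\circ f_n\to 1$, so the $L^1$ gap in $\prm(f_n)$ does not close and $f_n$ need not be a minimizing sequence. The fix is exactly what the paper does: replace $\alpha_\phi(1/2)$ by $0$ in the minimizer---i.e.\ use $\tilde\alpha_\phi$ of \cref{eq:tilde_alpha_phi}---but then one must actually \emph{prove} that $\tilde\alpha_\phi(\hat\eta)$ is still a minimizer of $\prm$. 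That is a non-trivial statement (\cref{lemma:td_alpha_phi}), established via the strong-duality theorem \cref{thm:strong_duality_surrogate} after first checking the monotonicity of $\tilde\alpha_\phi$ (\cref{lemma:td_alpha_monotonic}); it does not follow from the smallest-minimizer definition of $\alpha_\phi$ alone. Once that lemma is in hand, your uniform-convergence argument goes through and is a perfectly good alternative to the paper's computation. Finally, the extraction of the set $B$ (equivalently $\tilde A$ with $\{\hat\eta>1/2\}\subset\tilde A\subset\{\hat\eta\geq 1/2\}$ not a Bayes classifier), which you flag as your ``main obstacle,'' is likewise not just a restatement of \cref{thm:uniqueness_equivalences_1}: it requires the separate \cref{lemma:not_degenerate_unique_adv_bayes}, proved via the characterization in \cref{thm:equivalence_up_to_degeneracy_characterization}, and then you should also note (via \cref{lemma:consistent_zeros}) that $\tilde\alpha_\phi(\eta)\neq 0$ for $\eta\neq 1/2$ so that $\{f_n>0\}=\tilde A$ exactly. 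A further small point: $f^*$ may take values $\pm\infty$, so to produce an $\Rset$-valued minimizing sequence you still need a truncation step as in \cref{lemma:threshholding}.
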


    We will outline the proof in this section below, and the full argument is presented in \cref{app:consistency_reverse}. Before discussing the components of this argument, we illustrate this construction using the adversarial Bayes classifiers depicted in \cref{fig:gaussians_differing_means}. In this example, when $\e\leq (\mu_1-\mu_0)/2$, two adversarial Bayes classifiers are $A_1=\emptyset$ and $A_2=\Rset$, as depicted in \cref{fig:gaussians_differing_means_b,fig:gaussians_differing_means_c}. These sets are not equivalent up to degeneracy, and in particular the set $\{1\}$ is not an adversarial Bayes classifier. However, the fact that both $\emptyset$ and $\Rset$ are adversarial Bayes classifiers suggests that $\hat \eta(x) \equiv 1/2$ on all of $\Rset$, and thus $f(x)\equiv 0$ is a minimizer of $\prm$. Consequently, the function sequence
    \[f_n(x)=\begin{cases}
        \frac 1 n &\text{if }x\in \{1\}\\
        -\frac 1n&\text{otherwise}
    \end{cases}\]
    is a minimizing sequence of $\prm$ for which $R^\e(f_n)$ is constant in $n$ and not equal to the adversarial Bayes risk.

   To generalize this construction to an arbitrary distribution, one must find a set $\tilde A$ contained between two adversarial Bayes classifiers that is not itself an adversarial Bayes classifier. The absolute continuity assumption is key in this step of the argument. 
    \Cref{thm:adv_Bayes_construct} together with a result of \citep{Frank2024uniqueness} imply that when $\PP$ is absolutely continuous with respect to Lebesgue measure, the adversarial Bayes classifier is unique iff $\{\hat \eta >1/2\}$ and $\{\hat \eta \geq 1/2\}$ are equivalent up to degeneracy, see \cref{app:consistency_reverse} for a proof. 
   \begin{lemma}\label{lemma:not_degenerate_unique_adv_bayes}
        Assume $\PP$ is absolutely continuous with respect to Lebesgue measure. Then adversarial Bayes classifier is unique up to degeneracy iff the adversarial Bayes classifiers $\{\hat \eta >1/2\}$ and $\{\hat \eta \geq 1/2\}$ are equivalent up to degeneracy. 
    \end{lemma}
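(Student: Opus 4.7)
The forward direction is immediate: by \Cref{thm:adv_Bayes_construct}, both $\{\hat\eta>1/2\}$ and $\{\hat\eta\geq 1/2\}$ are adversarial Bayes classifiers, so if every pair of adversarial Bayes classifiers is equivalent up to degeneracy, this particular pair is. For the reverse direction, my plan is to route through the characterization in \Cref{thm:uniqueness_equivalences_1} (uniqueness up to degeneracy iff $\PP^*(\eta^*=1/2)=0$) together with \Cref{prop:hat_eta}\ref{it:a.e._hat_eta} ($\hat\eta=\eta^*$ $\PP^*$-a.e.), reducing the lemma to the equivalence: $\{\hat\eta>1/2\}$ and $\{\hat\eta\geq 1/2\}$ are equivalent up to degeneracy iff $\PP^*(\hat\eta=1/2)=0$.

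For the direction $\PP^*(\hat\eta=1/2)=0\Rightarrow$ equivalent up to degeneracy: any Borel $B$ with $\{\hat\eta>1/2\}\subset B\subset\{\hat\eta\geq 1/2\}$ differs from $\{\hat\eta>1/2\}$ only on a subset of $\{\hat\eta=1/2\}$, which is $\PP^*$-null and hence $\PP_1^*$-null; thus $\one_{B^C}(\bx')=\one_{\{\hat\eta>1/2\}^C}(\bx')$ $\gamma_1^*$-a.e. Combining this with the sandwich $\one_{B^C}(\bx')\leq S_\e(\one_{B^C})(\bx)\leq S_\e(\one_{\{\hat\eta>1/2\}^C})(\bx)$ on $\Delta_\e$ and the identity $S_\e(\one_{\{\hat\eta>1/2\}^C})(\bx)=\one_{\{\hat\eta>1/2\}^C}(\bx')$ $\gamma_1^*$-a.e.\ from \Cref{thm:hat_eta_adv_baye}\ref{it:I_S_hat_eta}, all three quantities collapse $\gamma_1^*$-a.e., yielding $\int S_\e(\one_{B^C})\,d\PP_1=\PP_1^*(\{\hat\eta>1/2\}^C)$. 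A mirror calculation on $\PP_0$ gives $\int S_\e(\one_B)\,d\PP_0=\PP_0^*(\{\hat\eta\geq 1/2\})$, and summing proves $R^\e(B)=R^\e(\{\hat\eta>1/2\})$, so $B$ is an adversarial Bayes classifier.

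For the reverse implication, I would argue by contrapositive: assume $\PP^*(\hat\eta=1/2)>0$ and construct a Borel $F\subset E:=\{\hat\eta=1/2\}$ such that $B:=\{\hat\eta>1/2\}\cup F$ fails to be an adversarial Bayes classifier. Tracking the complementary slackness as in the previous paragraph but for general $F$, a bookkeeping computation gives
\[
R^\e(B)-R^\e(\{\hat\eta>1/2\})=\gamma_1^*\bigl(\bx'\in F,\,\ov{B_\e(\bx)}\cap(E\setminus F)\neq\emptyset\bigr)+\gamma_0^*\bigl(\bx'\in E\setminus F,\,\ov{B_\e(\bx)}\cap F\neq\emptyset\bigr),
\]
where I use the identity $\PP_0^*(F)=\PP_1^*(F)$ for Borel $F\subset E$ (a consequence of $\eta^*=1/2$ $\PP^*$-a.e.\ on $E$, forced by the equality $R^\e(\{\hat\eta>1/2\})=R^\e(\{\hat\eta\geq 1/2\})$). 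The hard part will be producing an $F$ that makes at least one term strictly positive: this requires analyzing the geometry of $\gamma_0^*,\gamma_1^*$ restricted to $E$ and using the absolute continuity of $\PP$ to rule out the degenerate scenario in which the couplings are supported on the diagonal of $E\times E$ (in which case every $F$ gives vanishing excess mass). I expect this step to parallel the argument of \citep[Theorem~3.4]{Frank2024uniqueness} rephrased through $\hat\eta$.
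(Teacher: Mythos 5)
The easy direction (uniqueness $\Rightarrow$ the pair $\{\hat\eta>1/2\}$, $\{\hat\eta\geq 1/2\}$ is equivalent up to degeneracy) is fine and matches the paper. The problem is your hard direction. You reduce it, via \cref{thm:uniqueness_equivalences_1}, to showing that equivalence of this particular pair is equivalent to $\PP^*(\hat\eta=1/2)=0$, and you then need the implication that pair-equivalence forces $\PP^*(\hat\eta=1/2)=0$; by contrapositive, assuming $\PP^*(\hat\eta=1/2)>0$ you must exhibit a Borel $F\subset\{\hat\eta=1/2\}$ so that $\{\hat\eta>1/2\}\cup F$ has strictly larger adversarial risk. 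You explicitly flag producing such an $F$ as the ``hard part'' and defer it to an argument you ``expect'' to parallel \citep[Theorem~3.4]{Frank2024uniqueness}. As written this is a genuine gap: it is precisely here that the absolute-continuity hypothesis does its work (ruling out the diagonal-coupling degeneracy you mention), and without it the proof does not close. Your second paragraph is also unnecessary overhead: $\PP^*(\hat\eta=1/2)=0\Rightarrow$ pair-equivalence already follows from \cref{thm:uniqueness_equivalences_1} combined with the easy direction, so you need not reprove it by hand.

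The paper avoids this gap by invoking \cref{thm:equivalence_up_to_degeneracy_characterization} (Proposition~5.1 of \citep{Frank2024uniqueness}) as a black box rather than re-deriving its content. Concretely: if $\{\hat\eta>1/2\}$ and $\{\hat\eta\geq 1/2\}$ are equivalent up to degeneracy, \cref{thm:equivalence_up_to_degeneracy_characterization} gives $S_\e(\one_{\{\hat\eta>1/2\}})=S_\e(\one_{\{\hat\eta\geq 1/2\}})$ $\PP_0$-a.e.\ (or the analogous statement on the $\PP_1$ side). The pointwise sandwich $S_\e(\one_{\{\hat\eta>1/2\}})\leq S_\e(\one_A)\leq S_\e(\one_{\{\hat\eta\geq 1/2\}})$ $\PP_0$-a.e., established in the proof of \cref{thm:adv_Bayes_construct}, then forces $S_\e(\one_A)=S_\e(\one_{\{\hat\eta>1/2\}})$ $\PP_0$-a.e.\ for every adversarial Bayes classifier $A$; a second application of \cref{thm:equivalence_up_to_degeneracy_characterization} shows each such $A$ is equivalent up to degeneracy to $\{\hat\eta>1/2\}$, which is uniqueness. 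If you substitute this invocation for the missing $F$-construction, your argument closes with considerably less machinery and no appeal to an unproved geometric claim about the couplings.
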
 
   Consequently, if the adversarial Bayes classifier is not unique up to degeneracy, then there is a set $\tilde A$ that is not an adversarial Bayes classifier but $\{\hat \eta >1/2\}\subset \tilde A\subset \{\hat \eta \geq 1/2\}$. Next, we prove that one can replace the value of $\alpha_\phi(1/2)$ by 0 in the minimizer $\alpha_\phi(\hat \eta(\bx))$ and still retain a minimizer of $\prm$. Formally:
   \begin{lemma}\label{lemma:td_alpha_phi}
         Let $\alpha_\phi:[0,1]\to \Rset$ be as in \cref{lemma:smallest_minimizer_function} and define a function $\td \alpha_\phi :[0,1]\to \ov \Rset$ by 
         \begin{equation}\label{eq:tilde_alpha_phi}
            \td \alpha_\phi(\eta)=\begin{cases}
            \alpha_\phi(\eta)&\text{if }\eta \neq 1/2\\
            0&\text{otherwise}
        \end{cases}     
         \end{equation}
        
        Let $\hat \eta:\Rset^d\to [0,1]$ be the function described in \cref{prop:hat_eta}. If $\phi$ is consistent and $C_\phi^*(1/2)=\phi(0)$, then $\td \alpha(\hat \eta(\bx))$ is a minimizer of $R_\phi^\e$. 
    \end{lemma}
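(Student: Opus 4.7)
The plan is to show $\prm(\td\alpha_\phi(\hat\eta))=\dl(\PP_0^*,\PP_1^*)$ by the same route used in the proof of \cref{thm:hat_eta_adv_baye} to establish the corresponding identity for $\alpha_\phi(\hat\eta)$. Once this is in hand, \cref{thm:strong_duality_surrogate} together with \cref{lemma:special_P_i_s_maximizers} immediately gives $\prm(\td\alpha_\phi(\hat\eta))=\inf\prm$, the desired conclusion.

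First, I would verify that $\td\alpha_\phi(\eta)$ is a minimizer of $C_\phi(\eta,\cdot)$ for every $\eta\in[0,1]$. For $\eta\neq 1/2$ this is immediate from $\td\alpha_\phi(\eta)=\alpha_\phi(\eta)$. For $\eta=1/2$, the hypothesis $C_\phi^*(1/2)=\phi(0)$ together with $C_\phi(1/2,0)=\tfrac12\phi(0)+\tfrac12\phi(0)=\phi(0)$ shows that $\td\alpha_\phi(1/2)=0$ is also a minimizer of $C_\phi(1/2,\cdot)$. Hence $C_\phi(\eta,\td\alpha_\phi(\eta))=C_\phi^*(\eta)$ pointwise, and combining this with \cref{it:a.e._hat_eta} gives
\[\int\phi(\td\alpha_\phi(\hat\eta))\,d\PP_1^*+\int\phi(-\td\alpha_\phi(\hat\eta))\,d\PP_0^* = \int C_\phi(\eta^*,\td\alpha_\phi(\eta^*))\,d\PP^* = \int C_\phi^*(\eta^*)\,d\PP^* = \dl(\PP_0^*,\PP_1^*).\]

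Second, I would check that $\td\alpha_\phi$ is non-decreasing on $[0,1]$ so that $\td\alpha_\phi\circ\hat\eta$ is Borel measurable. Since $0$ is a minimizer of $C_\phi(1/2,\cdot)$ and $\alpha_\phi(1/2)$ is the smallest such by \cref{lemma:smallest_minimizer_function}, $\alpha_\phi(1/2)\leq 0$; hence for $\eta<1/2$, $\td\alpha_\phi(\eta)=\alpha_\phi(\eta)\leq\alpha_\phi(1/2)\leq 0=\td\alpha_\phi(1/2)$. For $\eta>1/2$, consistency of $\phi$ combined with \cref{lemma:C_phi_uniform} (applied to $r=\eta-1/2$) yields $\alpha_\phi(\eta)>\alpha_r>0$, so $\td\alpha_\phi(\eta)=\alpha_\phi(\eta)>0=\td\alpha_\phi(1/2)$. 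These two facts together with the monotonicity of $\alpha_\phi$ away from $\eta=1/2$ give monotonicity of $\td\alpha_\phi$ on all of $[0,1]$.

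Third, I would invoke the coupling computation from the proof of \cref{thm:hat_eta_adv_baye} (Lemma~26 and Theorem~9 of \citep{FrankNilesWeed23minimax}) with $\td\alpha_\phi$ in place of $\alpha_\phi$. That computation establishes $\prm(\alpha_\phi(\hat\eta))=\dl(\PP_0^*,\PP_1^*)$ by using the selector only through (a) the pointwise identity $C_\phi(\eta,\alpha_\phi(\eta))=C_\phi^*(\eta)$, (b) monotonicity of the selector, and (c) properties \cref{it:a.e._hat_eta} and \cref{it:I_S_hat_eta} of $\hat\eta$. Properties (a) and (b) have been verified for $\td\alpha_\phi$ in the previous two steps, while (c) is a statement about $\hat\eta$ and is unchanged; the same calculation therefore produces $\prm(\td\alpha_\phi(\hat\eta))=\dl(\PP_0^*,\PP_1^*)$, and \cref{thm:strong_duality_surrogate} together with \cref{lemma:special_P_i_s_maximizers} completes the proof.

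The main obstacle lies in Step 3: making explicit that the coupling argument from \citep{FrankNilesWeed23minimax} depends on its selector only through properties (a) and (b) above, with no additional feature of $\alpha_\phi$ entering. I expect this abstraction to go through because \cref{lemma:S_e_and_W_inf_uc} provides a coupling bound valid for any non-negative measurable integrand and the identities in \cref{it:I_S_hat_eta} are intrinsic to $\hat\eta$ rather than to the selector; the substantive work is to write out the full argument in the appendix carefully enough to see this dependence.
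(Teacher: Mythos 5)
Your proposal is correct and follows essentially the same route as the paper's proof: the paper first establishes (as Lemma~\ref{lemma:td_alpha_monotonic}) exactly your Steps~1 and~2 — that $\td\alpha_\phi$ is a non-decreasing pointwise minimizer of $C_\phi(\eta,\cdot)$ — and then carries out your Step~3 explicitly, using monotonicity of $\phi\circ\td\alpha_\phi$ to pass $S_\e$ inside, invoking \cref{it:I_S_hat_eta} and \cref{it:a.e._hat_eta} on the couplings $\gamma_0^*,\gamma_1^*$ to get $\prm(\td\alpha_\phi(\hat\eta))=\dl(\PP_0^*,\PP_1^*)$, and closing with \cref{thm:strong_duality_surrogate}. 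The only cosmetic difference is in Step~2, where you bound $\alpha_\phi(\eta)\le\alpha_\phi(1/2)\le 0$ for $\eta<1/2$ directly from monotonicity of $\alpha_\phi$ rather than citing \cref{lemma:C_phi_uniform} as the paper does; both are valid.
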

   See \cref{app:alpha_phi_zeros} for a proof of this result.
   Thus we select a sequence $f_n$ that is strictly positive on $\tilde A$, strictly negative on $\tilde A^C$, and approaches 0 on $\{\hat \eta =1/2\}$. Consider the sequence
        \begin{equation}\label{eq:f_n_def}
        f_n(\bx)=\begin{cases}
            \tilde\alpha_\phi(\hat \eta (\bx)) &\hat \eta(\bx)\neq 1/2\\
            \frac 1n & \hat \eta(\bx)=1/2, \bx \in \tilde A\\
            -\frac 1n & \hat \eta(\bx)=1/2, \bx \not \in \tilde A 
        \end{cases}
    \end{equation}
            Then $R^\e(f_n)=R^\e(\td A)>\inf_AR^\e(A)$ for all $n$ and one can show that $f_n$ is a minimizing sequence of $R_\phi^\e$. However, $f_n$ may assume the values $\pm \infty$ because the function $\alpha_\phi$ is $\ov \Rset$-valued. Truncating these functions produces an $\Rset$-valued sequence that minimizes $\prm$  but $R^\e(f_n)=R^\e(\tilde A)$ for all $n$, see \cref{app:smallest_minimizer_function} for details.

\section{Conclusion}
In summary, we prove that under a reasonable distributional assumption, a convex loss is adversarially consistent if and only if the adversarial Bayes classifier is unique up to degeneracy. This result connects an analytical property of the adversarial Bayes classifier to a statistical property of surrogate risks. Analyzing adversarial consistency in the multiclass setting remains an open problem. Prior work \citep{RobeyLatorreHassani2024nonZeroSum} has proposed an alternative loss function to address the issue of robust overfitting. Furthermore, many recent papers suggest that classification with a rejection option is a better framework for robust learning--- a non-exhaustive list includes \citep{ChenRaghuramChoi2022revisiting,ChenRaghuramChoi2023stratified,KatoCuiFukuhara20,shah2024calibratedlossesadversarialrobust}. Analyzing the behavior of these alternative risks using the tools developed in this paper is an open problem as well. Hopefully, our results will aid in the analysis and development of further algorithms for adversarial learning.

\newpage
\appendix

\section{Further Results on the Function $\hat \eta$--- Proof of \cref{thm:adv_Bayes_construct}}\label{app:adv_bayes_construct}

    We prove that the sets $\{\hat \eta >1/2\}$ and $\{\hat \eta\geq 1/2\}$ minimize $\prm$ by showing that their adversarial classification risks equal $\cdl(\PP_0^*,\PP_1^*)$, for the measures $\PP_0^*$, $\PP_1^*$ in \cref{prop:hat_eta}.
    \begin{proposition}\label{prop:hat_eta_adv_bayes_small}
        Let $\hat \eta$ be the function in \cref{thm:alpha_phi_hat_eta}. Then the sets $\{\hat \eta>1/2\}$, $\hat \eta \geq 1/2\}$ are both adversarial Bayes classifiers.
    \end{proposition}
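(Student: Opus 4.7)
The plan is to transfer the minimization property from the adversarial surrogate risk to the adversarial classification risk by exploiting the hinge loss $\phi^h(\alpha) = \max(1-\alpha, 0)$, which satisfies \cref{as:phi}. A direct calculation gives $C_{\phi^h}^*(\eta) = 2\min(\eta, 1-\eta) = 2C^*(\eta)$, so the surrogate and classification duals are linked by $\bar R_{\phi^h}(\PP_0', \PP_1') = 2\cdl(\PP_0', \PP_1')$ on every pair of admissible measures, and the smallest-minimizer function of \cref{eq:smallest_minimizer_function} takes the particularly clean form $\alpha_{\phi^h}(\eta) = +1$ for $\eta > 1/2$ and $\alpha_{\phi^h}(\eta) = -1$ for $\eta \leq 1/2$.

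Setting $f := \alpha_{\phi^h}(\hat \eta)$ one has $\{f > 0\} = \{\hat \eta > 1/2\}$, and since $\phi^h(1) = 0$ and $\phi^h(-1) = 2$ the identities $\phi^h \circ f = 2\one_{\hat \eta \leq 1/2}$ and $\phi^h \circ (-f) = 2\one_{\hat \eta > 1/2}$ yield $R_{\phi^h}^\e(f) = 2\,\cprm(\{\hat \eta > 1/2\})$. By \cref{it:minimizer_construct} of \cref{thm:hat_eta_adv_baye}, $f$ minimizes $R_{\phi^h}^\e$, and combining this with \cref{thm:strong_duality_surrogate} and \cref{lemma:special_P_i_s_maximizers} produces
\[
R_{\phi^h}^\e(f) \;=\; \inf_g R_{\phi^h}^\e(g) \;=\; \bar R_{\phi^h}(\PP_0^*, \PP_1^*) \;=\; 2\cdl(\PP_0^*, \PP_1^*).
\]
On the classification side, \cref{thm:strong_duality_classification} yields $\cdl(\PP_0^*, \PP_1^*) \leq \sup \cdl = \inf \cprm$, while $\cprm(\{\hat \eta > 1/2\}) \geq \inf \cprm$ trivially. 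Chaining these displays forces equality throughout, so $\cprm(\{\hat \eta > 1/2\}) = \inf \cprm$ and $\{\hat \eta > 1/2\}$ is an adversarial Bayes classifier.

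For $\{\hat \eta \geq 1/2\}$ I would repeat the argument after swapping the roles of $\PP_0$ and $\PP_1$. Under this swap $\hat \eta$ is replaced by $1 - \hat \eta$: property \cref{it:a.e._hat_eta} is preserved because $\eta^*$ becomes $1 - \eta^*$, and property \cref{it:I_S_hat_eta} is preserved via the identity $I_\e(1-g) = 1 - S_\e(g)$ combined with the interchange of the roles of $\gamma_0^*$ and $\gamma_1^*$. The hinge-loss argument applied in the swapped setting then shows $\{1 - \hat \eta > 1/2\} = \{\hat \eta < 1/2\}$ is an adversarial Bayes classifier for the swapped distribution; since flipping labels is equivalent to taking set complements on the classifier side, $\{\hat \eta < 1/2\}^C = \{\hat \eta \geq 1/2\}$ is an adversarial Bayes classifier for the original distribution. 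The main conceptual point is that the two-valued structure of $\phi^h$ at $\pm 1$ makes the identity $R^\e_{\phi^h}(f) = 2\,\cprm(\{f > 0\})$ pointwise-exact, which lets the classification risk of $\{\hat \eta > 1/2\}$ be read off from the surrogate argument without any delicate pointwise identification between $S_\e(\one_{\hat \eta \leq 1/2})$ and $\one_{\hat \eta(\bx') \leq 1/2}$.
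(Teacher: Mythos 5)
Your approach is genuinely different from the paper's and, after a small repair, it works. The paper proves \cref{prop:hat_eta_adv_bayes_small} at the pointwise level: it uses \cref{it:a.e._hat_eta} and \cref{it:I_S_hat_eta} of \cref{thm:hat_eta_adv_baye} together with the monotonicity of $\eta\mapsto\one_{\eta>1/2}$ to show that $S_\e(\one_{\{\hat \eta >1/2\}^C})(\bx)=\one_{\{\hat\eta(\bx')>1/2\}^C}$ $\gamma_1^*$-a.e.\ (and the analog for $\gamma_0^*$), integrates to get $\cprm(\{\hat\eta>1/2\})=\cdl(\PP_0^*,\PP_1^*)$, and then invokes \cref{thm:strong_duality_classification}. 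You instead use the hinge loss $\phi^h$ as a bridge: since $C_{\phi^h}^*=2C^*$ the surrogate and classification dual objectives are proportional, so \cref{it:minimizer_construct} of \cref{thm:hat_eta_adv_baye}, \cref{thm:strong_duality_surrogate}, and \cref{lemma:special_P_i_s_maximizers} identify $R_{\phi^h}^\e(f)$ with $2\cdl(\PP_0^*,\PP_1^*)$ for $f=\alpha_{\phi^h}(\hat\eta)$. What this buys is that you never manipulate $S_\e$ applied to indicators directly; what it costs is that you lean on the surrogate duality machinery (\cref{thm:strong_duality_surrogate}, \cref{lemma:special_P_i_s_maximizers}) where the paper only needs the classification side (\cref{thm:strong_duality_classification}). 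The label-swap argument for $\{\hat\eta\geq 1/2\}$ is sound: $1-\hat\eta$ with the swapped couplings satisfies properties \cref{it:a.e._hat_eta} and \cref{it:I_S_hat_eta} for the swapped distribution, and complementation converts a minimizer of the swapped $\cprm$ into one of the original.

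There is, however, a gap in the pointwise identity $\phi^h\circ f=2\one_{\hat\eta\leq 1/2}$. On the set $\{\hat\eta=0\}$ the smallest minimizer is $\alpha_{\phi^h}(0)=-\infty$ (every $\alpha\leq -1$, together with $-\infty$ under the $0\cdot\infty=0$ convention, minimizes $C_{\phi^h}(0,\cdot)$), and $\phi^h(-\infty)=+\infty\neq 2$. So the claimed identity fails there, and $R_{\phi^h}^\e(f)$ is not \emph{a priori} equal to $2\cprm(\{\hat\eta>1/2\})$. The repair is cheap: the correct statement is the one-sided bound $\phi^h\circ f\geq 2\one_{\hat\eta\leq 1/2}$ everywhere (together with $\phi^h\circ(-f)=2\one_{\hat\eta>1/2}$, which does hold exactly since $\alpha_{\phi^h}(1)=1$). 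This gives
\[
R_{\phi^h}^\e(f)\;\geq\;2\cprm(\{\hat\eta>1/2\})\;\geq\;2\inf_A\cprm(A)\;=\;2\sup\cdl\;\geq\;2\cdl(\PP_0^*,\PP_1^*)\;=\;R_{\phi^h}^\e(f),
\]
and the chain still collapses to equalities, yielding $\cprm(\{\hat\eta>1/2\})=\inf_A\cprm(A)$. The analogous issue appears at $\{\hat\eta=1\}$ in the swapped problem and admits the same fix. You should either make this inequality explicit or, equivalently, observe that finiteness of the optimal surrogate risk forces the $\e$-neighborhood of $\{\hat\eta=0\}$ to be $\PP_1$-null, but as written the equality claim is false.
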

    \begin{proof}
            We prove the statement for $\{\hat \eta >1/2\}$, the argument for the set $\{\hat \eta \geq 1/2\}$ is analogous. 
    
        Let $\PP_0^*, \PP_1^*$ be the measures of \cref{thm:hat_eta_adv_baye} and set $\PP^*=\PP_0^*+\PP_1^*$, $\eta^*=d\PP_1^*/d\PP^*$. Furthermore, let $\gamma_0^*$, $\gamma_1^*$ be the couplings between $\PP_0$, $\PP_0^*$ and $\PP_1$, $\PP_1^*$ supported on $\Delta_\e$. 
        
        First, \cref{it:I_S_hat_eta} implies that the function $\hat \eta(\bx)$ assumes its infimum on an ball $\ov{B_\e(\bx)}$ $\gamma_1^*$-a.e. and therefore $S_\e(\one_{\{\hat \eta >1/2\}^C})(\bx)=\one_{\{I_\e(\hat \eta)(\bx)>1/2\}^C}$ $\gamma_1^*$-a.e. (Recall the notation $I_\e$ was defined in \cref{eq:I_e_def}.) \Cref{it:I_S_hat_eta} further implies that $\one_{\{I_\e(\hat \eta)(\bx)>1/2\}^C}=\one_{\{\hat \eta(\bx')>1/2\}^C}$ $\gamma_1^*$-a.e. and consequently,
        \begin{equation}\label{eq:first_1_c_slack}
        S_\e(\one_{\{\hat \eta (\bx)>1/2\}^C})(\bx)=\one_{\{\hat \eta(\bx')>1/2\}^C}\quad \gamma_1^*\text{-a.e.}
        \end{equation}
        An analogous argument shows
        \begin{equation}\label{eq:first_0_c_slack}
            S_\e(\one_{\{\hat \eta>1/2\}})(\bx)=\one_{\{\hat \eta(\bx')>1/2\}}\quad \gamma_0^*\text{-a.e.}
        \end{equation}

        \cref{eq:first_1_c_slack,eq:first_0_c_slack} then imply that 
        \begin{equation*}
        \begin{aligned}
            \cprm(\{\hat \eta>1/2\})&=\int \one_{\{\hat \eta(\bx')>1/2\}^C}d\gamma_1^*+\int \one_{\{\hat \eta(\bx')>1/2\}}d\gamma_0^*\\ &=\int \one_{\{\hat \eta(\bx')>1/2\}^C}d\PP_1^*+\int \one_{\{\hat \eta(\bx')>1/2\}}d\PP_0^*=\int C(\eta^*,\one_{\{\hat \eta>1/2\}})d\PP^*.
        \end{aligned}
        \end{equation*}
        Next \cref{it:a.e._hat_eta} of \cref{prop:hat_eta} implies that $\hat \eta(\bx')=\eta^*(\bx')$ $\PP^*$-a.e. and consequently
        \[R^\e(\{\hat \eta>1/2\})=\int C(\eta^*,\one_{\{\eta^*>1/2\}})d\PP^*=\cdl(\PP_0^*,\PP_1^*).\]
        Therefore, the strong duality result in \cref{thm:strong_duality_classification} implies that $\{\hat \eta>1/2\}$ must minimize $\cprm$.
    \end{proof}

    Finally, the complementary slackness conditions from \cite[Theorem~2.4]{Frank2024uniqueness} characterize minimizers of $\cprm$ and maximizers of $\cdl$, and this characterization proves \cref{mal_maximal_1_uc,mal_maximal_0_uc}. Verifying these conditions would be another method of proving \cref{prop:hat_eta_adv_bayes_small}.

            \begin{theorem}\label{thm:complementary_slackness_classification_uc}
        The set $A$ is a minimizer of $R^\e$ and $(\PP_0^*,\PP_1^*)$ is a maximizer of $\bar R$ over the $W_\infty$ balls around $\PP_0$ and $\PP_1$ iff $W_\infty(\PP_0^*,\PP_0)\leq \e$, $W_\infty(\PP_1^*,\PP_1)\leq \e$, and 
    \begin{enumerate}[label=\arabic*)]
        
        \item

        \begin{equation}\label{eq:sup_comp_slack_classification_uc}
            \int S_\e(\one_{A^C})d\PP_1=\int \one_{A^C} d\PP_1^*\quad \text{and} \quad\int S_\e(\one_{A}) d\PP_0= \int \one_{ A} d\PP_0^* 
        \end{equation}
        \item \label{it:comp_slack_equation_classification_uc} 
    \begin{equation}\label{eq:complementary_slackness_necessary_classification_uc}
        C(\eta^*,\one_A(\bx'))=C^*(\eta^*(\bx'))\quad \PP^*\text{-a.e.}    
    \end{equation}
    where $\PP^*=\PP_0^*+\PP_1^*$ and $\eta^*=d\PP_1^*/d\PP^*$.
    \end{enumerate}

    \end{theorem}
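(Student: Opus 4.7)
My plan is to set up a single chain of inequalities in which equality throughout simultaneously encodes joint optimality and the complementary slackness conditions. For any Borel set $A$ and any $\PP_0' \in \Wball\e(\PP_0)$, $\PP_1' \in \Wball\e(\PP_1)$, applying \cref{lemma:S_e_and_W_inf_uc} separately to each class, rewriting the resulting sum of integrals against $\PP_1^*$ and $\PP_0^*$ in terms of the Radon-Nikodym decomposition, and using the pointwise bound $C(\eta,b)\geq C^*(\eta)$ gives
\begin{equation*}
R^\e(A) \;=\; \int S_\e(\one_{A^C})d\PP_1 + \int S_\e(\one_A)d\PP_0 \;\geq\; \int \one_{A^C}d\PP_1' + \int \one_A\, d\PP_0' \;=\; \int C(\eta',\one_A)\,d\PP' \;\geq\; \int C^*(\eta')\,d\PP' \;=\; \bar R(\PP_0',\PP_1'),
\end{equation*}
where $\PP'=\PP_0'+\PP_1'$ and $\eta'=d\PP_1'/d\PP'$. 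By the strong duality of \cref{thm:strong_duality_classification}, $A$ minimizes $R^\e$ and $(\PP_0^*,\PP_1^*)$ maximizes $\bar R$ if and only if $R^\e(A)=\bar R(\PP_0^*,\PP_1^*)$, so everything hinges on when the chain above is tight at $\PP_i'=\PP_i^*$.

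For the forward direction, I would assume joint optimality and specialize the chain to $\PP_i'=\PP_i^*$. The assumed $W_\infty$ bounds are immediate from $\PP_i^*\in\Wball\e(\PP_i)$. The equality $R^\e(A)=\bar R(\PP_0^*,\PP_1^*)$ collapses every intermediate inequality to an equality. The middle inequality is a sum of two separately non-negative gaps coming from \cref{lemma:S_e_and_W_inf_uc} applied to each class, so its vanishing forces each summand to vanish, which is exactly \cref{eq:sup_comp_slack_classification_uc}. Equality in the final inequality, combined with the pointwise bound $C(\eta^*(\bx),\one_A(\bx))\geq C^*(\eta^*(\bx))$, upgrades to $\PP^*$-almost everywhere equality, yielding \cref{eq:complementary_slackness_necessary_classification_uc}.

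For the reverse direction, I would run the chain forward starting from the two complementary slackness conditions. The sup condition \cref{eq:sup_comp_slack_classification_uc} replaces $R^\e(A)$ by $\int \one_{A^C}d\PP_1^*+\int \one_A d\PP_0^*=\int C(\eta^*,\one_A)d\PP^*$, and the pointwise condition \cref{eq:complementary_slackness_necessary_classification_uc} further replaces this by $\int C^*(\eta^*)d\PP^*=\bar R(\PP_0^*,\PP_1^*)$. The $W_\infty$ constraints place $(\PP_0^*,\PP_1^*)$ inside the feasible region of $\bar R$, and strong duality then converts the equality $R^\e(A)=\bar R(\PP_0^*,\PP_1^*)$ into simultaneous optimality of $A$ and $(\PP_0^*,\PP_1^*)$.

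The main subtlety is the splitting step in the forward direction: \cref{lemma:S_e_and_W_inf_uc} only bounds each class integral individually, so one must be careful to note that both $\int S_\e(\one_{A^C})d\PP_1 \geq \int \one_{A^C}d\PP_1^*$ and $\int S_\e(\one_A)d\PP_0\geq \int \one_A d\PP_0^*$ hold before concluding that tightness of the sum implies tightness of each. Apart from this, the remainder of the proof is bookkeeping: identifying $\int \one_{A^C}d\PP_1^* + \int \one_A d\PP_0^*$ with $\int C(\eta^*,\one_A)d\PP^*$ via the definition of $C$, and promoting almost-everywhere inequality between integrable functions with equal integrals to almost-everywhere equality.
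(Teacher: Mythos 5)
The paper does not actually prove this theorem: it is stated as a quoted result, attributed to Theorem~2.4 of \citet{Frank2024uniqueness}, so there is no internal argument to compare against. Your proposal is a correct, self-contained derivation using only results already present in the paper. The chain $R^\e(A) \geq \int \one_{A^C}d\PP_1' + \int \one_A \, d\PP_0' = \int C(\eta',\one_A)\,d\PP' \geq \bar R(\PP_0',\PP_1')$ is precisely the weak-duality sandwich, with the first inequality supplied by \cref{lemma:S_e_and_W_inf_uc} applied separately to each class and the second by the pointwise bound $C\geq C^*$; \cref{thm:strong_duality_classification} then identifies joint optimality of $A$ and $(\PP_0^*,\PP_1^*)$ with a vanishing duality gap, and you extract the two slackness conditions by noting (correctly) that the first gap is a sum of two individually nonnegative deficits while the second is the integral of a pointwise nonnegative function, so each must vanish, the second $\PP^*$-almost everywhere. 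The reverse direction just reads the same chain the other way: the slackness conditions collapse it to $R^\e(A)=\bar R(\PP_0^*,\PP_1^*)$, and the $W_\infty$ constraints put $(\PP_0^*,\PP_1^*)$ in the feasible region so strong duality certifies optimality. Your derivation has the modest advantage of making this appendix self-contained given \cref{thm:strong_duality_classification}, rather than importing the slackness characterization from the prior paper as a black box.
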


    Let $\gamma_0^*$, $\gamma_1^*$ be couplings between $\PP_0$, $\PP_0^*$ and $\PP_1$, $\PP_1^*$ supported on $\Delta_\e$.
    Notice that because \cref{lemma:S_e_and_W_inf_uc} implies that $ S_\e(\one_{A^C})(\bx)\geq \one_{A^C}(\bx')$ $\gamma_1^*$-a.e. and $ S_\e(\one_A)(\bx)\geq \one_A(\bx')$, the complementary slackness condition in \cref{eq:sup_comp_slack_classification_uc} is equivalent to
    \begin{equation}\label{eq:other_comp_slack_consition}
        S_\e(\one_{A^C})(\bx)=\one_{A^C}(\bx')\quad \gamma_1^*\text{-a.e.}\quad\text{and}\quad S_\e(\one_A)(\bx)=\one_A(\bx')\quad \gamma_0^*\text{-a.e.}    
    \end{equation}

    This observation completes the proof of \cref{thm:adv_Bayes_construct}.

    \begin{proof}[Proof of \cref{thm:adv_Bayes_construct}]
     Let $\hat \eta$, $\PP_0^*$, $\PP_1^*$ be the function and measures of \cref{thm:hat_eta_adv_baye}, and set $\PP^*=\PP_0^*+\PP_1^*$, $\eta^*=d\PP_1^*/d\PP^*$. Let $\gamma_0^*$, $\gamma_1^*$ be couplings between $\PP_0$, $\PP_0^*$ and $\PP_1$, $\PP_1^*$ supported on $\Delta_\e$.
     
     \Cref{prop:hat_eta_adv_bayes_small} proves that the sets $\{\hat \eta >1/2\}$ and $\{\hat \eta \geq 1/2\}$ are in fact adversarial Bayes classifiers.
     If $A$ is any adversarial Bayes classifier, the complementary slackness condition \cref{eq:complementary_slackness_necessary_classification_uc} implies that $\one_{\eta^*>1/2}\leq \one_A\leq \one_{\eta^*\geq 1/2}$ $\PP^*$-a.e. Thus \cref{it:a.e._hat_eta} implies that  
        \[\one_{\{\hat \eta>1/2\}}(\bx')\leq \one_A(\bx') \leq \one_{\{\hat \eta \geq 1/2\}}(\bx')\quad \gamma_0^*\text{-a.e.}\]and 
        \[\one_{\{\hat \eta>1/2\}^C}(\bx')\leq \one_{A^C}(\bx') \leq \one_{\{\hat \eta \geq 1/2\}^C}(\bx') \quad \gamma_1^*\text{-a.e.}\]
        The complementary slackness condition \cref{eq:other_comp_slack_consition} then implies \cref{mal_maximal_1_uc,mal_maximal_0_uc}.
    \end{proof}

\section{Uniqueness up to Degeneracy and $\hat \eta(\bx)$--- Proof of \cref{thm:uniqueness_equivalences_1}}\label{app:uniqueness_equivalences_1}
    
    Theorem~3.4 of \citep{Frank2024uniqueness} proves the following result:
    \begin{theorem}\label{thm:uniqueness_equivalences_2}
    Assume that $\PP$ is absolutely continuous with respect to Lebesgue measure. Then the following are equivalent:
    \begin{enumerate}[label=\arabic*)]
            \item \label{it:unique_under_deg_2} The adversarial Bayes classifier is unique up to degeneracy
            \item \label{it:S_e_unique_uc_2} 
            Amongst all adversarial Bayes classifiers $A$, the value of $\int S_\e(\one_A)d\PP_0$ is unique or the value of $\int S_\e(\one_{A^C})d\PP_1$ is unique
    \end{enumerate}

\end{theorem}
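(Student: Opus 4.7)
My plan is to reduce the statement to \cref{thm:uniqueness_equivalences_1}, which identifies uniqueness up to degeneracy with the condition $\PP^*(\eta^*=1/2)=0$ for the measures $\PP_0^*,\PP_1^*$ of \cref{prop:hat_eta}. The key leverage comes from the complementary slackness conditions in \cref{thm:complementary_slackness_classification_uc}, which collapse the $\PP_1$-integral of $S_\e(\one_{A^C})$ to the $\PP_1^*$-measure of $A^C$ for every adversarial Bayes classifier $A$. Before splitting into the two directions, I would first observe that the ``or'' in item \cref{it:S_e_unique_uc_2} is in fact equivalent to ``and'': because $R^\e(A)=\int S_\e(\one_{A^C})d\PP_1+\int S_\e(\one_A)d\PP_0$ is constant across adversarial Bayes classifiers, constancy of one of these two summands forces constancy of the other.

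For the direction \cref{it:unique_under_deg_2}$\Rightarrow$\cref{it:S_e_unique_uc_2}, I would start from $\PP^*(\eta^*=1/2)=0$ (via \cref{thm:uniqueness_equivalences_1}) and use \cref{eq:other_comp_slack_consition} to rewrite
\[\int S_\e(\one_{A^C})d\PP_1=\int \one_{A^C}(\bx')d\gamma_1^*=\PP_1^*(A^C)\]
for any adversarial Bayes classifier $A$, where $\gamma_1^*$ is a coupling between $\PP_1$ and $\PP_1^*$ supported on $\Delta_\e$. The companion condition \cref{eq:complementary_slackness_necessary_classification_uc} forces $\one_A=\one_{\eta^*>1/2}$ $\PP^*$-a.e.\ off of the $\PP^*$-null set $\{\eta^*=1/2\}$, so $\PP_1^*(A^C)$ equals the $A$-independent quantity $\PP_1^*(\eta^*<1/2)$.

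For the reverse implication \cref{it:S_e_unique_uc_2}$\Rightarrow$\cref{it:unique_under_deg_2}, I would apply the hypothesis to the two ``extremal'' adversarial Bayes classifiers $\{\hat\eta>1/2\}$ and $\{\hat\eta\geq 1/2\}$ furnished by \cref{prop:hat_eta_adv_bayes_small}. The same collapse formula, combined with \cref{it:a.e._hat_eta} of \cref{prop:hat_eta} (which gives $\hat\eta=\eta^*$ $\PP^*$-a.e.), yields
\[\int S_\e(\one_{\{\hat\eta>1/2\}^C})d\PP_1-\int S_\e(\one_{\{\hat\eta\geq 1/2\}^C})d\PP_1=\PP_1^*(\eta^*=1/2)=\tfrac{1}{2}\,\PP^*(\eta^*=1/2),\]
where the last equality uses $\eta^*=d\PP_1^*/d\PP^*$. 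Hypothesis \cref{it:S_e_unique_uc_2} forces the left-hand side to vanish, so $\PP^*(\eta^*=1/2)=0$, and \cref{thm:uniqueness_equivalences_1} then delivers uniqueness up to degeneracy.

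The main obstacle I anticipate is justifying the collapse $\int S_\e(\one_{A^C})d\PP_1=\PP_1^*(A^C)$ uniformly over \emph{all} adversarial Bayes classifiers $A$, since \cref{eq:other_comp_slack_consition} must be invoked for a generic Bayes $A$ paired with a single choice of $(\PP_0^*,\PP_1^*)$, not only for the extremals $\{\hat\eta>1/2\}$ and $\{\hat\eta\geq 1/2\}$. This requires verifying that the complementary slackness characterization of \cref{thm:complementary_slackness_classification_uc} applies simultaneously for the fixed maximizers supplied by \cref{prop:hat_eta} and every minimizer of $R^\e$. Once this reduction is in place, the remainder of the argument is a direct bookkeeping exercise on the level set $\{\eta^*=1/2\}$.
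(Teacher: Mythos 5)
Your argument has a structural problem: it is circular with respect to this paper's logic. The paper does not prove \cref{thm:uniqueness_equivalences_2} at all---it is imported verbatim from \citet[Theorem~3.4]{Frank2024uniqueness}---and the result you lean on in both directions, \cref{thm:uniqueness_equivalences_1}, is itself proved in \cref{app:uniqueness_equivalences_1} \emph{by invoking} \cref{thm:uniqueness_equivalences_2}: the paper's argument there only shows that item \cref{it:S_e_unique_uc_2} is equivalent to $\PP^*(\eta^*=1/2)=0$, and then gets uniqueness up to degeneracy by citing the very equivalence you are asked to establish. So passing through \cref{thm:uniqueness_equivalences_1} assumes a statement whose only justification in this paper is the theorem under proof; unless you supply an independent proof of \cref{thm:uniqueness_equivalences_1}, nothing has been established.

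Beyond the circularity, the genuinely hard content is never addressed. By \cref{def:uniqueness_up_to_degeneracy}, the implication \cref{it:S_e_unique_uc_2}$\Rightarrow$\cref{it:unique_under_deg_2} requires showing that for any two adversarial Bayes classifiers $A_1,A_2$, \emph{every} Borel set $A$ with $A_1\cap A_2\subset A\subset A_1\cup A_2$ again minimizes $R^\e$. Your computations---collapsing $\int S_\e(\one_{A^C})d\PP_1$ to $\PP_1^*(A^C)$ via \cref{thm:complementary_slackness_classification_uc}, the observation that the ``or'' may be upgraded to ``and'', and the bookkeeping $\PP_1^*(\eta^*=1/2)=\tfrac12\PP^*(\eta^*=1/2)$---are correct as far as they go, but they essentially reproduce the paper's proof of \cref{thm:uniqueness_equivalences_1}: they relate the uniqueness of the integral values to the dual condition $\PP^*(\eta^*=1/2)=0$, and never show that an intermediate set $A$ satisfies $R^\e(A)=\inf_{A'}R^\e(A')$. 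That step needs an argument of a different character (controlling $S_\e(\one_A)$ for a sandwiched set in terms of $S_\e(\one_{A_1})$ and $S_\e(\one_{A_2})$ and exploiting absolute continuity of $\PP$), which is precisely the work carried out in \citet[Theorem~3.4]{Frank2024uniqueness} and deliberately not reproved here.
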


Thus it remains to show that \cref{it:S_e_unique_uc_2} of \cref{thm:uniqueness_equivalences_2} is equivalent to \cref{it:eta_*_meas_zero} of \cref{thm:uniqueness_equivalences_1}. We will apply the complementary slackness conditions of \cref{thm:complementary_slackness_classification_uc}.
\begin{proof}[Proof of \cref{thm:uniqueness_equivalences_1}]

Let $\PP_0^*$, $\PP_1^*$ be the measures of \cref{prop:hat_eta}.

First, we show that \cref{it:S_e_unique_uc_2} implies \cref{it:eta_*_meas_zero}. Assume that \cref{it:S_e_unique_uc_2} holds. Notice that for an adversarial Bayes classifier $A$,
\[\int S_\e(\one_A)d\PP_0+ \int S_\e(\one_{A^C})d\PP_1 =R_*^\e\]
where $R_*^\e$ is the minimal value of $R^\e$. Thus amongst all adversarial Bayes classifiers $A$, the value of
$\int S_\e(\one_A)d\PP_0$ is unique iff the value of $\int S_\e(\one_{A^C})d\PP_1$ is unique. Thus \cref{it:S_e_unique_uc_2} implies both $\int S_\e(\one_{A_1})d\PP_0=\int S_\e(\one_{A_2})d\PP_0$ and $\int S_\e(\one_{A_1^C})d\PP_1=\int S_\e(\one_{A_2^C})d\PP_1$ for any two adversarial Bayes classifiers $A_1$ and $A_2$.

Applying this statement to the adversarial Bayes classifiers $\{\hat \eta >1/2\}$ and $\{\hat \eta \geq 1/2\}$ produces
\[\int S_\e(\one_{\{\hat \eta>1/2\}^C})d\PP_1 =\int S_\e(\one_{\{\hat \eta\geq 1/2\}^C})d\PP_1 \quad \text{and}\quad \int S_\e(\one_{\{\hat \eta>1/2\}})d\PP_0 =\int S_\e(\one_{\{\hat \eta\geq 1/2\}})d\PP_0
\]
The complementary slackness condition \cref{eq:sup_comp_slack_classification_uc} implies that
\[\int \one_{\{\hat \eta>1/2\}^C}d\PP_1^*=\int \one_{\{\hat \eta\geq 1/2\}^C}d\PP_1^*\quad \text{and}\quad \int \one_{\{\hat \eta>1/2\}}d\PP_0^*=\int \one_{\{\hat \eta\geq 1/2\}}d\PP_0^*\] 
and subsequently, \cref{it:a.e._hat_eta} of \cref{prop:hat_eta} implies that
\[\int \one_{\{\eta^*>1/2\}^C}d\PP_1^*=\int \one_{\{\eta^*\geq 1/2\}^C}d\PP_1^*\quad \text{and}\quad \int \one_{\{\eta^*>1/2\}}d\PP_0^*=\int \one_{\{\eta^*\geq 1/2\}}d\PP_0^*.\]
Consequently, $\PP^*(\eta^*=1/2)=0$.

To show the other direction, we apply the inequalities in \cref{thm:adv_Bayes_construct}. The complementary slackness conditions in \cref{thm:complementary_slackness_classification_uc} and the first inequality in \cref{thm:adv_Bayes_construct} imply that for any adversarial Bayes classifier $A$, 
\[\int \one_{\{\eta^*< 1/2\}}d\PP_1^*\leq \int S_\e(\one_{A^C})d\PP_1\leq \int \one_{\{\hat \eta^*\leq 1/2\}} d\PP_1^* \]

Consequently, if $\PP^*(\eta^*=1/2)=0$, then $\int \one_{\{\eta^*< 1/2\}}d\PP_1^*= \int S_\e(\one_{A^C})d\PP_1$, which implies that $\int S_\e(\one_{A^C})d\PP_1$ assumes a unique value over all possible adversarial Bayes classifiers.
\end{proof}

\section{Technical loss function proofs---Proof of \cref{lemma:smallest_minimizer_function,lemma:C_phi_uniform}}\label{app:consistent_losses}\label{app:smallest_minimizer_function}

To begin, we prove a result that compares minimizers of $C_\phi(\eta,\cdot)$ for differing values of $\eta$.

This result is then used to prove \cref{lemma:smallest_minimizer_function}.

\begin{lemma}\label{lemma:minimizers_comparison}
If $\alpha_2^*$ is any minimizer of $C_\phi(\eta_2,\cdot)$ and $\eta_2>\eta_1$, then $\alpha_\phi(\eta_1)\leq \alpha_2^*$.
\end{lemma}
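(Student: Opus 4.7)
The plan is to argue by contradiction: assume $\alpha_2^\ast < \alpha_\phi(\eta_1)$ and derive a contradiction from the two optimality inequalities. The central observation is that the function
\[
D(\alpha) \;=\; C_\phi(\eta_2,\alpha)-C_\phi(\eta_1,\alpha) \;=\; (\eta_2-\eta_1)\bigl(\phi(\alpha)-\phi(-\alpha)\bigr)
\]
is \emph{non-increasing} in $\alpha$: since $\phi$ is non-increasing, $\phi(\alpha)$ is non-increasing while $\phi(-\alpha)$ is non-decreasing, so $\phi(\alpha)-\phi(-\alpha)$ is non-increasing, and the prefactor $\eta_2-\eta_1$ is strictly positive. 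This is the ``single-crossing'' structure that drives the monotonicity of $\alpha_\phi$.

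For the main case, suppose first that $\alpha_1 := \alpha_\phi(\eta_1)$ is finite. Continuity of $\phi$ makes $C_\phi(\eta_1,\cdot)$ continuous, so the infimum in \cref{eq:smallest_minimizer_function} is attained and $\alpha_1$ is itself a minimizer; moreover, because $\alpha_1$ is the \emph{smallest} minimizer, any $\alpha < \alpha_1$ is not a minimizer, giving the \emph{strict} inequality $C_\phi(\eta_1,\alpha_2^\ast)>C_\phi(\eta_1,\alpha_1)$. On the other hand, optimality of $\alpha_2^\ast$ for $C_\phi(\eta_2,\cdot)$ yields $C_\phi(\eta_2,\alpha_2^\ast)\leq C_\phi(\eta_2,\alpha_1)$. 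Subtracting these two inequalities gives
\[
D(\alpha_1) - D(\alpha_2^\ast) \;=\; \bigl[C_\phi(\eta_2,\alpha_1)-C_\phi(\eta_2,\alpha_2^\ast)\bigr] - \bigl[C_\phi(\eta_1,\alpha_1)-C_\phi(\eta_1,\alpha_2^\ast)\bigr] \;>\; 0,
\]
that is $D(\alpha_1)>D(\alpha_2^\ast)$, which contradicts the non-increasing property of $D$ at the pair $\alpha_2^\ast < \alpha_1$.

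Finally, I would dispose of the extended-real edge cases. If $\alpha_\phi(\eta_1)=-\infty$ the claim is vacuous. If $\alpha_\phi(\eta_1)=+\infty$ then no point of $\ov \Rset$ below $+\infty$ is a minimizer of $C_\phi(\eta_1,\cdot)$, so in particular $\alpha_2^\ast$ (which is assumed strictly below $+\infty$) is not a minimizer; hence $C_\phi(\eta_1,\alpha_2^\ast)$ strictly exceeds $\inf_\alpha C_\phi(\eta_1,\alpha)=\lim_{\alpha\to\infty}C_\phi(\eta_1,\alpha)$, and I may pick a finite $\alpha_1>\alpha_2^\ast$ with $C_\phi(\eta_1,\alpha_1)<C_\phi(\eta_1,\alpha_2^\ast)$, then run the same subtraction argument to contradict monotonicity of $D$. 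The only mild subtlety (the ``main obstacle,'' though it is genuinely minor) is this bookkeeping around $\alpha_\phi(\eta_1)=+\infty$ and the verification that one can use the strict non-minimality of $\alpha_2^\ast$ to beat the inequality into strict form; the single-crossing identity for $D$ itself makes the rest of the argument essentially automatic.
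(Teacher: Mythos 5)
Your argument is essentially the same as the paper's: both hinge on the decomposition $C_\phi(\eta_2,\alpha) = C_\phi(\eta_1,\alpha) + (\eta_2-\eta_1)\bigl(\phi(\alpha)-\phi(-\alpha)\bigr)$ together with the observation that $\alpha\mapsto\phi(\alpha)-\phi(-\alpha)$ is non-increasing. You phrase it as a contradiction (subtracting the two optimality inequalities to violate monotonicity of $D$), whereas the paper argues directly that any $\alpha<\alpha_\phi(\eta_1)$ has strictly larger $C_\phi(\eta_2,\cdot)$-value than $\alpha_\phi(\eta_1)$; these are the same computation arranged differently, and your extra care about the $\ov\Rset$-valued edge cases is harmless but not a new idea.
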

\begin{proof}

    One can express $C_\phi(\eta_2,\alpha)$ as
    \[C_\phi(\eta_2,\alpha)=C_\phi(\eta_1,\alpha)+(\eta_2-\eta_1)(\phi(\alpha)-\phi(-\alpha))\]
    Notice that the function $\alpha \mapsto \phi(\alpha)-\phi(-\alpha)$ is non-increasing in $\alpha$. As $\alpha_\phi(\eta_1)$ is the smallest minimizer of $C_\phi(\eta_1,\cdot)$, if $\alpha<\alpha_\phi(\eta_1)$ then $C_\phi(\eta_1,\alpha)>C_\phi^*(\eta_1)$ and thus $C_\phi(\eta_2,\alpha)>C_\phi(\eta_2,\alpha_\phi(\eta_1))$. Consequently, every minimizer of $C_\phi(\eta_2,\cdot)$ must be greater than or equal to $\alpha_\phi(\eta_1)$.
\end{proof}

 Next we use this result to prove \cref{lemma:smallest_minimizer_function}.
\begin{proof}[Proof of \cref{lemma:smallest_minimizer_function}]
    For $\eta_2>\eta_1$, apply \cref{lemma:minimizers_comparison} with the choice $\alpha_2^*=\alpha_\phi(\eta_2)$.
\end{proof}

Next, if the loss $\phi$ is consistent, then $0$ can minimize $C_\phi(\eta,\cdot)$ only when $\eta=1/2$.

\begin{lemma}\label{lemma:consistent_zeros}
    Let $\phi$ be a consistent loss. If $0\in \argmin C_\phi(\eta,\cdot)$, then $\eta=1/2$. 
\end{lemma}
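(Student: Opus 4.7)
The plan is to argue by contrapositive: I will show that if some $\eta \neq 1/2$ admits $0$ as a minimizer of $C_\phi(\eta,\cdot)$, then one can exhibit a specific distribution on which consistency of $\phi$ fails. The construction is a single point mass with conditional probability $\eta$, which reduces the classification and surrogate risks directly to their pointwise versions $C$ and $C_\phi$.

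First I would handle the sign. By assumption $0 \in \argmin C_\phi(\eta,\cdot)$, so $C_\phi^*(\eta) = C_\phi(\eta,0) = \phi(0)$. Using the identity $C_\phi(\eta,-\alpha) = C_\phi(1-\eta,\alpha)$, it follows that $0 \in \argmin C_\phi(1-\eta,\cdot)$ as well. Hence I may assume without loss of generality that $\eta > 1/2$, and my goal is then to derive a contradiction.

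Next, the distribution. Fix any point $\bx_0 \in \Rset^d$ and set $\PP_1 = \eta\,\delta_{\bx_0}$ and $\PP_0 = (1-\eta)\,\delta_{\bx_0}$, so that the conditional probability at $\bx_0$ is exactly $\eta$. For this distribution,
\begin{equation*}
R_\phi(f) = C_\phi(\eta, f(\bx_0)), \qquad R(f) = \eta\,\one_{f(\bx_0)\leq 0} + (1-\eta)\,\one_{f(\bx_0)>0}.
\end{equation*}
The constant function $f \equiv 0$ therefore satisfies $R_\phi(f) = C_\phi(\eta,0) = C_\phi^*(\eta) = \inf_g R_\phi(g)$, so it is a minimizer of the surrogate risk and in particular a (constant) minimizing sequence. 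On the other hand, $R(f) = \eta$, whereas $\inf_g R(g) = \min(\eta, 1-\eta) = 1-\eta < \eta$ by the assumption $\eta > 1/2$. Thus the minimizing sequence $f_n \equiv 0$ fails to minimize $R$, so $\phi$ is not consistent for the distribution $(\PP_0,\PP_1)$ and hence not consistent. This contradicts the hypothesis and proves $\eta = 1/2$.

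I do not anticipate any real obstacle here: the symmetry step and the pointmass reduction are both mechanical once one writes out $C_\phi$ and $C$. The only mild subtlety is confirming that the constant sequence $f_n\equiv 0$ genuinely qualifies as a ``minimizing sequence'' in the sense of the consistency definition, which is immediate since it achieves $\inf R_\phi$ exactly, and that $\bx_0 \mapsto 0$ is Borel measurable.
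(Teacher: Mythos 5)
Your proof is correct and uses essentially the same idea as the paper: take a distribution with constant $\eta$, observe that $f\equiv 0$ minimizes $R_\phi$ but (if $\eta>1/2$) not $R$, then invoke the symmetry $C_\phi(\eta,\alpha)=C_\phi(1-\eta,-\alpha)$ to also rule out $\eta<1/2$. The only cosmetic difference is that you make the point-mass construction and the resulting contradiction fully explicit, whereas the paper states the same argument more tersely.
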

\begin{proof}
    Consider a distribution for which $\eta(\bx)\equiv \eta$ is constant. Then by the consistency of $\phi$, if $0$ minimizes $C_\phi(\eta,\cdot)$, then it also must minimize $C(\eta,\cdot)$ and therefore $\eta\leq 1/2$. 
        
        However, notice that $C_\phi(\eta,\alpha)=C_\phi(1-\eta,-\alpha)$. Thus if 0 minimizes $C_\phi(\eta,\cdot)$ it must also minimize $C_\phi(1-\eta,\cdot)$.
        The consistency of $\phi$ then implies that $1-\eta\leq 1/2$ as well and consequently, $\eta=1/2$.

\end{proof}

Combining these two results proves \cref{lemma:C_phi_uniform}.
 
\begin{proof}[Proof of \cref{lemma:C_phi_uniform}]
    Notice that $C_\phi(\eta,\alpha)=C_\phi(1-\eta,-\alpha)$ and thus it suffices to consider $\eta\geq 1/2+r$. 

    \Cref{lemma:consistent_zeros} implies that $C_\phi(1/2+r,\alpha_\phi(1/2+r))<\phi(0)$. Furthermore, as $\phi(-\alpha)\geq \phi(0)\geq \phi(\alpha)$ when $\alpha\geq 0$, one can conclude that $\phi(\alpha_\phi(1/2+r))<\phi(0)$. Now pick an $\alpha_r\in (0,\alpha_\phi(1/2+r))$ for which $\phi(\alpha_\phi(1/2+r))<\phi(\alpha_r)<\phi(0)$. Then by \cref{lemma:minimizers_comparison}, if $\eta\geq 1/2+r$, every $\alpha$ less than or equal to $\alpha_r$ does not minimize $C_\phi(\eta,\alpha)$ and thus $C_\phi(\eta,\alpha)-C_\phi^*(\eta)>0$. Now define
    \[k_r=\inf_{\substack{\eta \in [1/2+r,1]\\\alpha \in [-\infty, \alpha_r]}} C_\phi(\eta,\alpha)-C_\phi^*(\eta)\]
    The set $[1/2+r,1]\times [-\infty, \alpha_r]$ is sequentially compact and the function $(\eta,\alpha)\mapsto C_\phi(\eta,\alpha)-C_\phi^*(\eta)$ is continuous and strictly positive on this set. Therefore, the infimum above is assumed for some $\eta,\alpha$ and consequently $k_r>0$.

    Lastly, $\phi(\alpha_r)<\phi(0)$ implies $\alpha_r>0$.
\end{proof}
\section{Deferred arguments from \cref{sec:consistency_implies_uniqueness}--- Proof of \cref{prop:consistency_reverse}}\label{app:consistency_reverse}

     To start, we formally prove that if the adversarial Bayes classifier is not unique up to degeneracy, then the sets $\{\hat \eta >1/2\}$ and $\{\hat \eta \geq 1/2\}$ are not equivalent up to degeneracy.

    This result in Lemma~\ref{lemma:not_degenerate_unique_adv_bayes} relies on a characterization of equivalence up to degeneracy from \citep[Proposition~5.1]{Frank2024uniqueness}.

    \begin{theorem}\label{thm:equivalence_up_to_degeneracy_characterization}
        Assume that $\PP$ is absolutely continuous with respect to Lebesgue measure and let $A_1$ and $A_2$ be two adversarial Bayes classifiers. Then the following are equivalent:
        \begin{enumerate}[label=\arabic*)]
        \item\label{it:degeneracy_equiv_uc} The adversarial Bayes classifiers $A_1$ and $A_2$ are equivalent up to degeneracy 
        \item \label{it:S_1_uc}\label{it:S_0_uc} Either $S_\e(\one_{A_1})=S_\e(\one_{A_2})$-$\PP_0$-a.e. or $S_\e(\one_{A_2^C})=S_\e(\one_{A_1^C})$-$\PP_1$-a.e. 
        \end{enumerate}
    \end{theorem}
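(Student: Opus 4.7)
My plan is to leverage the complementary slackness characterization of \cref{thm:complementary_slackness_classification_uc} together with the measures $\PP_0^*,\PP_1^*$ and the couplings $\gamma_0^*,\gamma_1^*$ supported on $\Delta_\e$. A preliminary observation used throughout is that whenever $A_1,A_2$ are adversarial Bayes classifiers, so are $A_1\cap A_2$ and $A_1\cup A_2$. This follows from the pointwise identities $S_\e(\one_{A_1\cup A_2})=\max(S_\e(\one_{A_1}),S_\e(\one_{A_2}))$ and $S_\e(\one_{A_1\cap A_2})\leq \min(S_\e(\one_{A_1}),S_\e(\one_{A_2}))$ (and analogous complement relations); summing against $\PP_0$ and $\PP_1$ yields $R^\e(A_1\cup A_2)+R^\e(A_1\cap A_2)\leq R^\e(A_1)+R^\e(A_2)=2R^\e_*$, and combined with $R^\e\geq R^\e_*$ this forces both to be adversarial Bayes.

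For the reverse direction (\cref{it:S_1_uc}$\Rightarrow$\cref{it:degeneracy_equiv_uc}), I would assume without loss of generality $S_\e(\one_{A_1})=S_\e(\one_{A_2})$ $\PP_0$-a.e.\ and first upgrade this to $\PP^*(A_1\triangle A_2)=0$. Since $\gamma_0^*$ has first marginal $\PP_0$, the hypothesis lifts to $\gamma_0^*$-a.e.\ equality of $S_\e(\one_{A_1})$ and $S_\e(\one_{A_2})$; combined with the CS identities $S_\e(\one_{A_i})(\bx)=\one_{A_i}(\bx')$ $\gamma_0^*$-a.e.\ (from $A_1,A_2$ adversarial Bayes), we obtain $\one_{A_1}(\bx')=\one_{A_2}(\bx')$ $\gamma_0^*$-a.e., whence $A_1=A_2$ $\PP_0^*$-a.e. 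The pointwise part of CS places $A_1\triangle A_2\subset\{\eta^*=1/2\}$ $\PP^*$-a.e., where $d\PP_0^*=d\PP_1^*=\tfrac{1}{2}\,d\PP^*$, so $\PP_0^*(A_1\triangle A_2)=0$ forces $\PP^*(A_1\triangle A_2)=0$. For any $A$ with $A_1\cap A_2\subset A\subset A_1\cup A_2$, we then have $A=A_1$ $\PP^*$-a.e., and verifying $A$ satisfies \cref{thm:complementary_slackness_classification_uc} reduces to the sandwich inequalities
\begin{align*}
\int S_\e(\one_A)\,d\PP_0&\leq \int S_\e(\one_{A_1\cup A_2})\,d\PP_0=\PP_0^*(A_1\cup A_2)=\PP_0^*(A),\\
\int S_\e(\one_{A^C})\,d\PP_1&\leq \int S_\e(\one_{(A_1\cap A_2)^C})\,d\PP_1=\PP_1^*((A_1\cap A_2)^C)=\PP_1^*(A^C),
\end{align*}
which combined with the opposite bounds from \cref{lemma:S_e_and_W_inf_uc} yield exactly the supremum CS conditions. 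The preliminary observation is what enables the middle equalities, reducing otherwise intractable suprema to evaluations against $\PP_0^*,\PP_1^*$.

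For the forward direction (\cref{it:degeneracy_equiv_uc}$\Rightarrow$\cref{it:S_1_uc}), the preliminary immediately supplies $R^\e(A_1\cup A_2)=R^\e(A_1\cap A_2)=R^\e_*$, so CS for these two sets gives $\PP_0^*(A_1\triangle A_2)=\PP_1^*(A_1\triangle A_2)=\tfrac{1}{2}\PP^*(A_1\triangle A_2)$. The remaining task is to show that the full hypothesis of equivalence up to degeneracy actually forces $\PP^*(A_1\triangle A_2)=0$: for every Borel $Y\subset A_2\setminus A_1$ the set $A_1\cup Y$ is adversarial Bayes, giving the family of CS identities $S_\e(\one_{A_1\cup Y})(\bx)=\one_{A_1\cup Y}(\bx')$ $\gamma_0^*$-a.e. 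Subtracting from the CS identity for $A_1$ itself produces, for every such $Y$, the constraint $(S_\e(\one_Y)-S_\e(\one_{A_1}))_+(\bx)=\one_Y(\bx')$ $\gamma_0^*$-a.e., which by varying $Y$ over Borel subsets and using the absolute continuity of $\PP$ (so that $\PP_0,\PP_1$ are nonatomic and the couplings are sufficiently regular) should rule out any positive $\PP^*$-mass on $A_1\triangle A_2$. Once $\PP^*(A_1\triangle A_2)=0$, the $\PP_0$-a.e.\ equality follows by sandwiching $S_\e(\one_{A_i})$ between $S_\e(\one_{A_1\cap A_2})$ and $S_\e(\one_{A_1\cup A_2})$, whose $\PP_0$-integrals both equal $\PP_0^*(A_1)$; the complement case is symmetric.

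The main obstacle is the forward direction: distilling $\PP^*(A_1\triangle A_2)=0$ from the whole family of CS equalities indexed by Borel sets between $A_1\cap A_2$ and $A_1\cup A_2$. The reverse direction is cleaner because the single $S_\e$-hypothesis directly supplies the $\PP_0^*$-level equality through one application of CS, whereas the forward direction must aggregate information across the rich family of in-between adversarial Bayes classifiers, and this is where absolute continuity of $\PP$ enters essentially to control the behavior of $\gamma_0^*,\gamma_1^*$ on $\{\eta^*=1/2\}$.
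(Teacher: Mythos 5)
The paper does not prove this theorem; it is cited directly as Proposition~5.1 of \citep{Frank2024uniqueness}, so there is no in-paper argument to compare against. Your proposal is thus a genuine attempt to re-derive a result the paper takes as given.

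Your reverse direction (\cref{it:S_1_uc}$\Rightarrow$\cref{it:degeneracy_equiv_uc}) is correct and complete. The preliminary observation that $A_1\cap A_2$ and $A_1\cup A_2$ are again adversarial Bayes classifiers (via $S_\e(\one_{A_1\cup A_2})=\max(S_\e(\one_{A_1}),S_\e(\one_{A_2}))$, $S_\e(\one_{A_1\cap A_2})\leq\min(\cdot,\cdot)$, and the corresponding complement identities) is a clean self-contained fact, and the chain from the $\PP_0$-a.e.\ hypothesis to $\PP_0^*(A_1\triangle A_2)=0$, then via the pointwise complementary slackness $A_1\triangle A_2\subset\{\eta^*=1/2\}$ $\PP^*$-a.e.\ to $\PP^*(A_1\triangle A_2)=0$, and finally the sandwich argument verifying \cref{thm:complementary_slackness_classification_uc} for any intermediate $A$, is sound. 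You also implicitly need that $A=A_1$ $\PP^*$-a.e.\ gives the pointwise condition \cref{eq:complementary_slackness_necessary_classification_uc} for $A$, which is immediate; worth a sentence.

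The forward direction has a genuine gap, as you acknowledge. You correctly reduce the claim to showing $\PP^*(A_1\triangle A_2)=0$, and you correctly extract, for every Borel $Y\subset A_2\setminus A_1$, the constraint $\bigl(S_\e(\one_Y)-S_\e(\one_{A_1})\bigr)_+(\bx)=\one_Y(\bx')$ $\gamma_0^*$-a.e., but the step from this family of identities to $\PP_0^*(A_2\setminus A_1)=0$ is asserted (``should rule out any positive $\PP^*$-mass'') rather than proved. Two concrete issues: first, absolute continuity of $\PP$ makes $\PP_0,\PP_1$ nonatomic but does \emph{not} make $\PP_0^*,\PP_1^*$ nonatomic --- an optimal $W_\infty$-perturbation can concentrate mass --- so your parenthetical appeal to nonatomicity of the dual measures is unjustified; second, even setting atoms aside, you have not shown how varying $Y$ extracts a contradiction (a natural next step would be to integrate the identity to get $\PP_0(\{S_\e(\one_{A_1})=0\}\cap Y^\e)=\PP_0^*(Y)$ for all $Y\subset A_1\triangle A_2$ and then use additivity over disjoint $Y_1,Y_2$ to kill overlaps of $\e$-expansions, but this still needs the geometry supplied by absolute continuity of $\PP$ itself, not $\PP^*$). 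This is exactly where the argument in \citep{Frank2024uniqueness} would need to be consulted.
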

    Notice that when there is a single equivalence class, the equivalence between \cref{it:degeneracy_equiv_uc} and \cref{it:S_1_uc} of \cref{thm:equivalence_up_to_degeneracy_characterization} is simply the equivalence between \cref{it:unique_under_deg_2} and \cref{it:S_e_unique_uc_2} in \cref{thm:uniqueness_equivalences_2}. 
    This result together with \cref{thm:adv_Bayes_construct} proves \cref{lemma:not_degenerate_unique_adv_bayes}:

       \begin{proof}[Proof of \cref{lemma:not_degenerate_unique_adv_bayes}]
        Let $A$ be any adversarial Bayes classifier. If the adversarial Bayes classifiers $\{\hat \eta>1/2\}$ and $\{\hat \eta \geq 1/2\}$ are equivalent up to degeneracy, then \cref{thm:adv_Bayes_construct} and \cref{it:S_0_uc} of \cref{thm:equivalence_up_to_degeneracy_characterization} imply that $S_\e(\one_A)=S_\e(\one_{\{\hat \eta >1/2\}})$ $\PP_0$-a.e. \Cref{it:S_0_uc} of \cref{thm:equivalence_up_to_degeneracy_characterization} again implies that $A$ and $\{\hat \eta >1/2\}$ must be equivalent up to degeneracy, and consequently the adversarial Bayes classifier must be unique up to degeneracy.

        Conversely, if the adversarial Bayes classifier is unique up to degeneracy, then the adversarial Bayes classifiers $\{\hat \eta >1/2\}$ and $\{\hat \eta \geq 1/2\}$ must be equivalent up to degeneracy.
    \end{proof}

     Thus, if the adversarial Bayes classifier is not unique up to degeneracy, then there is a set $\td A$ with $\{\hat \eta >1/2\}\subset \td A\subset \{\hat \eta \geq 1/2\}$ that is not an adversarial Bayes classifier, and this set is used to construct the sequence $f_n$ in \cref{eq:f_n_def}. Next, we show that $f_n$ minimizes $\prm$ but not $\cprm$.


    \begin{proposition}\label{prop:reverse_ov_Rset}
        Assume that $\PP$ is absolutely continuous with respect to Lebesgue measure and that the adversarial Bayes classifier is not unique up to degeneracy. Then there is a sequence of $\ov \Rset$-valued functions $f_n$ that minimize $\prm$ but $R^\e(f_n)$ is constant in $n$ and not equal to the adversarial Bayes risk.
    \end{proposition}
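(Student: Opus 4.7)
The plan is to flesh out the construction sketched in the main text. By \cref{lemma:not_degenerate_unique_adv_bayes}, failure of uniqueness up to degeneracy yields that $\{\hat\eta>1/2\}$ and $\{\hat\eta\geq 1/2\}$ are not equivalent up to degeneracy, so by \cref{def:uniqueness_up_to_degeneracy} there exists a Borel set $\td A$ with $\{\hat\eta>1/2\}\subset \td A\subset \{\hat\eta\geq 1/2\}$ that is not an adversarial Bayes classifier. Define the sequence $f_n$ by \cref{eq:f_n_def} using this $\td A$. Two claims then remain: (a) $R^\e(f_n)=R^\e(\td A)$ strictly exceeds the adversarial Bayes risk, and (b) $f_n$ is a minimizing sequence of $\prm$.

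For (a), I would identify the superlevel set $\{f_n>0\}$. Consistency of $\phi$ applied to constant-$\eta$ distributions rules out non-positive minimizers of $C_\phi(\eta,\cdot)$ when $\eta>1/2$ and non-negative minimizers when $\eta<1/2$; combined with \cref{lemma:consistent_zeros}, this forces $\alpha_\phi(\eta)>0$ for $\eta>1/2$ and $\alpha_\phi(\eta)<0$ for $\eta<1/2$. Inspecting \cref{eq:f_n_def} then gives
\[\{f_n>0\}=\{\hat\eta>1/2\}\cup\bigl(\{\hat\eta=1/2\}\cap\td A\bigr)=\td A,\]
where the last equality uses the sandwich $\{\hat\eta>1/2\}\subset \td A\subset \{\hat\eta\geq 1/2\}$. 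Hence $R^\e(f_n)=R^\e(\td A)$ for every $n$, and this value strictly exceeds $\inf_A R^\e(A)$ because $\td A$ was chosen not to be an adversarial Bayes classifier.

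For (b), the idea is that $\phi\circ f_n$ converges uniformly to $\phi\circ\td\alpha_\phi(\hat\eta)$. By construction, $f_n$ and $\td\alpha_\phi(\hat\eta)$ differ only on $\{\hat\eta=1/2\}$, where the latter equals $0$ and the former equals $\pm 1/n$; monotonicity of $\phi$ then gives
\[\sup_\bx\bigl|\phi(\pm f_n(\bx))-\phi(\pm\td\alpha_\phi(\hat\eta(\bx)))\bigr|\leq \delta_n:=\max\bigl(\phi(-1/n)-\phi(0),\,\phi(0)-\phi(1/n)\bigr),\]
and $\delta_n\to 0$ by continuity of $\phi$. Since $|S_\e(g_1)-S_\e(g_2)|\leq \|g_1-g_2\|_\infty$ pointwise, the same bound persists after applying $S_\e$, and integrating against the finite measures $\PP_0,\PP_1$ yields $\prm(f_n)\to \prm(\td\alpha_\phi(\hat\eta))=\inf_f\prm(f)$ by \cref{lemma:td_alpha_phi}. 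The main subtlety is that $f_n$ may take infinite values (since $\alpha_\phi$ is $\ov\Rset$-valued, e.g.\ for the exponential loss), which is accommodated by \cref{as:phi} guaranteeing that $\phi$ extends continuously to $\ov\Rset$; the absolute continuity of $\PP$ and the hypothesis $C_\phi^*(1/2)=\phi(0)$ enter only through the cited lemmas, so the rest is a routine uniform-approximation calculation.
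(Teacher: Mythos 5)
Your proof is correct and follows the same overall route as the paper: invoke \cref{lemma:not_degenerate_unique_adv_bayes} to produce a non-optimal $\td A$ sandwiched between $\{\hat\eta>1/2\}$ and $\{\hat\eta\geq 1/2\}$, define $f_n$ by \cref{eq:f_n_def}, check $\{f_n>0\}=\td A$ for every $n$, and conclude that $f_n$ minimizes $\prm$ using \cref{lemma:td_alpha_phi}. The one step you do differently is the last. You observe that $f_n$ and $\td\alpha_\phi(\hat\eta)$ agree exactly off $\{\hat\eta=1/2\}$ and differ by $1/n$ on it, so $\|\phi\circ(\pm f_n)-\phi\circ(\pm\td\alpha_\phi(\hat\eta))\|_\infty\leq\delta_n\to 0$; then the elementary fact that $S_\e$ is $1$-Lipschitz in the sup norm plus finiteness of $\PP_0,\PP_1$ gives $\prm(f_n)\to\prm(\td\alpha_\phi(\hat\eta))$ directly. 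The paper instead introduces an auxiliary restricted supremum operator $\td S_\e$, splits the domain into $\{\hat\eta=1/2\}^\e$ and its complement, derives a pointwise $\limsup$ bound, and passes to the limit via dominated convergence with the dominating function $S_\e(\phi\circ\pm\alpha_\phi(\hat\eta))+\phi(-1)$. Your uniform-approximation argument is cleaner: it avoids constructing a dominating function and avoids the casework. The only bookkeeping is what you flag--$\pm\infty$ values of $f_n$ occur precisely where $f_n$ and $\td\alpha_\phi(\hat\eta)$ coincide, so they contribute nothing to the difference; and $\prm(\td\alpha_\phi(\hat\eta))$ is finite (being the infimum of $\prm$, bounded above by $\phi(0)(\PP_0+\PP_1)(\Rset^d)$), so subtracting the integrals is legitimate.
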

    \begin{proof}

    By \cref{lemma:not_degenerate_unique_adv_bayes}, there is a set $\td A$ with $\{\hat \eta>1/2\}\subset \td A\subset \{\hat \eta \geq 1/2\}$ which is not an adversarial Bayes classifier. For this set $\td A$, define the sequence $f_n$ by \cref{eq:f_n_def}
    and let $\td \alpha_\phi$ be the function in \cref{lemma:td_alpha_phi}. \Cref{lemma:consistent_zeros} implies that $\tilde \alpha_\phi(\eta)\neq 0$ whenever $\eta\neq 1/2$ and thus $\{f_n>0\}=\tilde A$ for all $n$. We will show that in the limit $n\to \infty$, the function sequence $S_\e(\phi \circ f_n)$ is bounded above by $S_\e (\phi \circ \tilde \alpha_\phi(\hat \eta))$ while $S_\e(\phi \circ -f_n)$ is bounded above by $S_\e(\phi \circ -\td \alpha_\phi(\hat \eta))$. This result will imply that $f_n$ is a minimizing sequence of $R_\phi^\e$ due to \cref{lemma:td_alpha_phi}. 

    Let $\td S_\e(g)$ denote the supremum of a function $g$ on an $\e$-ball excluding the set $\hat \eta(\bx)=1/2$:
    \[ \td S_\e(g)(\bx)=\begin{cases}
        \sup_{\substack{\bx' \in \ov{B_\e(\bx)}\\ \hat \eta(\bx')\neq 1/2}} g(\bx') &\text{if } \ov{B_\e(\bx)}\cap \{\hat \eta \neq 1/2\}^C \neq \emptyset\\
        -\infty &\text{otherwise}
    \end{cases}\]
    With this notation, because $\td \alpha_\phi(1/2)=0$, one can express $S_\e(\phi \circ \td \alpha_\phi(\hat \eta))$, $S_\e(\phi \circ -\td \alpha_\phi(\hat \eta))$ as 
    \begin{equation}\label{eq:td_S_e_+}
        S_\e\big(\phi \circ \td \alpha_\phi(\hat \eta)\big)(\bx)=\begin{cases}
        \max\big( \td S_\e\big(\phi \circ \alpha_\phi(\hat \eta)\big)(\bx), \phi(0)\big) &\bx \in \{\hat \eta =1/2\}^\e \\
        S_\e\big(\phi \circ \alpha_\phi(\hat \eta)\big)(\bx)&\bx \not \in \{\hat \eta=1/2\}^\e
    \end{cases}
    \end{equation}

     \begin{equation}\label{eq:td_S_e_-}
        S_\e\big(\phi \circ -\td \alpha_\phi(\hat \eta)\big)(\bx)=\begin{cases}
        \max\big( \td S_\e\big(\phi \circ -\alpha_\phi(\hat \eta)\big)(\bx), \phi(0)\big) &\bx \in \{\hat \eta =1/2\}^\e \\
        S_\e\big(\phi \circ -\alpha_\phi(\hat \eta)\big)(\bx)&\bx \not \in \{\hat \eta=1/2\}^\e
    \end{cases}
    \end{equation}

    and similarly
    \begin{equation}\label{eq:td_S_e_+_n}
        S_\e(\phi \circ f_n)(\bx)\leq \begin{cases}
        \max\big( \td S_\e\big(\phi \circ \alpha_\phi(\hat \eta)\big)(\bx), \phi(-\frac 1n)\big) &\bx \in \{\hat \eta =1/2\}^\e \\
        S_\e(\phi \circ \alpha_\phi(\hat \eta))(\bx)&\bx \not \in \{\hat \eta=1/2\}^\e
    \end{cases}
    \end{equation}
    \begin{equation}\label{eq:td_S_e_-_n}
        S_\e(\phi \circ -f_n)(\bx)\leq \begin{cases}
        \max\big( \td S_\e\big(\phi \circ -\alpha_\phi(\hat \eta)\big)(\bx), \phi(-\frac 1n)\big) &\bx \in \{\hat \eta =1/2\}^\e \\
        S_\e\big(\phi \circ -\alpha_\phi(\hat \eta)\big)(\bx)&\bx \not \in \{\hat \eta=1/2\}^\e
    \end{cases}
    \end{equation}
    Therefore, by comparing \cref{eq:td_S_e_+_n} with \cref{eq:td_S_e_+} and \cref{eq:td_S_e_-_n} with \cref{eq:td_S_e_-}, one can conclude that 
    \begin{equation}\label{eq:ls_to_min_risk}
    \limsup_{n\to \infty} S_\e(\phi \circ f_n)\leq S_\e\big(\phi \circ \td \alpha_\phi(\hat \eta)\big)\quad \text{and}\quad \limsup_{n\to \infty} S_\e(\phi \circ -f_n)\leq S_\e\big(\phi \circ -\td \alpha_\phi(\hat \eta)\big).
    \end{equation}
    Furthermore, the right-hand of \cref{eq:td_S_e_+_n} is bounded above by $ S_\e(\phi \circ \alpha_\phi(\hat \eta))+\phi(-1)$ while the right-hand of \cref{eq:td_S_e_-_n} is bounded above by $ S_\e(\phi \circ -\alpha_\phi(\hat \eta))+\phi(-1)$. Thus the dominated convergence theorem and \cref{eq:ls_to_min_risk} implies that 
    \[\limsup_{n\to \infty} R_\phi^\e(f_n)\leq R_\phi^\e(\td \alpha_\phi(\hat \eta))\] and thus $f_n$ minimizes $R_\phi^\e$.

    \end{proof}

    Lastly, it remains to construct an $\Rset$-valued sequence that minimizes $\prm$ but not $\cprm$. To construct this sequence, we threshhold a subsequence $f_{n_j}$ of $f_n$ at an appropriate value $T_j$. If $g$ is an $\ov \Rset$-valued function and $g^{(N)}$ is the function $g$ threshholded at $N$, then $\lim_{N\to \infty } R_\phi^\e(g^{(N)})=R_\phi^\e(g)$.
    \begin{lemma}\label{lemma:threshholding}
        Let $g$ be an $\ov \Rset$-valued function and let $g^{(N)}=\min(\max(g,-N),N)$. Then $\lim_{N\to \infty} \prm(g^{(N)})=\prm(g)$.
    \end{lemma}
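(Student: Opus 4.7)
The plan is to combine pointwise convergence of the supremum functionals $S_\e(\phi\circ g^{(N)})$ with a uniform $L^1$ domination and close via the dominated convergence theorem, with Fatou's lemma as a fallback when the limiting risk is infinite. Since $(-g)^{(N)} = -g^{(N)}$, the two integrals defining $\prm$ behave identically under this truncation, so I will focus on the term $\int S_\e(\phi\circ g^{(N)})\,d\PP_1$; the same argument applied to $-g$ in place of $g$ handles the $\PP_0$ term.

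For the pointwise step, I extend $\phi$ continuously to $\overline\Rset$ by setting $\phi(+\infty)=0$ (per \cref{as:phi}) and $\phi(-\infty) = \lim_{\alpha\to-\infty}\phi(\alpha)$, which exists in $[0,+\infty]$ by monotonicity. Then $g^{(N)}(\bx')\to g(\bx')$ in $\overline\Rset$ for each $\bx'$, so $\phi(g^{(N)}(\bx'))\to \phi(g(\bx'))$ pointwise. The liminf inequality $\liminf_N S_\e(\phi\circ g^{(N)})(\bx)\ge S_\e(\phi\circ g)(\bx)$ comes for free from testing against a near-maximizer $\bx^*\in\ov{B_\e(\bx)}$ of the right-hand side. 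For the reverse inequality, the key pointwise bound is $\phi(g^{(N)}(\bx'))\leq \max(\phi(N),\phi(g(\bx')))$: truncation leaves $\phi$ unchanged unless $g(\bx')$ lies outside $[-N,N]$, and the only case where it produces a \emph{larger} value of $\phi$ is $g(\bx')\geq N$, which caps the result at $\phi(N)$. Taking $\sup$ over $\bx'\in \ov{B_\e(\bx)}$ and using $\phi(N)\to 0$ gives $\limsup_N S_\e(\phi\circ g^{(N)})(\bx)\leq S_\e(\phi\circ g)(\bx)$.

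For the uniform domination, a sign-split on $g(\bx')$ yields $\phi(g^{(N)}(\bx'))\leq \phi(0)+\phi(g(\bx'))$ for every $\bx'$ and every $N\geq 0$: when $g(\bx')\geq 0$, the truncation $g^{(N)}(\bx')$ is also $\geq 0$, so $\phi(g^{(N)})\leq \phi(0)$; when $g(\bx')<0$, one has $g^{(N)}(\bx')\geq g(\bx')$, so $\phi(g^{(N)})\leq \phi(g)$. Taking $\sup$ over the $\e$-ball gives the envelope $S_\e(\phi\circ g^{(N)})\leq \phi(0)+S_\e(\phi\circ g)$. If $\int S_\e(\phi\circ g)\,d\PP_1<\infty$, this envelope lies in $L^1(\PP_1)$ (since $\PP_1$ is finite) and dominated convergence upgrades the pointwise convergence above to integral convergence. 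If instead $\int S_\e(\phi\circ g)\,d\PP_1=\infty$, Fatou's lemma applied to the nonnegative sequence $S_\e(\phi\circ g^{(N)})$, combined with the liminf bound of the previous paragraph, forces $\int S_\e(\phi\circ g^{(N)})\,d\PP_1\to\infty$ as required.

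The main obstacle is that $N\mapsto g^{(N)}(\bx)$ is not monotone---it increases on $\{g>0\}$ but decreases on $\{g<0\}$---so neither the monotone convergence theorem nor a one-sided sup bound applies directly. The resolution is the sign-dependent domination above: invoking the trivial upper bound $\phi(0)$ on the positive side and $\phi(g)$ on the negative side produces a single $N$-independent envelope, sidestepping the non-monotonicity entirely.
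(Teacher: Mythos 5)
Your proof is correct, and it lands on the same overall strategy as the paper (pointwise convergence of $S_\e(\phi\circ g^{(N)})$ plus dominated convergence), but the mechanism by which you establish the pointwise and domination bounds is different. The paper's argument is algebraic: it observes that the clipping operator $\sigma_{[a,b]}$ commutes with $S_\e$ (both are monotone, continuous operations) and that $\phi\circ\sigma_{[a,b]}=\sigma_{[\phi(b),\phi(a)]}\circ\phi$ by the monotonicity of $\phi$, which yields the exact identity $S_\e(\phi\circ g^{(N)})=\sigma_{[\phi(N),\phi(-N)]}(S_\e(\phi\circ g))$. From this single formula both pointwise convergence and the envelope $S_\e(\phi\circ g)+\phi(1)$ (for $N\geq 1$) drop out immediately. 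Your argument replaces this identity with two elementary case analyses: a $\max(\phi(N),\phi(g))$ upper bound together with a near-maximizer argument for the pointwise limit, and a sign-split $\phi(g^{(N)})\leq\phi(0)+\phi(g)$ for domination. Both routes are clean; the paper's is slightly more compact because one identity does double duty, while yours is more self-contained and makes the non-monotonicity obstacle (and its resolution) explicit. You also add a Fatou fallback for the case $\prm(g)=\infty$, which the paper's proof implicitly sidesteps since the lemma is only invoked on functions with finite surrogate risk; this is a harmless but not strictly necessary extra.
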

    See \cref{app:threshholding} for a proof. \Cref{prop:consistency_reverse} then follows from this lemma and \cref{prop:reverse_ov_Rset}:
    \begin{proof}[Proof of \cref{prop:consistency_reverse}]
        Let $f_n$ be the $\ov\Rset$-valued sequence of functions in \cref{prop:reverse_ov_Rset}, and let $f_{n_j}$ be a subsequence for which $\prm(f_{n_j})-\inf_f R_\phi^\e(f)<1/j$. Next, \cref{lemma:threshholding} implies that for each $j$ one can pick a threshhold $N_j$ for which $|\prm(f_{n_j})-\prm(f_{n_j}^{(N_j)})|\leq 1/j$. Consequently, $f^{(N_j)}_{n_j}$ is an $\Rset$-valued sequence of functions that minimizes $\prm$. However, notice that $\{f\leq 0\}=\{f^{(T)}\leq 0\}$ and $\{f>0\}=\{f^{(T)}>0\}$ for any strictly positive threshhold $T$. Thus $\cprm(f_{n_j}^{(N_j)})=\cprm(f_{n_j})$ and consequently $f_{n_j}^{(N_j)}$ does not minimize $\cprm$.  
    \end{proof}
    
    \subsection{Proof of \cref{lemma:td_alpha_phi}}\label{app:alpha_phi_zeros}
        The proof of \cref{lemma:td_alpha_phi} follows the same outline as the argument for \cref{prop:hat_eta_adv_bayes_small}: we show that $\prm(\tilde\alpha_\phi(\hat \eta))=\dl(\PP_0^*,\PP_1^*)$ for the measures $\PP_0^*$, $\PP_1^*$ in \cref{prop:hat_eta}, and then \cref{thm:strong_duality_surrogate} implies that $\tilde \alpha_\phi(\hat \eta)$ must minimize $\prm$. Similar to the proof of \cref{prop:hat_eta_adv_bayes_small}, swapping the order of the $S_\e$ operation and $\tilde \alpha_\phi$ is a key step. To show that this swap is possible, we first prove that $\tilde \alpha_\phi$ is monotonic.

    \begin{lemma}\label{lemma:td_alpha_monotonic}
        If $C_\phi^*(1/2)=\phi(0)$, then the function $\tilde \alpha_\phi:[0,1]\to \ov \Rset$ defined in \cref{eq:tilde_alpha_phi} is non-decreasing and maps each $\eta$ to a minimizer of $C_\phi(\eta,\cdot)$.
    \end{lemma}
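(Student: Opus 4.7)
The plan is to verify the two conclusions of the lemma separately. The minimizer claim is immediate: for $\eta \neq 1/2$, $\tilde\alpha_\phi(\eta) = \alpha_\phi(\eta)$ is a minimizer of $C_\phi(\eta,\cdot)$ by the definition in \cref{eq:smallest_minimizer_function}; for $\eta = 1/2$, the hypothesis $C_\phi^*(1/2) = \phi(0)$ together with $C_\phi(1/2, 0) = \phi(0)$ shows that $\tilde\alpha_\phi(1/2) = 0$ is a minimizer of $C_\phi(1/2,\cdot)$.

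For monotonicity I would split $\eta_1 < \eta_2$ into three subcases according to position relative to $1/2$. When both are different from $1/2$, $\tilde\alpha_\phi$ agrees with $\alpha_\phi$ on both arguments and the conclusion is exactly \cref{lemma:smallest_minimizer_function}. When $\eta_1 < 1/2 = \eta_2$, I would apply \cref{lemma:minimizers_comparison} with $\alpha_2^* = 0$ (a minimizer of $C_\phi(1/2,\cdot)$ by the hypothesis) to conclude $\tilde\alpha_\phi(\eta_1) = \alpha_\phi(\eta_1) \leq 0 = \tilde\alpha_\phi(1/2)$.

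The main obstacle is the remaining case $1/2 = \eta_1 < \eta_2$, where I must show $\alpha_\phi(\eta_2) \geq 0$. The key step is the identity
\[
    C_\phi(\eta_2, \alpha) - C_\phi(\eta_2, 0) = \bigl[C_\phi(1/2, \alpha) - \phi(0)\bigr] + (\eta_2 - \tfrac{1}{2})\bigl(\phi(\alpha) - \phi(-\alpha)\bigr),
\]
in which both bracketed terms are non-negative for $\alpha < 0$: the first because $C_\phi^*(1/2) = \phi(0)$, and the second because $\eta_2 > 1/2$ and $\phi$ is non-increasing (so $\phi(\alpha) \geq \phi(-\alpha)$ for $\alpha < 0$). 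Thus $C_\phi(\eta_2, \alpha) \geq \phi(0) = C_\phi(\eta_2, 0)$ for every $\alpha < 0$, so if some $\alpha_0 < 0$ minimized $C_\phi(\eta_2,\cdot)$ then $0$ would also be a minimizer. Invoking \cref{lemma:consistent_zeros} via the consistency of $\phi$ (inherited from the parent statement \cref{lemma:td_alpha_phi}) would then force $\eta_2 = 1/2$, a contradiction.

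The delicate point is precisely this final case: without consistency the claim can fail, since a loss that is constant at value $\phi(0)$ on a neighborhood of $0$ satisfies $C_\phi^*(1/2) = \phi(0)$ yet has $\alpha_\phi(\eta) < 0$ for some $\eta > 1/2$. So the consistency assumption from \cref{lemma:td_alpha_phi} is genuinely needed to close this case and deserves to be stated explicitly.
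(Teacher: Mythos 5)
Your proof is correct, and it takes a genuinely different route from the paper's. The paper handles monotonicity across $1/2$ by invoking \cref{lemma:C_phi_uniform}: that lemma (which already assumes $\phi$ is consistent) provides, for each $r>0$, a positive threshold $\alpha_r$ such that every $\alpha\leq\alpha_r$ fails to minimize $C_\phi(\eta,\cdot)$ when $\eta\geq 1/2+r$, hence $\alpha_\phi(\eta)>0$ on $(1/2,1]$ and by symmetry $\alpha_\phi(\eta)<0$ on $[0,1/2)$; together with the monotonicity of $\alpha_\phi$ on each half-interval (\cref{lemma:smallest_minimizer_function}) and $\tilde\alpha_\phi(1/2)=0$, this gives the result. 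You instead prove the crucial inequality $\alpha_\phi(\eta)\geq 0$ for $\eta>1/2$ directly: your algebraic decomposition of $C_\phi(\eta,\alpha)-C_\phi(\eta,0)$ into two nonnegative terms for $\alpha<0$ shows that any negative minimizer would force $0$ to also be a minimizer, and you then invoke \cref{lemma:consistent_zeros} rather than the heavier quantitative \cref{lemma:C_phi_uniform}. Your argument is somewhat more self-contained and makes the role of consistency more transparent, at the cost of checking three subcases explicitly where the paper unifies them. Both arguments ultimately rest on \cref{lemma:consistent_zeros}, and your observation that the lemma's hypotheses as stated omit consistency—which is silently used by the paper via \cref{lemma:C_phi_uniform}—is correct and worth flagging. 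One minor caveat: your sketch of a counterexample (``constant at value $\phi(0)$ on a neighborhood of $0$'') is incomplete as phrased, since constancy near $0$ alone neither guarantees $C_\phi^*(1/2)=\phi(0)$ (consider the $\rho$-margin loss) nor forces a negative minimizer for some $\eta>1/2$; one also needs $\phi$ to decrease sufficiently slowly on the right of the flat interval relative to its growth on the left. Nevertheless, such a nonconsistent $\phi$ with $C_\phi^*(1/2)=\phi(0)$ and $\alpha_\phi(\eta)<0$ for some $\eta>1/2$ does exist, so the consistency hypothesis really is needed and should appear in the statement of \cref{lemma:td_alpha_monotonic}.
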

    \begin{proof}
        \Cref{lemma:smallest_minimizer_function} implies that $\td \alpha_\phi(\eta)$ is a minimizer of $C_\phi(\eta,\cdot)$ for all $\eta\neq 1/2$ and the assumption $C_\phi^*(1/2)=\phi(0)$ implies that $\td \alpha_\phi(1/2)$ is a minimizer of $C_\phi(1/2,\cdot)$.
        Furthermore, \cref{lemma:smallest_minimizer_function} implies that $\tilde \alpha_\phi$ is non-decreasing on $[0,1/2)$ and $(1/2,1]$. However, \cref{lemma:C_phi_uniform} implies that $\alpha_\phi(\eta)<0$ when $\eta\in [0,1/2)$ and $\alpha_\phi(\eta)>0$ when $\eta \in (1/2,1]$. Consequently, $\tilde \alpha_\phi$ is non-decreasing on all of $[0,1]$.
    \end{proof}
    This result together with the properties of $\PP_0^*$, $\PP_1^*$ in \cref{thm:hat_eta_adv_baye} suffice to prove \cref{lemma:td_alpha_phi}.
    \begin{proof}[Proof of \cref{lemma:td_alpha_phi}]
        Let $\PP_0^*, \PP_1^*$ be the measures of \cref{thm:hat_eta_adv_baye} and set $\PP^*=\PP_0^*+\PP_1^*$, $\eta^*=d\PP_1^*/d\PP^*$. We will prove that $\prm(\tilde \alpha_\phi(\hat \eta))=\dl(\PP_0^*,\PP_1^*)$ and thus \cref{thm:strong_duality_surrogate} will imply that $\tilde \alpha_\phi(\hat \eta)$ minimizes $\prm$.
        Let $\gamma_0^*$ and $\gamma_1^*$ be the couplings supported on $\Delta_\e$ between $\PP_0$, $\PP_0^*$ and $\PP_1$, $\PP_1^*$ respectively. \Cref{it:I_S_hat_eta} of \cref{prop:hat_eta} and \cref{lemma:td_alpha_monotonic} imply that 
        \[S_\e(\phi(\td \alpha_\phi(\hat \eta)))(\bx)= \phi(\td \alpha_\phi(I_\e(\hat \eta(\bx)))) = \phi(\tilde \alpha_\phi(\hat \eta(\bx')))\quad \gamma_1^*\text{-a.e.} \]
        and 
        \[S_\e(\phi(-\td \alpha_\phi(\hat \eta)))(\bx)= \phi(-\td \alpha_\phi(S_\e(\hat \eta(\bx)))) = \phi(\tilde \alpha_\phi(-\hat \eta(\bx')))\quad \gamma_0^*\text{-a.e.} \]
        (Recall the the notation $I_\e$ was introduced in \cref{eq:I_e_def}.)
        Therefore,
        \begin{equation*}
            \begin{aligned}
                \prm(\tilde \alpha_\phi(\hat \eta))&= \int \phi(\tilde \alpha_\phi(\hat \eta(\bx')) d\gamma_1^*+ \int \phi(-\tilde \alpha_\phi(\hat \eta(\bx')))d\gamma_0^*\\
                &= \int \phi(\tilde \alpha_\phi(\hat \eta(\bx'))) d\PP_1^*+ \int \phi(-\tilde \alpha_\phi(\hat \eta(\bx')))d\PP_0^*=\int C_\phi(\eta^*,\tilde \alpha_\phi(\hat \eta))d\PP^*
            \end{aligned}    
        \end{equation*}

        Next, \cref{it:a.e._hat_eta} of \cref{prop:hat_eta} implies that $\hat \eta(\bx')=\eta^*(\bx')$  and consequently 
        \[\prm(\tilde \alpha_\phi(\hat \eta))=\int C_\phi(\eta^*,\tilde \alpha_\phi(\hat \eta))d\PP^*=\int C_\phi(\eta^*,\tilde \alpha_\phi(\eta^*))d\PP^*=\int C_\phi^*(\eta^*)d\PP^*=\dl(\PP_0^*,\PP_1^*)\]
        Therefore, the strong duality result in \cref{thm:strong_duality_surrogate} implies that $\tilde \alpha_\phi(\hat \eta)$ must minimize $\prm$.

    \end{proof}
    \subsection{Proof of \cref{lemma:threshholding}}\label{app:threshholding}
    This argument is taken from the proof of Lemma~8 in \cite{FrankNilesWeed23consistency}.
    \begin{proof}[Proof of \cref{lemma:threshholding}]
        Define
        \[\sigma_{[a,b]}(\alpha)=\begin{cases} a&\text{if }\alpha <a\\
        \alpha&\text{if }\alpha\in [a,b]\\
        b&\text{if } \alpha> b
        \end{cases}\]
        Notice that 
        \[S_\e(\sigma_{[a,b]}(h))=\sigma_{[a,b]}(S_\e(h))\]
        and 
        \[\phi(\sigma_{[a,b]}(g))=\sigma_{[\phi(b),\phi(a)]}(\phi(g))\]
        for any functions $g$ and $h$.
        Therefore,
        \[S_\e(\phi(g^{(N)}))=\sigma_{[\phi(N),\phi(-N)]}(S_\e(\phi \circ g))\quad \text{and} \quad S_\e(\phi \circ -g^{(N)})=\sigma_{[\phi(N),\phi(-N)]}(S_\e(\phi \circ -g)),\] which converge to $S_\e(\phi\circ g)$ and $S_\e(\phi \circ -g)$ pointwise and $N\to \infty$. Furthermore, the functions $S_\e(\phi \circ g^{(N)})$ and $S_\e(\phi\circ -g^{(N)})$ are bounded above by 
        \[S_\e(\phi\circ g^{(N)})\leq S_\e(\phi\circ g)+\phi(1)\quad\text{and}\quad S_\e(\phi\circ -g^{(N)})\leq S_\e(\phi\circ -g)+\phi(1)\] for $N\geq 1$.
        As the functions $S_\e(\phi\circ g)+\phi(1)$ and $S_\e(\phi\circ- g)+\phi(1)$ are integrable with respect to $\PP_1$ and $\PP_0$ respectively, the dominated convergence theorem implies that 
        \[\lim_{n\to \infty} R_\phi^\e(g^{(N)})= R_\phi^\e(g).\]
        
    \end{proof}

\bibliographystyle{abbrvnat}
\bibliography{bibliography,bib_other,bibliography_mohri_anqi_yutao_H_consistency}

\begin{thebibliography}{37}
\providecommand{\natexlab}[1]{#1}
\providecommand{\url}[1]{\texttt{#1}}
\expandafter\ifx\csname urlstyle\endcsname\relax
  \providecommand{\doi}[1]{doi: #1}\else
  \providecommand{\doi}{doi: \begingroup \urlstyle{rm}\Url}\fi

\bibitem[Awasthi et~al.(2021{\natexlab{a}})Awasthi, Frank, Mao, Mohri, and Zhong]{AwasthiFrankMao2021}
P.~Awasthi, N.~S. Frank, A.~Mao, M.~Mohri, and Y.~Zhong.
\newblock Calibration and consistency of adversarial surrogate losses.
\newblock \emph{NeurIps}, 2021{\natexlab{a}}.

\bibitem[Awasthi et~al.(2021{\natexlab{b}})Awasthi, Mao, Mohri, and Zhong]{AwasthiMaoMohriZhong21}
P.~Awasthi, A.~Mao, M.~Mohri, and Y.~Zhong.
\newblock A finer calibration analysis for adversarial robustness.
\newblock \emph{arxiv}, 2021{\natexlab{b}}.

\bibitem[Awasthi et~al.(2022)Awasthi, Mao, Mohri, and Zhong]{AwasthiMaoMohriZhong22Hconsistencybinary}
P.~Awasthi, A.~Mao, M.~Mohri, and Y.~Zhong.
\newblock H-consistency bounds for surrogate loss minimizers.
\newblock In K.~Chaudhuri, S.~Jegelka, L.~Song, C.~Szepesvari, G.~Niu, and S.~Sabato, editors, \emph{Proceedings of the 39th International Conference on Machine Learning}, Proceedings of Machine Learning Research. PMLR, 2022.

\bibitem[Awasthi et~al.(2023)Awasthi, Frank, and Mohri]{AwasthiFrankMohri2021}
P.~Awasthi, N.~S. Frank, and M.~Mohri.
\newblock On the existence of the adversarial bayes classifier (extended version), 2023.

\bibitem[Bao et~al.(2021)Bao, Scott, and Sugiyama]{BaoScottSugiyama2021calibrated}
H.~Bao, C.~Scott, and M.~Sugiyama.
\newblock Calibrated surrogate losses for adversarially robust classification.
\newblock \emph{arxiv}, 2021.

\bibitem[Bartlett et~al.(2006)Bartlett, Jordan, and McAuliffe]{BartlettJordanMcAuliffe2006}
P.~L. Bartlett, M.~I. Jordan, and J.~D. McAuliffe.
\newblock Convexity, classification, and risk bounds.
\newblock \emph{Journal of the American Statistical Association}, 101\penalty0 (473), 2006.

\bibitem[Ben-David et~al.(2003)Ben-David, Eiron, and Long]{BenDavidEironLong2003}
S.~Ben-David, N.~Eiron, and P.~M. Long.
\newblock On the difficulty of approximately maximizing agreements.
\newblock \emph{Journal of Computer System Sciences}, 2003.

\bibitem[Bhagoji et~al.(2019)Bhagoji, Cullina, and Mittal]{BhagojiCullinaMittalji2019lower}
A.~N. Bhagoji, D.~Cullina, and P.~Mittal.
\newblock Lower bounds on adversarial robustness from optimal transport, 2019.

\bibitem[Biggio et~al.(2013)Biggio, Corona, Maiorca, Nelson, {\v{S}}rndi{\'c}, Laskov, Giacinto, and Roli]{biggio2013evasion}
B.~Biggio, I.~Corona, D.~Maiorca, B.~Nelson, N.~{\v{S}}rndi{\'c}, P.~Laskov, G.~Giacinto, and F.~Roli.
\newblock Evasion attacks against machine learning at test time.
\newblock In \emph{Joint European conference on machine learning and knowledge discovery in databases}, pages 387--402. Springer, 2013.

\bibitem[Bungert et~al.(2021)Bungert, Trillos, and Murray]{BungertGarciaMurray2021}
L.~Bungert, N.~G. Trillos, and R.~Murray.
\newblock The geometry of adversarial training in binary classification.
\newblock \emph{arxiv}, 2021.

\bibitem[Chen et~al.(2022)Chen, Raghuram, Choi, Wu, Liang, and Jha]{ChenRaghuramChoi2022revisiting}
J.~Chen, J.~Raghuram, J.~Choi, X.~Wu, Y.~Liang, and S.~Jha.
\newblock Revisiting adversarial robustness of classifiers with a reject option, 2022.

\bibitem[Chen et~al.(2023)Chen, Raghuram, Choi, Wu, Liang, and Jha]{ChenRaghuramChoi2023stratified}
J.~Chen, J.~Raghuram, J.~Choi, X.~Wu, Y.~Liang, and S.~Jha.
\newblock Stratified adversarial robustness with rejection, 2023.

\bibitem[Deng et~al.(2020)Deng, Zheng, Zhang, Chen, Lou, and Kim]{DengZheng2020autonomousdriving}
Y.~Deng, X.~Zheng, T.~Zhang, C.~Chen, G.~Lou, and M.~Kim.
\newblock An analysis of adversarial attacks and defenses on autonomous driving models, 2020.

\bibitem[Folland(1999)]{folland}
G.~B. Folland.
\newblock \emph{Real analysis: modern techniques and their applications}, volume~40.
\newblock John Wiley \& Sons, 1999.

\bibitem[Frank(2024)]{Frank2024uniqueness}
N.~S. Frank.
\newblock A notion of uniqueness for the adversarial bayes classifier, 2024.

\bibitem[Frank and Niles-Weed(2024{\natexlab{a}})]{FrankNilesWeed23consistency}
N.~S. Frank and J.~Niles-Weed.
\newblock The adversarial consistency of surrogate risks for binary classification.
\newblock \emph{NeurIps}, 2024{\natexlab{a}}.

\bibitem[Frank and Niles-Weed(2024{\natexlab{b}})]{FrankNilesWeed23minimax}
N.~S. Frank and J.~Niles-Weed.
\newblock Existence and minimax theorems for adversarial surrogate risks in binary classification.
\newblock \emph{JMLR}, 2024{\natexlab{b}}.

\bibitem[Gnecco-Heredia et~al.(2023)Gnecco-Heredia, Chevaleyre, Negrevergne, Meunier, and Pydi]{GneccoherediaPydi2023role}
L.~Gnecco-Heredia, Y.~Chevaleyre, B.~Negrevergne, L.~Meunier, and M.~S. Pydi.
\newblock On the role of randomization in adversarially robust classification, 2023.

\bibitem[Jylh\"{a}(2014)]{Jylha15}
H.~Jylh\"{a}.
\newblock The $l^\infty$ optimal transport: Infinite cyclical monotonicity and the existence of optimal transport maps.
\newblock \emph{Calculus of Variations and Partial Differential Equations}, 2014.

\bibitem[Kato et~al.(2020)Kato, Cui, and Fukuhara]{KatoCuiFukuhara20}
M.~Kato, Z.~Cui, and Y.~Fukuhara.
\newblock {ATRO:} adversarial training with a rejection option.
\newblock \emph{CoRR}, 2020.

\bibitem[Li and Telgarsky(2023)]{LiTelgarsky2023achieving}
J.~D. Li and M.~Telgarsky.
\newblock On achieving optimal adversarial test error, 2023.

\bibitem[Lin(2004)]{Lin2004}
Y.~Lin.
\newblock A note on margin-based loss functions in classification.
\newblock \emph{Statistics \& Probability Letters}, 68\penalty0 (1):\penalty0 73--82, 2004.

\bibitem[Meunier et~al.(2022)Meunier, Ettedgui, Pinot, Chevaleyre, and Atif]{MeunierEttedguietal22}
L.~Meunier, R.~Ettedgui, R.~Pinot, Y.~Chevaleyre, and J.~Atif.
\newblock Towards consistency in adversarial classification.
\newblock \emph{arXiv}, 2022.

\bibitem[Mingyuan~Zhang(2020)]{ZhangAgarwal}
S.~A. Mingyuan~Zhang.
\newblock Consistency vs. h-consistency: The interplay between surrogate loss functions and the scoring function class.
\newblock \emph{NeurIps}, 2020.

\bibitem[Paschali et~al.(2018)Paschali, Conjeti, Navarro, and Navab]{PaschaliConjetiNavarroNavab2018medical}
M.~Paschali, S.~Conjeti, F.~Navarro, and N.~Navab.
\newblock Generalizability vs. robustness: Adversarial examples for medical imaging.
\newblock \emph{Springer}, 2018.

\bibitem[Philip M.~Long(2013)]{LongServedioH-consistency}
R.~A.~S. Philip M.~Long.
\newblock Consistency versus realizable h-consistency for multiclass classification.
\newblock \emph{ICML}, 2013.

\bibitem[Pydi and Jog(2020)]{PydiJog2020}
M.~S. Pydi and V.~Jog.
\newblock Adversarial risk via optimal transport and optimal couplings.
\newblock \emph{ICML}, 2020.

\bibitem[Pydi and Jog(2021)]{PydiJog2021}
M.~S. Pydi and V.~Jog.
\newblock The many faces of adversarial risk.
\newblock \emph{Neural Information Processing Systems}, 2021.

\bibitem[Robey et~al.(2024)Robey, Latorre, Pappas, Hassani, and Cevher]{RobeyLatorreHassani2024nonZeroSum}
A.~Robey, F.~Latorre, G.~J. Pappas, H.~Hassani, and V.~Cevher.
\newblock Adversarial training should be cast as a non-zero-sum game.
\newblock \emph{ICLR}, 2024.

\bibitem[Shah et~al.(2024)Shah, Chaudhari, and Manwani]{shah2024calibratedlossesadversarialrobust}
V.~Shah, T.~Chaudhari, and N.~Manwani.
\newblock Towards calibrated losses for adversarial robust reject option classification, 2024.

\bibitem[Steinwart(2007)]{Steinwart2007}
I.~Steinwart.
\newblock How to compare different loss functions and their risks.
\newblock \emph{Constructive Approximation}, 2007.

\bibitem[Szegedy et~al.(2013)Szegedy, Zaremba, Sutskever, Bruna, Erhan, Goodfellow, and Fergus]{szegedy2013intriguing}
C.~Szegedy, W.~Zaremba, I.~Sutskever, J.~Bruna, D.~Erhan, I.~Goodfellow, and R.~Fergus.
\newblock Intriguing properties of neural networks.
\newblock \emph{arXiv preprint arXiv:1312.6199}, 2013.

\bibitem[Trillos and Murray(2022)]{trillosMurray2022}
N.~G. Trillos and R.~Murray.
\newblock Adversarial classification: Necessary conditions and geometric flows.
\newblock \emph{arxiv}, 2022.

\bibitem[Trillos et~al.(2022)Trillos, Jacobs, and Kim]{TrillosJacobsKim22}
N.~G. Trillos, M.~Jacobs, and J.~Kim.
\newblock The multimarginal optimal transport formulation of adversarial multiclass classification.
\newblock \emph{arXiv}, 2022.

\bibitem[Trillos et~al.(2023)Trillos, Jacobs, and Kim]{trillos2023existence}
N.~G. Trillos, M.~Jacobs, and J.~Kim.
\newblock On the existence of solutions to adversarial training in multiclass classification, 2023.

\bibitem[Xu et~al.(2022)Xu, Raja, Ramachandra, and Busch]{Xu2022face}
Y.~Xu, K.~Raja, R.~Ramachandra, and C.~Busch.
\newblock \emph{Adversarial Attacks on Face Recognition Systems}, pages 139--161.
\newblock Springer International Publishing, Cham, 2022.

\bibitem[Zhang(2004)]{zhang04}
T.~Zhang.
\newblock Statistical behavior and consistency of classification methods based on convex risk minimization.
\newblock \emph{The Annals of Statistics}, 2004.

\end{thebibliography}
\end{document}